\def\multistep{h}
\def\eqdef{:=}
\def\Regret{\mathrm{Regret}}
\newtheorem{proposition}{Proposition}
\newtheorem{lemma}[proposition]{Lemma}
\newtheorem{theorem}[proposition]{Theorem}
\newtheorem{remark}{Remark}
\newenvironment{proofsketch}{%
  \proof}{\endproof}
\newcommand{\Olog}{\tilde{\mathcal{O}}}
\DeclarePairedDelimiter\br{(}{)}% ( )
\DeclarePairedDelimiter\brs{[}{]}% [ ]
\DeclarePairedDelimiter\brc{\{}{\}}% { }
\DeclarePairedDelimiter\bra{|}{|}% ( )
\DeclarePairedDelimiter\abs{\lvert}{\rvert}% | |
\DeclarePairedDelimiter\norm{\lVert}{\rVert}% || ||
\newcommand{\E}{\mathbb{E}}
\newcommand{\F}{\mathcal{F}}
\newcommand{\ind}{\mathbbm{1}}
\newcommand{\OuterV}{\bar{V}}
\title{Online Planning with Lookahead Policies}
\author{
  Yonathan Efroni \thanks{Part of this work was done during an internship in Facebook AI Research}\ \ \thanks{Microsoft Research, New York, NY} \ \thanks{Technion, Israel}\   
  \And
  Mohammad Ghavamzadeh \thanks{Google Research}
  \And
  Shie Mannor \footnotemark[3] \ \thanks{Nvidia Research}
  }
\begin{document}

\maketitle

\begin{abstract}
Real Time Dynamic Programming (RTDP) is an online algorithm based on Dynamic Programming (DP) that acts by 1-step greedy planning. Unlike DP, RTDP does not require access to the entire state space, i.e., it explicitly handles the exploration. This fact makes RTDP particularly appealing when the state space is large and it is not possible to update all states simultaneously. In this we devise a multi-step greedy RTDP algorithm, which we call $h$-RTDP, that replaces the 1-step greedy policy with a $h$-step lookahead policy. We  analyze $h$-RTDP in its exact form and establish that increasing the lookahead horizon, $h$, results in an improved sample complexity, with the cost of additional computations. This is the first work that proves improved sample complexity as a result of {\em increasing} the lookahead horizon in online planning. We then analyze the performance of $h$-RTDP in three approximate settings: approximate model, approximate value updates, and approximate state representation. For these cases, we prove that the asymptotic performance of $h$-RTDP remains the same as that of a corresponding approximate DP algorithm, the best one can hope for without further assumptions on the approximation errors.
\end{abstract}

\vspace{-0.2cm}

\section{Introduction}
\label{sec:intro}

\vspace{-0.2cm}

Dynamic Programming (DP) algorithms return an optimal policy, given a model of the environment. Their convergence in the presence of lookahead policies~\cite{bertsekas1996neuro,efroni2019combine} and their performance in different approximate settings~\cite{bertsekas1996neuro,munos2007performance,scherrer2012approximate,geist2013algorithmic,abel2017near,efroni2018multiple} have been well-studied. Standard DP algorithms require simultaneous access to the {\em entire} state space at run time, and as such, cannot be used in practice when the number of states is too large. Real Time Dynamic Programming (RTDP)~\cite{barto1995learning,strehl2006pac} is a DP-based algorithm that mitigates the need to access all states simultaneously. Similarly to DP, RTDP updates are based on the Bellman operator, calculated by accessing the model of the environment. However, unlike DP, RTDP learns how to act by interacting with the environment. In each episode, RTDP interacts with the environment, acts according to the greedy action w.r.t.~the Bellman operator, and samples a trajectory. RTDP is, therefore, an online planning algorithm.
% RTDP is especially useful when (i) a model of the environment is given and (ii) the state space cannot be accessed uniformly. 

%It can also be interpreted as a generalization of LRTA$^*$~\cite{korf1990real} to the stochastic setting~\cite{bonet2000planning,bulitko2006learning}.

Despite the popularity and simplicity of RTDP and its extensions~\cite{bonet2000planning,bonet2003labeled,mcmahan2005bounded,bulitko2006learning,strehl2006pac,kolobov2012lrtdp}, precise characterization of its convergence was only recently established for finite-horizon MDPs~\cite{efroni2019tight}.  While lookahead policies in RTDP are expected to improve the convergence in some of these scenarios, as they do for DP~\cite{bertsekas1996neuro,efroni2019combine}, to the best of our knowledge, these questions have not been addressed in previous literature. Moreover, previous research haven't addressed the questions of how lookahead policies should be used in RTDP, nor studied RTDP's sensitivity to possible approximation errors. Such errors can arise due to a misspecified model, or exist in value function updates, when e.g.,~function approximation is used.

    In this paper, we initiate a comprehensive study of lookahead-policy based RTDP with approximation errors in \emph{online planning}. We start by addressing the computational complexity of calculating lookahead policies and study its advantages in approximate settings. Lookahead policies can be computed naively by exhaustive search in  $O(A^\multistep)$ for deterministic environments or $O(A^{S\multistep})$ for stochastic environments. Since such an approach is infeasible, we offer in Section~\ref{sec: episodic complexity} an alternative approach for obtaining a lookahead policy with a computational cost that depends linearly on a natural measure: the total number of states reachable from a state in $\multistep$ time steps. The suggested approach is applicable both in deterministic and stochastic environments.%, and its computational cost is  in the cardinality of the states that are reachable in $\multistep$ time steps from a given state.

In Section~\ref{sec: mutiple step rtdp}, we introduce and analyze $\multistep$-RTDP, a RTDP-based algorithm that replaces the 1-step greedy used in RTDP by a $h$-step lookahead policy. The analysis of $\multistep$-RTDP reveals that the sample complexity is improved by increasing the lookahead horizon $\multistep$. To the best of our knowledge, this is the first theoretical result that relates sample complexity to the lookahead horizon in online planning setting. In Section~\ref{sec: approximate mutiple step rtdp}, we analyze $\multistep$-RTDP in the presence of three types of approximation: when (i) an inexact model is used, instead of the true one, (ii) the value updates contain error, and finally (iii) approximate state abstraction is used. Interestingly, for approximate state abstraction, $\multistep$-RTDP convergence and computational complexity depends on the size of the \emph{abstract state space}.
% Furthermore, for the approximate setting, a corollary of our results is the following possibly intuitive conclusion (see Section~\ref{sec: rtdp vs dp}): the asymptotic performance of approximate $\multistep$-RTDP is similar to that of approximate DP.

% All the proofs rely on a generic recipe for analyzing the convergence of RTDP-like algorithms, inspired by and improving upon the results of~\cite{efroni2019tight}. The results for the following RTDP-like algorithms are based on using variants of two  lemmas, from which a regret bound and a finite sample Uniform-PAC bound are derived. We hope the recipe will be useful in future extensions of RTDP and stimulate further research on this topic.

In a broader context, this work shows that RTDP-like algorithms could be a good alternative to Monte Carlo tree search (MCTS)~\cite{browne2012survey} algorithms, such as upper confidence trees (UCT)~\cite{kocsis2006bandit}, an issue that was empirically investigated in~\cite{kolobov2012lrtdp}. We establish strong convergence guarantees for extensions of $\multistep$-RTDP: under no assumption other than initial optimistic value, RTDP-like algorithms combined with lookahead policies converge in polynomial time to an optimal policy (see Table~\ref{tab:rtdp vs dp}), and their approximations inherit the asymptotic performance of approximate DP (ADP). Unlike RTDP, MCTS acts by using a $\sqrt{\log N/N}$ bonus term instead of optimistic initialization. However, in general, its convergence can be quite poor, even worse than uniformly random sampling~\cite{coquelin2007bandit,munos2014bandits}.  

% when we have access to the model and DP cannot be applied, due to the large size of the state space, the common scenario in which MCTS is used in practice, e.g.,~in the game of Go~\cite{silver2017mastering}. We believe in such scenarios, RTDP-like algorithms can be applied with much stronger theoretical guarantees, as we further establish in this paper. Specifically, RTDP-like algorithms do not act by adding a $\sqrt{\log N/N}$ bonus term.  This comes in contrast to known worst-case examples in which MCTS could perform very poorly, worse than uniform sampling~\cite{munos2014bandits}.  
%%%%%%%%%%%%%%%%%%%%%%%%%%%%%%%%%%%%%%%%%%%%%%%%%%%%%%%%%%%%%%%%%%%%%%%%%%%%%%%%%%%%%%%%%%%%%%%%%%%%%%%%%%%%%%%%%%%
%%%%%%%%%%%%%%%%%%%%%%%%%%%%%%%%%%%%%%%%%%%%%%%%%%%%%%%%%%%%%%%%%%%%%%%%%%%%%%%%%%%%%%%%%%%%%%%%%%%%%%%%%%%%%%%%%%%
%%%%%%%%%%%%%%%%%%%%%%%%%%%%%%%%%%%%%%%%%%%%%%%%%%%%%%%%%%%%%%%%%%%%%%%%%%%%%%%%%%%%%%%%%%%%%%%%%%%%%%%%%%%%%%%%%%%
%%%%%%%%%%%%%%%%%%%%%%%%%%%%%%%%%%%%%%%%%%%%%%%%%%%%%%%%%%%%%%%%%%%%%%%%%%%%%%%%%%%%%%%%%%%%%%%%%%%%%%%%%%%%%%%%%%%
%%%%%%%%%%%%%%%%%%%%%%%%%%%%%%%%%%%%%%%%%%%%%%%%%%%%%%%%%%%%%%%%%%%%%%%%%%%%%%%%%%%%%%%%%%%%%%%%%%%%%%%%%%%%%%%%%%%
\vspace{-0.2cm}

\section{Preliminaries}
\label{sec:prelim}

\vspace{-0.2cm}

{\bf Finite Horizon MDPs. } A finite-horizon MDP~\cite{bertsekas1996neuro} with time-independent dynamics\footnote{The results can also be applied to time-dependent MDPs, however, the notations will be more involved.} is a tuple $\mathcal{M} = \br*{\mathcal{S},\mathcal{A}, r, p, H}$, where $\mathcal{S}$ and $\mathcal{A}$ are the state and action spaces with cardinalities $S$ and $A$, respectively, $r(s,a)\in [0,1]$ is the immediate reward of taking action $a$ at state $s$, and $p(s'|s,a)$ is the probability of transitioning to state $s'$ upon taking action $a$ at state $s$. The initial state in each episode is arbitrarily chosen and  $H\in \mathbb{N}$ is the MDP's horizon. For any $N\in \mathbb{N}$,  denote $[N] \eqdef \brc*{1,\ldots,N}$. %and throughout the paper denote by $t\in\brs*{H}$ and $k\in\brs*{K}$ the time step within an episode and the episode's number, respectively. 

A deterministic policy $\pi: \mathcal{S}\times[H]\rightarrow \mathcal{A}$ is a mapping from states and time step indices to actions. We denote by $a_t \eqdef \pi_t(s)$ the action taken at time $t$ at state $s$ according to a policy $\pi$. The quality of a policy $\pi$ from a state $s$ at time $t$ is measured by its value function, i.e.,~$V_t^\pi(s) \eqdef \E\big[\sum_{t'=t}^H r\br*{s_{t'},\pi_{t'}(s_{t'})}\mid s_t=s\big]$, where the expectation is over all the randomness in the environment. An optimal policy maximizes this value for all states $s\in\mathcal{S}$ and time steps $t\in [H]$, i.e.,~$V_t^*(s) \eqdef \max_{\pi} V_t^\pi(s)$, and satisfies the optimal Bellman equation, 

\vspace{-0.15in}
% \begin{small}
\begin{align}
    V_t^*(s) &= TV_{t+1}^*(s) \eqdef \max_{a}\big(r(s,a) + p(\cdot|s,a) V_{t+1}^*\big) \nonumber \\
    &= \max_{a}\E\big[r(s_1,a) + V_{t+1}^*(s_2) \mid s_1 = s\big].\label{eq:bellman}
\end{align}
% \end{small}
\vspace{-0.15in}

By repeatedly applying the optimal Bellman operator $T$, for any $\multistep\in[H]$, we have

\vspace{-0.15in}
% \begin{small}
\begin{align}
\label{eq:multistep bellman}
    V_t^*(s) = T^{\multistep}V_{t+h}^*(s) &= \max_{a}\big(r(s,a) + p(\cdot|s,a) T^{\multistep-1}V_{t+\multistep}^*\big) \nonumber \\ 
    &= \max_{\pi_t,\ldots,\pi_{t+h-1}} \E\Big[\sum_{t'=1}^\multistep r(s_{t'},\pi_{t+t'-1}(s_{t'})) + V_{t+h}^*(s_{\multistep+1}) \mid s_1 = s\Big].
\end{align}
% \end{small}
\vspace{-0.15in}

%where the expectation is conditioned on $\{\pi_{t+t'-1}\}_{t'\in[\multistep]}$. 
We refer to $T^\multistep$ as the $h$-step optimal Bellman operator. Similar Bellman recursion is defined for the value of a given policy, $\pi$, i.e.,~$V^\pi$, as $V_t^\pi(s) = T^h_\pi V^\pi_{t+h}(s) \eqdef r(s,\pi_t(s)) + p(\cdot|s,\pi_t(s)) T^{h-1}_\pi V_{t+h}^\pi$, where $T^h_\pi$ is the $h$-step Bellman operator of policy $\pi$.

{\bf $\multistep$-Lookahead Policy.} An $\multistep$-lookahead policy w.r.t.~a value function $V\in\mathbb R^{S}$ returns the optimal first action in an $\multistep$-horizon MDP. For a state $s\in\mathcal S$, it returns

\vspace{-0.2in}
% \begin{small}
\begin{align}
\hspace{-0.1in} a_\multistep(s) &\in \arg\max_{a}\big(r(s,a) + p(\cdot|s,a) T^{\multistep-1}V\big)\nonumber \\
&=\arg\max_{\pi_1(s)} \max_{\pi_2,\ldots,\pi_{h}} \E\Big[\sum_{t=1}^{\multistep}r(s_t,\pi_t(s_t)) + V(s_{\multistep+1}) | s_1 = s\Big] 
    \label{eq: lookahead h greedy preliminaries}.
\end{align}
% \end{small}
\vspace{-0.15in}

We can see $V$ represent our `prior-knowledge' of the problem. For example, it is possible to show~\cite{bertsekas1996neuro} that if $V$ is close to $V^*$, then the value of a $h$-lookahead policy w.r.t.~$V$ is close to $V^*$. 

%In Section~\ref{sec: episodic complexity}, we show that the computational cost of obtaining a $\multistep$-lookahead policy is upper-bounded by the total number of states that are reachable in $\multistep$ time steps. 
For a state $s\in\mathcal S$ and a number of time steps $h\in[H]$, we define the set of reachable states from $s$ in $h$ steps as $\mathcal{S}_h(s) = \brc*{s'\mid \exists \pi: p^\pi(s_{\multistep+1}=s' \mid s_1=s,\pi)> 0}$, and denote by $S_h(s)$ its cardinality. %, where $p^\pi(s_\multistep=s'|s_1=s,\pi) = \E[\ind\brc*{s_{\multistep}=s'} \mid s_1=s,\pi]$. 
We define the set of reachable states from $s$ in up to $h$ steps as $\mathcal{S}^{Tot}_{\multistep}(s) \eqdef \cup_{t=1}^h \mathcal{S}_t(s)$, its cardinality as $S^{Tot}_{\multistep}(s) \eqdef \sum_{t=1}^h S_{t}(s)$, and the maximum of this quantity over the entire state space as $S^{Tot}_h = \max_s S^{Tot}_h(s)$. Finally, we denote by $\mathcal{N} \eqdef S^{Tot}_1$ the maximum number of accessible states in $1$-step (neighbors) from any state.
% In  Section~\ref{sec: episodic complexity} we will establish that the computational cost of lookahead policy from state $s$ is $S_h(s)\leq S_h$. Meaning, when $S_h$ is small local search up to horizon $h$ can be done efficiently as the number of reachable states in $h$ time steps from any state is small. 

{\bf Regret and Uniform-PAC. } We consider an agent that repeatedly interacts with an MDP in a sequence of episodes $[K]$. We denote by $s_t^k$ and $a_t^k$, the state and action taken at the time step $t$ of the $k$'th episode. We denote by $\F_{k-1}$, the filtration that includes all the events (states, actions, and rewards) until the end of the $(k-1)$'th episode, as well as the initial state of the $k$'th episode. Throughout the paper, we denote by $\pi_k$ the policy that is executed during the $k$'th episode and assume it is $\mathcal{F}_{k-1}$ measurable. 
% Throughout the paper, the policy that is executed during the $k$'th episode, $\pi_k$, is computed by a multi-step planning operation w.r.t.~the value function at the end of the $(k-1)$'th episode, $\bar{V}^{k-1}$. The details of the multi-step planning operation will be described later. It is important to note that $\pi_k$ is $\F_{k-1}$ measurable; the definition of $\F_{k-1}$ implies that $\bar{V}^{k-1}$ is measurable and since $\pi_{k}$ is a function of the latter, it is $\F_{k-1}$ measurable as well. 
The performance of an agent is measured by its \textit{regret}, defined as $\Regret(K)\eqdef \sum_{k=1}^K \br*{V_1^*(s_1^k) - V_1^{\pi_k}(s_1^k)}$, as well as by the \textit{Uniform-PAC} criterion~\cite{dann2017unifying}, which we generalize to deal with approximate convergence. Let $\epsilon,\delta>0$ and $N_{\epsilon}=\sum_{k=1}^\infty \ind\brc*{V_1^*(s_1^k)-V_1^{\pi_k}(s_1^k)\geq\epsilon}$ be the number of episodes in which the algorithm outputs a policy whose value is $\epsilon$-inferior to the optimal value. An algorithm is called Uniform-PAC, if $\Pr\br*{\exists \epsilon>0: N_\epsilon\geq F(S,1/\epsilon,\log1/\delta,H)}\leq \delta$, where $F(\cdot)$ depends polynomially (at most) on its parameters. Note that Uniform-PAC implies $(\epsilon,\delta)$-PAC, and thus, it is a stronger property. As we analyze algorithms with inherent errors in this paper, we use a more general notion of $\Delta$-Uniform-PAC by defining the random variable $N^{\Delta}_{\epsilon}\!\!=\!\!\sum_{k=1}^\infty \ind\brc*{V_1^*(s_1^k)\!-\!V_1^{\pi_k}(s_1^k)\geq  \Delta \!+\! \epsilon}$, where $\Delta>0$. Finally, we use $\Olog(x)$ to represent $x$ up to constants and poly-logarithmic factors in $\delta$, and $O(x)$ to represent $x$ up to constants.

%%%%%%%%%%%%%%%%%%%%%%%%%%%%%%%%%%%%%%%%%%%%%%%%%%%%%%%%%%%%%%%%%%%%%%%%%%%%%%%%%%%%%%%%%%%%%%%%%%%%%%%%%%%%%%%%%%%
%%%%%%%%%%%%%%%%%%%%%%%%%%%%%%%%%%%%%%%%%%%%%%%%%%%%%%%%%%%%%%%%%%%%%%%%%%%%%%%%%%%%%%%%%%%%%%%%%%%%%%%%%%%%%%%%%%%
%%%%%%%%%%%%%%%%%%%%%%%%%%%%%%%%%%%%%%%%%%%%%%%%%%%%%%%%%%%%%%%%%%%%%%%%%%%%%%%%%%%%%%%%%%%%%%%%%%%%%%%%%%%%%%%%%%%
%%%%%%%%%%%%%%%%%%%%%%%%%%%%%%%%%%%%%%%%%%%%%%%%%%%%%%%%%%%%%%%%%%%%%%%%%%%%%%%%%%%%%%%%%%%%%%%%%%%%%%%%%%%%%%%%%%%
%%%%%%%%%%%%%%%%%%%%%%%%%%%%%%%%%%%%%%%%%%%%%%%%%%%%%%%%%%%%%%%%%%%%%%%%%%%%%%%%%%%%%%%%%%%%%%%%%%%%%%%%%%%%%%%%%%%
\vspace{-0.2cm}

\section{Computing $h$-Lookahead Policies}
\label{sec: episodic complexity}

\vspace{-0.2cm}

Computing an action returned by a $h$-lookahead policy at a certain state is a main component in the RTDP-based algorithms we analyze in Sections~\ref{sec: mutiple step rtdp} and~\ref{sec: approximate mutiple step rtdp}. A `naive' procedure that returns such action is the exhaustive search. Its computational cost is  $O(A^\multistep)$ and $O(A^{S\multistep })$ for deterministic and stochastic systems, respectively. Such an approach is impractical, even for moderate values of $h$ or $S$. 

%Before analyzing the sample complexity of lookahead policies in online planning in Sections~\ref{sec: mutiple step rtdp} and~\ref{sec: approximate mutiple step rtdp}, we first address this question: \emph{how to efficiently calculate an $\multistep$-lookahead policy from a given state?} The cost of computing a lookahead policy from a certain state can be independent from the total number of states. In deterministic and stochastic systems, we can compute this policy using exhaustive search with the computational cost of $O(A^\multistep)$ and $O(A^{S\multistep })$, respectively. Of course, this approach is impractical, even for a moderate values of $h$ or $S$. 

%Unlike in DP, there we wish to obtain an optimal policy for all states, the problem of computing a lookahead policy from a certain state is more mild. Specifically, its computational cost can be independent on number of total states. If the transition model is deterministic, an exhaustive search can satisfy this goal, i.e., its computational cost is independent in number of total states and is $O(A^\multistep)$. Of course, this results in a non-practical approach, even for moderate choices of~$\multistep$. More severely, exhaustive search cannot be applied for stochastic environments, since applying an action from a given state might lead to several next states. 

Instead of the naive approach, we formulate a Forward-Backward DP (FB-DP) algorithm, whose pseudo-code is given in Appendix~\ref{supp: epsiodic complexity h rtdp}. The FB-DP returns an action of an $\multistep$-lookahead policy from a given state $s$. Importantly, in both deterministic and stochastic systems, the computation cost of FB-DP depends linearly on the total \emph{number of reachable states} from $s$ in up to $\multistep$ steps, i.e.,~$S^{Tot}_\multistep(s)$. In the worst case, we may have $S_\multistep(s)=O(\min\br*{A^\multistep,S})$. However, when $S^{Tot}_\multistep(s)$ is small, significant improvement is achieved by avoiding unnecessary repeated computations.

FB-DP has two subroutines. It first constructs the set of reachable states from state $s$ in up to $h$ steps, $\{\mathcal{S}_t(s)\}_{t=1}^h$, in the `forward-pass'. Given this set, in the second `backward-pass' it simply applies backward induction (Eq.~\ref{eq: lookahead h greedy preliminaries}) and returns an action suggested by the $h$-lookahead policy, $a_h(s)$. Note that at each stage $t\in[h]$ of the backward induction (applied on the set $\{\mathcal{S}_t(s)\}_{t=1}^h$) there are $S_t(s)$ states on which the Bellman operator is applied. Since applying  the  Bellman operator costs  $O(\mathcal NA)$ computations, the computational cost of the `backward-pass' is $O\big(\mathcal{N} AS^{Tot}_{h}(s)\big)$.

%
%FB-DP operates in two stages, the forward-pass and backward induction. In the forward pass FB-DP finds the set of accessible states in $\multistep$ time steps from the current state $s$, $\brc*{\mathcal{S}_{t'}(s)}_{t'=1}^{h+1}$. Given this set,  FB-DP simply applies backward induction and outputs an $\multistep$-lookahead policy~\eqref{eq: lookahead h greedy preliminaries}. The second stage has computational cost of backward induction applied to a decision problem with $\multistep$ times-steps, whereas in each time step there are $|\mathcal{S}_{t'}(s)|$ states. From each state in $\mathcal{S}_{t'}(s)$ the algorithm applies the Bellman operator which result in summing $O(\mathcal{N})$ terms. Thus, the total computational cost  of backward induction -- with an access to the set of reachable states in $\multistep$ time steps -- is $O\br*{\mathcal{N} AS^{Tot}_{h}(s)}.$ 
%

% \todomgh{I would move this proposition to the appendix and only report its result, the complexity.}

% \begin{restatable}[Computation Cost of Forward-Pass]{proposition}{fbdpCost}
% \label{proposition: computational complexity of forward pass}
% The Forward-Pass stage of FB-DP can be implemented with the computation cost of $O\big(\mathcal{N}A S^{Tot}_{\multistep}(s)\big)$.
% \end{restatable}

In Appendix~\ref{supp: epsiodic complexity h rtdp}, we describe a DP-based approach to efficiently implement `forward-pass' and analyze its complexity. Specifically, we show the computational cost of the `forward-pass' is equivalent to that of the `backward-pass' (see Propsition~\ref{proposition: computational complexity of forward pass}). Meaning, the computational cost of FB-DP is $O\big(\mathcal{N} AS^{Tot}_{h}(s))$ - same order as the cost of backward induction given the set $\mathcal{S}^{Tot}_h(s)$. 

% Moreover, the computation cost of FB-DP is often much better than that of exhaustive search, since it only depends on the size of reachable states  $\mathcal{S}^{Tot}_h$, and not on the entire size of the state space.

%Unlike the computational cost of exhaustive search $O(A^\multistep)$, FB-DP depends on a natural measure which quantifies the complexity of obtaining an $\multistep$-lookahead policy from a state~$s$: the number of states reachable in $\multistep$ time steps from~$s$. 

% When $S_h$ is small, and thus, $S_1$ is also small by definition, local search up to horizon $h$ can be done efficiently, as the number of reachable states in $h$ time steps is small. Note that in the worst case, $S_h$ can be exponential in $\multistep$, but it is always smaller than $S$.

\vspace{-0.2cm}

\section{Real-Time Dynamic Programming}
\label{sec:RTDP}

\vspace{-0.2cm}

Real-time dynamic programming (RTDP)~\cite{barto1995learning} is a well-known online planning algorithm that assumes access to a transition model and a reward function. Unlike DP algorithms (policy, value iteration, or asynchronous value iteration)~\cite{bertsekas1996neuro} that solve an MDP using offline calculations and sweeps over the entire states (possibly in random order), RTDP solves it in real-time, using samples from the environment (either simulated or real) and DP-style Bellman updates from the current state. Furthermore, unlike DP algorithms, RTDP needs to tradeoff exploration-exploitaion, since it interacts with the environment via sampling trajectories. This makes RTDP a good candidate for problems in which having access to the entire state space is not possible, but interaction is.

% \begin{remark}[Asynchronous DP Versus RTPD.] 

% \end{remark}

%RTDP for finite-horizon MDPs is depicted in Algorithm~\ref{algo: RTDP}. 
Algorithm~\ref{algo: RTDP} contains the pseudo-code of RTDP in finite-horizon MDPs. The value is initialized optimistically, ~$\bar{V}^0_{t+1}(s)=H-t\geq V^*_{t+1}(s)$. At each time step $t\in[H]$ and episode $k\in[K]$, the agent updates the value of the current state $s_t^k$ by the optimal Bellman operator. It then acts greedily w.r.t.~the current value at the next time step $\bar{V}^{k-1}_{t+1}$. Finally, the next state, $s_{t+1}^k$, is sampled either from the model or the real-world. When the model is exact, there is no difference in sampling from the model and real-world, but these are different in case the model is inexact as in Section~\ref{sec: appr model}.

%  Since RTDP is an online algorithm, i.e., it updates its value estimates through interactions with the environment, it is natural to measure its performance in terms of the regret. The rest of this section is devoted to supplying expected and high-probability bounds on the regret of RTDP, which will also lead to PAC bounds for this algorithm. 

The following high probability bound on the regret of a Decreasing Bounded Process (DBP), proved in~\cite{efroni2019tight}, plays a key role in our analysis of exact and approximate RTDP with lookahead policies in Sections~\ref{sec: mutiple step rtdp} and~\ref{sec: approximate mutiple step rtdp}. An adapted process ~$\brc*{X_k,\F_k}_{k\geq 0}$ is a DBP, if for all $k\geq 0$, {\bf (i)} $X_k\leq X_{k-1}$ almost surely (a.s.), {\bf (ii)} $X_k\geq C_2$, and {\bf (iii)} $X_0=C_1\geq C_2$. Interestingly, contrary to the standard regret bounds (e.g.,~ in bandits), this bound does not depend on the number of rounds $K$.

% \todomgh{Fix the reference to out NeurIPS-19 paper that they do not say it has not been published.}

%Central to the analysis of RTDP in \cite{efroni2019tight} is the following high probability bound on the regret of a Decreasing Bounded Process (DBP). An adapted process ~$\brc*{X_k,\F_k}_{k\geq 0}$ is a DBP, if for all $k\geq 0$, (i) $X_k\leq X_{k-1}$ almost surely (a.s.), (ii) $X_k\geq C_2$, and~(iii) $X_0=C_1\geq C_2$.
% \todoy{Say a word how to adjust the proof to $C_2$ instead of $C_2=0$?}

\begin{theorem}[Regret Bound of a DBP \cite{efroni2019tight}]
\label{theorem: regret of decreasing bounded process}
Let $\brc*{X_k,\F_k}_{k\geq 0}$ be a DBP and $R_K = \sum_{k=1}^K X_{k-1} - \E[X_{k}\mid \F_{k-1}]$ be its $K$-round regret. Then, $$\Pr\brc*{\exists K>0: R_K \geq 9(C_1-C_2)\ln(3/\delta)} \le \delta.$$
\end{theorem}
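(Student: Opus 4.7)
The plan is to exploit two structural facts. By property (i), the differences $D_k \eqdef X_{k-1} - X_k$ are non-negative almost surely, and by properties (ii)--(iii) they telescope to $\sum_{k=1}^K D_k = X_0 - X_K \leq C_1 - C_2$ for every $K$. On the other hand, since $X_{k-1}$ is $\F_{k-1}$-measurable, the regret rewrites as $R_K = \sum_{k=1}^K \E[D_k \mid \F_{k-1}]$. So the task reduces to showing that an almost-surely bounded sum of non-negative random variables controls the sum of its conditional means, and crucially, the $K$-independence of the final bound will come from the a.s.\ cap $\sum_k D_k \leq C_1 - C_2$.

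My approach is the standard exponential-supermartingale route, tailored to this one-sided setting. Set $B \eqdef C_1 - C_2$ and fix $\lambda > 0$. Since $D_k \in [0, B]$ and $x \mapsto e^{-\lambda x}$ is convex, the chord inequality gives
\begin{equation*}
e^{-\lambda D_k} \leq 1 - \frac{1 - e^{-\lambda B}}{B}\, D_k,
\end{equation*}
so that, using $1 - y \leq e^{-y}$ and taking conditional expectations,
\begin{equation*}
\E\!\brs*{e^{-\lambda D_k} \mid \F_{k-1}} \leq \exp\!\br*{-\alpha\, \E[D_k \mid \F_{k-1}]}, \qquad \alpha \eqdef \frac{1 - e^{-\lambda B}}{B}.
\end{equation*}
Chaining these bounds via the tower property yields
\begin{equation*}
\E\!\brs*{\exp\!\br*{\alpha R_K - \lambda \textstyle\sum_{k=1}^K D_k}} \leq 1,
\end{equation*}
and the a.s.\ bound $\sum_k D_k \leq B$ upgrades this to $\E[\exp(\alpha R_K)] \leq e^{\lambda B}$ for every $K$. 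A Markov bound then gives $\Pr(R_K \geq x) \leq \exp(\lambda B - \alpha x)$, and choosing $\lambda B$ of constant order produces a tail of the form $R_K = O\br*{B \log(1/\delta)}$ with probability at least $1-\delta$; the specific coefficient $9$ and the $\log(3/\delta)$ in the statement follow by fixing a concrete $\lambda$ with a bit of slack.

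To handle the ``$\exists K > 0$'' quantifier I would use that $R_K$ is non-decreasing in $K$, since each summand $\E[D_k \mid \F_{k-1}]$ is non-negative, so $\exp(\alpha R_K)$ is monotone in $K$. By monotone convergence $\E[\exp(\alpha R_\infty)] \leq e^{\lambda B}$, and a single Markov tail bound on $R_\infty$ simultaneously controls every $R_K$. The main obstacle I anticipate is not the shape of the argument, which is quite rigid once one notices the telescoping cap, but nailing the precise numerical constants $9$ and $3$; the qualitative content, a $K$-free concentration inequality, hinges solely on the almost-sure bound $\sum_k D_k \leq C_1 - C_2$ afforded by the DBP structure.
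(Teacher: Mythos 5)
The paper does not actually prove Theorem~\ref{theorem: regret of decreasing bounded process}; it imports the statement verbatim from the cited prior work, so there is no in-paper argument to compare yours against. Judged on its own, your proof is correct and self-contained. The decomposition $R_K=\sum_{k=1}^K\E[D_k\mid\F_{k-1}]$ with $D_k=X_{k-1}-X_k\in[0,C_1-C_2]$ and the telescoping cap $\sum_k D_k\le C_1-C_2$ are exactly the right structural facts; the chord bound $e^{-\lambda D_k}\le 1-\alpha D_k$ with $\alpha=(1-e^{-\lambda B})/B$ followed by $1-y\le e^{-y}$ makes $\prod_k\exp\br*{\alpha\E[D_k\mid\F_{k-1}]-\lambda D_k}$ a supermartingale, and the a.s.\ cap then yields $\E[\exp(\alpha R_K)]\le e^{\lambda B}$ uniformly in $K$. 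Your handling of the $\exists K$ quantifier is also sound: $\E[D_k\mid\F_{k-1}]\ge 0$ a.s.\ makes $R_K$ monotone, so monotone convergence plus a single Markov bound on $R_\infty$ covers all $K$ at once (note $\brc*{\exists K: R_K\ge x}\subseteq\brc*{R_\infty\ge x}$, which is the direction you need). The only loose end is the constant, and it closes easily: taking $\lambda B=1$ gives the threshold $\frac{B(1+\ln(1/\delta))}{1-e^{-1}}\le\frac{B\ln(3/\delta)}{1-e^{-1}}\approx 1.59\,B\ln(3/\delta)$ (using $\ln 3>1$), which is strictly sharper than the stated $9B\ln(3/\delta)$. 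So your argument not only establishes the theorem but improves its constant.
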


\vspace{-0.2cm}

\section{RTDP with Lookahead Policies} 
\label{sec: mutiple step rtdp}

\vspace{-0.2cm}

% \begin{figure}[t]
% \centering
% %\def\svgwidth{6cm}
% %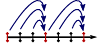
% \includegraphics[width=0.4\textwidth]{lookahead-pic.pdf}
%   \caption{Varying lookahead horizon of an $\multistep$-greedy policy in $\multistep$-RTDP (see Eq.~\ref{eq: h greedy policy}) with $\multistep=3$ and $H=6$. The blue arrows show the lookahead horizon from a specific time step $t$, and the red bars are the time steps for which a value is stored, i.e.,~$t\in \brc*{1\;,\;h+1=4\;,\;2h+1=H+1=7}$.}
%   \label{fig:h greedy policy}
% \end{figure}

In this section, we devise and analyze a lookahead-based RTDP algorithm, called $h$-RTDP, whose pseudo-code is shown in Algorithm~\ref{algo: multi step RTDP}. Without loss of generality, we assume that $H/\multistep\in \mathbbm{N}$. We divide the horizon $H$ into $H/\multistep$ intervals, each of length $h$ time steps. $h$-RTDP stores $HS/\multistep$ values in the memory, i.e.,~the values at time steps  $\mathcal H=\{1,\multistep+1,\ldots,H+1\}$.\footnote{In fact, $\multistep$-RTDP does not need to store $V_1$ and $V_{H+1}$, they are only used in the analysis.} %Given these values, $\multistep$-RTDP interacts with the environment in an episodic manner. 
For each time step $t\in[H]$, we denote by $h_c\in\mathcal H$, the next time step for which a value is stored in the memory, and by $t_c=h_c-t$, the number of time steps until there (see Figure~\ref{fig:h greedy policy}). At each time step $t$ of an episode $k\in[K]$, given the current state $s_t^k$, $\multistep$-RTDP selects an action $a_t^k$ returned by the $t_c$-lookahead policy w.r.t.~$\bar {V}_{h_c}^{k-1}$, 

\vspace{-0.15in}
\begin{equation}
    a_t^k = a_{t_c}(s_t^k)\in \arg\max_{\pi_1(s_t^k)} \max_{\pi_2,\ldots,\pi_{t_c}}\E\Big[\sum_{t'=1}^{t_c}r(s_{t'},\pi_{t'}(s_{t'})) + \bar{V}^{k-1}_{h_c}(s_{t_c+1}) \mid s_1=s_t^k\Big]. 
\label{eq: h greedy policy}
\end{equation}
\vspace{-0.1in}

\begin{wrapfigure}{r}{0.47\textwidth}
\vspace{-0.15in}
\centering
\def\svgwidth{5cm}
%% Creator: Inkscape inkscape 0.92.3, www.inkscape.org
%% PDF/EPS/PS + LaTeX output extension by Johan Engelen, 2010
%% Accompanies image file 'hgreedy.pdf' (pdf, eps, ps)
%%
%% To include the image in your LaTeX document, write
%%   \input{<filename>.pdf_tex}
%%  instead of
%%   \includegraphics{<filename>.pdf}
%% To scale the image, write
%%   \def\svgwidth{<desired width>}
%%   \input{<filename>.pdf_tex}
%%  instead of
%%   \includegraphics[width=<desired width>]{<filename>.pdf}
%%
%% Images with a different path to the parent latex file can
%% be accessed with the `import' package (which may need to be
%% installed) using
%%   \usepackage{import}
%% in the preamble, and then including the image with
%%   \import{<path to file>}{<filename>.pdf_tex}
%% Alternatively, one can specify
%%   \graphicspath{{<path to file>/}}
%% 
%% For more information, please see info/svg-inkscape on CTAN:
%%   http://tug.ctan.org/tex-archive/info/svg-inkscape
%%
\begingroup%
  \makeatletter%
  \providecommand\color[2][]{%
    \errmessage{(Inkscape) Color is used for the text in Inkscape, but the package 'color.sty' is not loaded}%
    \renewcommand\color[2][]{}%
  }%
  \providecommand\transparent[1]{%
    \errmessage{(Inkscape) Transparency is used (non-zero) for the text in Inkscape, but the package 'transparent.sty' is not loaded}%
    \renewcommand\transparent[1]{}%
  }%
  \providecommand\rotatebox[2]{#2}%
  \newcommand*\fsize{\dimexpr\f@size pt\relax}%
  \newcommand*\lineheight[1]{\fontsize{\fsize}{#1\fsize}\selectfont}%
  \ifx\svgwidth\undefined%
    \setlength{\unitlength}{27.96714182bp}%
    \ifx\svgscale\undefined%
      \relax%
    \else%
      \setlength{\unitlength}{\unitlength * \real{\svgscale}}%
    \fi%
  \else%
    \setlength{\unitlength}{\svgwidth}%
  \fi%
  \global\let\svgwidth\undefined%
  \global\let\svgscale\undefined%
  \makeatother%
  \begin{picture}(1,0.53648228)%
    \lineheight{1}%
    \setlength\tabcolsep{0pt}%
    \put(0,0){\includegraphics[width=\unitlength,page=1]{hgreedy.pdf}}%
    \put(0.7690776,0.01894445){\color[rgb]{0,0,0}\makebox(0,0)[lt]{\lineheight{1.25}\smash{\begin{tabular}[t]{l}{$H\!+\!1\!=\!7$}\end{tabular}}}}%
    \put(0.02476583,0.01894443){\color[rgb]{0,0,0}\makebox(0,0)[lt]{\lineheight{1.25}\smash{\begin{tabular}[t]{l}{$t\!=\!1$}\end{tabular}}}}%
    \put(0.33775495,0.01894443){\color[rgb]{0,0,0}\makebox(0,0)[lt]{\lineheight{1.25}\smash{\begin{tabular}[t]{l}{$h\!+\!1\!=\!4$}\end{tabular}}}}%
    \put(0.90917855,0.21597288){\color[rgb]{0,0,0}\makebox(0,0)[lt]{\lineheight{1.25}\smash{\begin{tabular}[t]{l}{\Large $t$}\end{tabular}}}}%
  \end{picture}%
\endgroup%

%\vspace{-0.125in}
\caption{\begin{small}Varying lookahead horizon of a $\multistep$-greedy policy in $\multistep$-RTDP (see Eq.~\ref{eq: h greedy policy}) with $\multistep=3$ and $H=6$. The blue arrows show the lookahead horizon from a specific time step $t$, and the red bars are the time steps for which a value is stored in memory, i.e.,~${\mathcal H=\{1\;,\;h+1=4\;,\;2h+1=H+1=7\}}$.\end{small}}
\label{fig:h greedy policy}
\end{wrapfigure}

%where $h_c =\multistep \ceil{\frac{t}{ \multistep}} + 1, t_c = h_c - t$. 
%where $h_c$ is the next time step in which a value is stored in the memory and $t_c=h_c-t$ is the number of time steps until $h_c$ (see Figure~\ref{fig:h greedy policy}). 
Thus, $h$-RTDP uses a varying lookahead horizon $t_c$ that depends on how far the current time step is to the next one for which a value is stored. Throughout the paper, with an abuse of notation, we refer to this policy as a $\multistep$-lookahead policy. Finally, it can be seen that $h$-RTDP generalizes RTDP as they are equal for $h=1$. 

We are now ready to establish finite-sample performance guarantees for $h$-RTDP; see Appendix~\ref{supp: multistep rtdp} for the detailed proofs. We start with two lemmas from which we derive the main convergence result of this section.

\begin{restatable}{lemma}{multistepRtdpProperties}
\label{lemma:multistep rtdp properties}
For all $s\in \mathcal{S}$, $n\in \{0\}\cup[\frac{H}{\multistep}]$, and ${k\in [K]}$, the value function of $h$-RTDP is (i) Optimistic: $V^*_{n\multistep+1}(s) \leq \bar{V}^k_{n\multistep+1}(s)$, and (ii) Non-Increasing: $\bar{V}^{k}_{n\multistep+1}(s) \leq \bar{V}^{k-1}_{n\multistep+1}(s)$.
\end{restatable}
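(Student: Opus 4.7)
The plan is to prove (i) and (ii) jointly by induction on the episode index $k$, while carrying along an auxiliary monotonicity invariant
\[
(B_k): \quad T^\multistep \bar V^k_{(n+1)\multistep+1}(s) \le \bar V^k_{n\multistep+1}(s) \qquad \forall\, n \in \{0,\ldots,H/\multistep-1\},\ s \in \mathcal S,
\]
which is the structural property that forces Bellman-style updates to be non-increasing.

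Before starting the induction I would write the per-episode update of $\multistep$-RTDP in compact form. Since the action played at time $n\multistep+1$ in episode $k$ is the first move of an $\multistep$-lookahead policy w.r.t.\ $\bar V^{k-1}_{(n+1)\multistep+1}$, the natural update of the stored value at that checkpoint is
\[
\bar V^k_{n\multistep+1}(s) \;=\; \begin{cases} T^\multistep\, \bar V^{k-1}_{(n+1)\multistep+1}(s), & s = s^k_{n\multistep+1}, \\ \bar V^{k-1}_{n\multistep+1}(s), & \text{otherwise,} \end{cases}
\]
with $\bar V^k_{H+1} \equiv 0$. Because the checkpoints are visited in increasing order of $t$, every right-hand side above refers only to $\bar V^{k-1}$, so updates at different $n$ are decoupled.

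For the base case $k=0$, the optimistic initialization $\bar V^0_{n\multistep+1}\equiv H-n\multistep$ yields (i) because rewards in $[0,1]$ imply $V^*_{n\multistep+1}(s)\le H-n\multistep$, and it yields $(B_0)$ from the elementary bound $T^\multistep c \le c + \multistep$ applied to the constant $c = H-(n+1)\multistep$. For the inductive step I would establish, in this order, (i), then (ii), then $(B_k)$. For (i), monotonicity of $T^\multistep$ together with the Bellman identity $T^\multistep V^*_{(n+1)\multistep+1} = V^*_{n\multistep+1}$ gives, on the visited state, $\bar V^k_{n\multistep+1}(s) = T^\multistep \bar V^{k-1}_{(n+1)\multistep+1}(s) \ge T^\multistep V^*_{(n+1)\multistep+1}(s) = V^*_{n\multistep+1}(s)$, while on an unvisited state the value is unchanged. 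For (ii), the visited case is exactly $(B_{k-1})$ at level $n$ evaluated at $s=s^k_{n\multistep+1}$, and the unvisited case is trivial. Finally $(B_k)$ follows from $(B_{k-1})$, monotonicity of $T^\multistep$, and the non-increasing statement (ii) just established at level $(n+1)\multistep+1$, by splitting on whether $s = s^k_{n\multistep+1}$.

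The main subtlety I would flag is the interleaving between the non-increasing property (ii) and the invariant $(B)$: one cannot derive (ii) at episode $k$ without $(B_{k-1})$, and one cannot push $(B_k)$ through the induction without already knowing (ii) at episode $k$. Carrying both conclusions in a single induction, in the order (i) $\to$ (ii) $\to$ $(B_k)$, is what resolves this apparent circularity; everything else reduces to monotonicity of the one-step optimal Bellman operator, and hence of $T^\multistep$.
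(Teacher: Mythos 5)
Your proof is correct. Part (i) is essentially identical to the paper's argument: induction on $k$, optimistic initialization for the base case, and monotonicity of $T^\multistep$ plus the Bellman identity $T^\multistep V^*_{(n+1)\multistep+1}=V^*_{n\multistep+1}$ for the step. For part (ii), however, you take a genuinely different route. The paper handles the inequality $T^\multistep \bar V^{k-1}_{t+\multistep}(s^k_t)\le \bar V^{k-1}_{t}(s^k_t)$ by locating the last episode $\bar k$ in which $s^k_t$ was updated, writing $\bar V^{k-1}_{t}(s^k_t)=T^\multistep \bar V^{\bar k-1}_{t+\multistep}(s^k_t)$ (or falling back to the initialization if no such $\bar k$ exists), and then chaining the inductive hypothesis $\bar V^{\bar k-1}_{t+\multistep}\ge\cdots\ge \bar V^{k-1}_{t+\multistep}$ through the monotone operator. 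You instead carry the global invariant $(B_k)$, asserting that every stored checkpoint value dominates the $\multistep$-step backup of the next checkpoint's stored value; (ii) at the visited state then drops out of $(B_{k-1})$ in one line, and $(B_k)$ is restored from $(B_{k-1})$, monotonicity, and the freshly proved (ii) at level $n+1$, with a clean split on whether the state was updated. I verified both cases of your $(B_k)$ step and the base case $T^\multistep c\le c+\multistep$; they go through. The trade-off is that your invariant avoids the per-state ``last update time'' bookkeeping and the separate treatment of never-updated states, at the cost of tracking one extra family of inequalities through the induction; the paper's version needs no auxiliary invariant but must argue twice (previously-updated versus never-updated states). Your explicit flagging of the ordering (i) $\to$ (ii) $\to$ $(B_k)$ correctly resolves the apparent circularity.
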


%  \begin{center}
% \begin{minipage}[t]{0.8\linewidth}
% \begin{algorithm}[H]
% \begin{algorithmic}[1]
% \caption{RTDP with Lookahead Policies ($h$-RTDP)}
% \label{algo: multi step RTDP}
%     \STATE init: $\forall s\in \mathcal S,\;  n\in \{0\}\cup[\frac{H}{\multistep}],\; \bar{V}^0_{n \multistep +1}(s)=H-n\multistep$
%     \FOR{$k\in [K]$}
%         \STATE Initialize $s^k_1$
%         \FOR{$t\in [H]$}
% %            \STATE $t_{\multistep} = \multistep \ceil{\frac{t}{ \multistep}} + 1$
% %            \STATE $a_t^k\in \arg\max_a r(s_t^k,a)+ p(\cdot | s_t^k,a)^T
%             \IF{$(t-1) \mod \multistep == 0$}
%                 \STATE $h_c = t + h\;;$ $\qquad\bar{V}^{k}_{t}(s_t^k) = T^{\multistep}\bar{V}^{k-1}_{h_c}(s_t^k)\;;$
%             \ENDIF
%             \STATE $a_t^k\in \arg\max_a r(s_t^k,a)+ p(\cdot|s_t^k,a) T^{h_c-t-1}\bar{V}^{k-1}_{h_c}\;;$
%             \STATE Act with $a_t^k$ and observe $s_{t+1}^k\sim p(\cdot \mid s_t^k,a_t^k)$ 
%         \ENDFOR
%     \ENDFOR
% \end{algorithmic}
% \end{algorithm}
% \end{minipage}
% \end{center}

\vspace{-0.2cm}

\begin{minipage}[t]{0.49\linewidth}
\footnotesize
\begin{algorithm}[H]
\begin{algorithmic}
\caption{ \footnotesize
 Real-Time DP (RTDP)}
\label{algo: RTDP}
    \STATE {\bf init:} $\forall s\in \mathcal S,\; \forall t\in \{0\}\cup[H],$ 
    \STATE \quad  \quad$\bar{V}^0_{t+1}(s)=H-t$
    \FOR{$k\in [K]$}
        \STATE Initialize $s^k_1$ arbitrarily
        \FOR{$t\in [H]$}
            \STATE $\bar{V}^{k}_{t}(s_t^k) =  T \bar{V}^{k-1}_{t+1}(s_t^k)$
            \STATE $a_t^k\in \arg\max_a  r(s_t^k,a)+ p(\cdot|s_t^k,a) \bar{V}^{k-1}_{t+1}$
            \STATE Act by $\;a_t^k\;$, observe $\;s_{t+1}^k\sim p(\cdot \mid s_t^k,a_t^k)$ 
        \ENDFOR
    \ENDFOR
\end{algorithmic}
\vspace*{1.605 cm}
\end{algorithm}
\end{minipage}
\hspace{3pt}
\begin{minipage}[t]{0.49\linewidth}
\footnotesize
\begin{algorithm}[H]
\begin{algorithmic}
\caption{\footnotesize
 RTDP with Lookahead ($h$-RTDP)}
\label{algo: multi step RTDP}
    \STATE {\bf init:}: $\forall s\in \mathcal S,\;  n\in \{0\}\cup[\frac{H}{\multistep}],$
    \STATE \quad \quad$\bar{V}^0_{n \multistep +1}(s)=H-n\multistep$
    \FOR{$k\in [K]$}
        \STATE Initialize $s^k_1$ arbitrarily
        \FOR{$t\in [H]$}
%            \STATE $t_{\multistep} = \multistep \ceil{\frac{t}{ \multistep}} + 1$
%            \STATE $a_t^k\in \arg\max_a r(s_t^k,a)+ p(\cdot | s_t^k,a)^T
            \IF{$(t-1) \mod \multistep = 0$}
                \STATE $h_c = t + h$
                \STATE $\bar{V}^{k}_{t}(s_t^k) = T^{\multistep}\bar{V}^{k-1}_{h_c}(s_t^k)$
            \ENDIF
            \STATE $a_t^k\in$
            \STATE ${\arg\max_a r(s_t^k,a)+ p(\cdot|s_t^k,a) T^{h_c-t-1}\bar{V}^{k-1}_{h_c}}$
            \STATE Act by $\;a_t^k\;$, observe $\;s_{t+1}^k\sim p(\cdot \mid s_t^k,a_t^k)$  
        \ENDFOR
    \ENDFOR
\end{algorithmic}
\end{algorithm}
\end{minipage}
\vspace{0.2cm}

\begin{restatable}[Optimality Gap and Expected Decrease]{lemma}{MultistepRdtpExpectedValueUpdate}
\label{lemma: multistep RTDP expected value difference}
The expected cumulative value update at the $k$'th episode of $h$-RTDP satisfies $\bar{V}_1^{k}(s^k_1)-V_1^{\pi_k}(s^k_1) = \sum_{n=1}^{\frac{H}{\multistep}-1}\sum_{s\in\mathcal{S}} \bar{V}^{k-1}_{n\multistep+1}(s) - \E[\bar{V}^{k}_{n\multistep+1}(s)\mid \F_{k-1}]$.
%
% \vspace{-0.2in}
% \begin{small}
% \begin{align*}
% \bar{V}_1^{k}(s^k_1)-V_1^{\pi_k}(s^k_1) = \sum_{n=1}^{\frac{H}{\multistep}-1}\sum_{s\in\mathcal{S}} \bar{V}^{k-1}_{n\multistep+1}(s) - \E[\bar{V}^{k}_{n\multistep+1}(s)\mid \F_{k-1}].
% \end{align*}
% \end{small}
% \vspace{-0.15in}
\end{restatable}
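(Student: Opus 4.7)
The plan is to expand both sides as trajectory expectations under $\pi_k$ starting at $s^k_1$, and then reconcile them via telescoping. For the left-hand side, observe that by construction $\pi_k$ attains the maximum in the definition of $T^{\multistep}\bar V^{k-1}_{\multistep+1}(s^k_1)$ on the first interval $[1,\multistep]$. Combined with the update rule $\bar V^k_1(s^k_1) = T^{\multistep}\bar V^{k-1}_{\multistep+1}(s^k_1)$, this yields $\bar V^k_1(s^k_1) = \E^{\pi_k}\!\brs*{\sum_{t=1}^{\multistep} r(s_t,a_t) + \bar V^{k-1}_{\multistep+1}(s_{\multistep+1}) \mid s_1 = s^k_1}$. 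Subtracting $V^{\pi_k}_1(s^k_1) = \E^{\pi_k}\!\brs*{\sum_{t=1}^{H} r(s_t,a_t) \mid s_1 = s^k_1}$ and cancelling the first $\multistep$ rewards gives the target identity
\begin{equation*}
\bar V^k_1(s^k_1) - V^{\pi_k}_1(s^k_1) = \E^{\pi_k}\!\brs*{\bar V^{k-1}_{\multistep+1}(s_{\multistep+1}) - \sum_{t=\multistep+1}^{H} r(s_t,a_t) \mid s_1 = s^k_1}.
\end{equation*}

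For a single summand on the right-hand side, fix $n \in \{1,\ldots,H/\multistep-1\}$. At stage $n\multistep+1$ of episode $k$, $\multistep$-RTDP updates only the single state $s^k_{n\multistep+1}$, and $\bar V^{k-1}_{n\multistep+1}$ is $\F_{k-1}$-measurable, so the inner $s$-sum collapses to $\E\!\brs*{\bar V^{k-1}_{n\multistep+1}(s^k_{n\multistep+1}) - T^{\multistep}\bar V^{k-1}_{(n+1)\multistep+1}(s^k_{n\multistep+1}) \mid \F_{k-1}}$. Because $s^k_1$ and $\pi_k$ are $\F_{k-1}$-measurable, the conditional law of $s^k_{n\multistep+1}$ coincides with that of $s_{n\multistep+1}$ along a $\pi_k$-trajectory from $s^k_1$. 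Applying the tower property, together with the fact that $\pi_k$ on $[n\multistep+1,(n+1)\multistep]$ attains the max defining $T^{\multistep}\bar V^{k-1}_{(n+1)\multistep+1}(\cdot)$, the $n$-th summand rewrites as
\begin{equation*}
\E^{\pi_k}\!\brs*{\bar V^{k-1}_{n\multistep+1}(s_{n\multistep+1}) - \bar V^{k-1}_{(n+1)\multistep+1}(s_{(n+1)\multistep+1}) - \sum_{t=n\multistep+1}^{(n+1)\multistep} r(s_t,a_t) \mid s_1 = s^k_1}.
\end{equation*}

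Summing over $n = 1,\ldots,H/\multistep-1$, the value terms telescope to $\bar V^{k-1}_{\multistep+1}(s_{\multistep+1}) - \bar V^{k-1}_{H+1}(s_{H+1})$ and the reward sums glue into $\sum_{t=\multistep+1}^{H} r(s_t,a_t)$. Since the update loop in Algorithm~\ref{algo: multi step RTDP} fires only for $t \in \{1,\multistep+1,\ldots,H-\multistep+1\}$, the slot $\bar V^{k-1}_{H+1}(\cdot)$ is never written and remains at its initialization $H-H=0$, so the telescoped sum reproduces exactly the target identity derived in the first paragraph. The only subtle bookkeeping step is the passage from $\E[\,\cdot \mid \F_{k-1}]$ at the single visited state $s^k_{n\multistep+1}$ to a trajectory expectation under $\pi_k$ from $s^k_1$; once that measurability is made explicit, the rest is the telescoping identity plus the vanishing boundary value $\bar V_{H+1} \equiv 0$.
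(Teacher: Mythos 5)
Your proof is correct and follows essentially the same route as the paper's: both rest on the facts that $\pi_k$ attains the $h$-step lookahead maximum on each block so the update equals an on-trajectory conditional expectation, that the rewards/values telescope across the stored time steps with $\bar V_{H+1}\equiv 0$, and that the sum over all states collapses to the single visited state because nothing else is updated (the paper packages this last step as its Lemma~\ref{lemma: On trajectory regret to Uniform regret}, which you carry out inline). The only difference is organizational — you work from the right-hand side toward the left, while the paper derives the on-trajectory identity first and then converts to the uniform sum.
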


Properties (i) and (ii) in Lemma~\ref{lemma:multistep rtdp properties} show that $\{\bar V^k_{n\multistep + 1}(s)\}_{k\geq 0}$ is a DBP, for any $s$ and $n$. Lemma~\ref{lemma: multistep RTDP expected value difference} relates $\bar{V}_1^{k}(s^k_1)-V_1^{\pi_k}(s^k_1)$ (LHS) to the expected decrease in $\bar V^{k}$ at the $k$'th episode (RHS). When the LHS is small, then $\bar V_1^{k}(s_1^k)\simeq  V^*_1(s_1^k)$, due to the optimism of $\bar V_1^{k}$, and $\multistep$-RTDP is about to converge to the optimal value. This is why we refer to the LHS as the {\em optimality gap}. Using these two lemmas and the regret bound of a DBP (Theorem~\ref{theorem: regret of decreasing bounded process}), we prove a finite-sample convergence result for $h$-RTDP (see Appendix~\ref{supp: multistep rtdp} for the full proof).

\begin{restatable}[Performance of $\multistep$-RTDP]{theorem}{TheoremRegretMultistepRTDP}
\label{theorem: regret multistep rtdp}
Let $\epsilon,\delta>0$. The following holds for $h$-RTDP:

$\;\;$ 1. With probability $1-\delta$, for all $K>0$, $\;\;\Regret(K) \leq  \frac{9SH(H-\multistep)}{\multistep}\ln(3/\delta)$.

$\;\;$ 2. %Let $N_\epsilon$ be the number of episodes in which $h$-RTDP outputs a policy with ${V^*_1(s_1^k)-V^{\pi_k}_1(s_1^k)>\epsilon}$ . Then, we have
$\;\;\Pr\brc*{\exists \epsilon>0 \; : \; N_\epsilon \geq \frac{9SH(H-\multistep)\ln(3/\delta)}{\multistep\epsilon}}\leq \delta$.
\end{restatable}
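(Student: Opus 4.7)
The plan is to combine the three ingredients just developed: optimism plus monotonicity from Lemma~\ref{lemma:multistep rtdp properties}, the telescoping identity from Lemma~\ref{lemma: multistep RTDP expected value difference}, and the high-probability regret bound for Decreasing Bounded Processes (Theorem~\ref{theorem: regret of decreasing bounded process}). First, by optimism, $V_1^*(s_1^k)\leq \bar V_1^k(s_1^k)$ for every $k$, so $\Regret(K)\leq \sum_{k=1}^K \bigl(\bar V_1^k(s_1^k)-V_1^{\pi_k}(s_1^k)\bigr)$. Then, using Lemma~\ref{lemma: multistep RTDP expected value difference} and swapping the order of summation,
\begin{align*}
\Regret(K)\;\leq\; \sum_{k=1}^{K}\Bigl(X_{k-1}-\E[X_k\mid\F_{k-1}]\Bigr),\qquad X_k \eqdef \sum_{n=1}^{H/\multistep-1}\sum_{s\in\mathcal S}\bar V^k_{n\multistep+1}(s).
\end{align*}
So the entire regret is controlled by the cumulative one-step expected decrease of a single scalar process $X_k$.

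The next step is to verify that $\{X_k,\F_k\}_{k\geq 0}$ is a DBP so that Theorem~\ref{theorem: regret of decreasing bounded process} applies directly. Monotonicity $X_k\leq X_{k-1}$ follows termwise from Lemma~\ref{lemma:multistep rtdp properties}(ii), since $\bar V^k_{n\multistep+1}(s)\leq\bar V^{k-1}_{n\multistep+1}(s)$ almost surely. For the range, the optimistic initialization gives $\bar V^0_{n\multistep+1}(s)=H-n\multistep\leq H$, so $X_0\leq S\sum_{n=1}^{H/h-1}H = SH(H-\multistep)/\multistep\eqdef C_1$, while optimism plus $V^*\geq 0$ gives $X_k\geq 0\eqdef C_2$. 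Applying Theorem~\ref{theorem: regret of decreasing bounded process} with these constants produces
\begin{align*}
\Pr\!\Bigl\{\exists K>0:\; \Regret(K)\geq \tfrac{9SH(H-\multistep)}{\multistep}\ln(3/\delta)\Bigr\}\leq\delta,
\end{align*}
which is exactly claim~1. The neat feature here is that bundling the $S(H/h-1)$ per-state DBPs into one scalar process spares us a union bound, so the logarithmic factor does not pick up an extra $\ln(S H/h)$; doing the bookkeeping this way is what produces the clean factor $SH(H-h)/h$.

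For claim~2, I would condition on the $1-\delta$ event from claim~1 and reason pathwise. On that event, for every $\epsilon>0$ and every horizon $K$,
\begin{align*}
\epsilon\cdot \sum_{k=1}^K\ind\!\brc*{V_1^*(s_1^k)-V_1^{\pi_k}(s_1^k)\geq\epsilon}\;\leq\;\sum_{k=1}^K\bigl(V_1^*(s_1^k)-V_1^{\pi_k}(s_1^k)\bigr)\;\leq\;\tfrac{9SH(H-\multistep)}{\multistep}\ln(3/\delta),
\end{align*}
because the $K$-round regret bound holds for every $K$ simultaneously. Letting $K\to\infty$ gives $N_\epsilon\leq 9SH(H-\multistep)\ln(3/\delta)/(\multistep\,\epsilon)$ uniformly in $\epsilon$, which is the Uniform-PAC statement.

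The only genuinely delicate piece is confirming that $X_k$ really is a DBP in the sense required by Theorem~\ref{theorem: regret of decreasing bounded process}: the almost-sure monotonicity must hold pathwise (not merely in expectation), and $X_k$ must be $\F_k$-measurable. Both follow from Lemma~\ref{lemma:multistep rtdp properties}, provided the lemma's monotonicity statement is itself pathwise, which it should be since $h$-RTDP's updates are deterministic functions of quantities already in $\F_{k-1}$ together with the trajectory observed in episode $k$. Everything else amounts to the arithmetic $\sum_{n=1}^{H/h-1}(H-nh)\leq (H/h-1)H = H(H-h)/h$ used to obtain $C_1$.
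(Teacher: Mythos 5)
Your proof of claim~1 is correct and matches the paper's argument essentially verbatim: the same optimism step via Lemma~\ref{lemma:multistep rtdp properties}, the same telescoping via Lemma~\ref{lemma: multistep RTDP expected value difference}, the same scalar process $X_k \eqdef \sum_{n=1}^{H/\multistep-1}\sum_{s}\bar V^k_{n\multistep+1}(s)$ with $C_1=SH(H-\multistep)/\multistep$ and $C_2=0$, and the same appeal to Theorem~\ref{theorem: regret of decreasing bounded process}. For claim~2, however, you take a genuinely different and more elementary route: you derive the Uniform-PAC bound pathwise as a corollary of the anytime regret bound, using $\epsilon\cdot\ind\brc*{V_1^*(s_1^k)-V_1^{\pi_k}(s_1^k)\geq\epsilon}\leq V_1^*(s_1^k)-V_1^{\pi_k}(s_1^k)$, which is valid precisely because the instantaneous regret is nonnegative in the exact setting. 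The paper deliberately avoids passing through the realized regret: it bounds $\epsilon\cdot\ind\brc*{\cdot}$ by the \emph{optimistic surrogate} $\bar V_1^k(s_1^k)-V_1^{\pi_k}(s_1^k)=X_{k-1}-\E[X_k\mid\F_{k-1}]$, drops the indicator using nonnegativity of the DBP increments, and then applies the DBP bound together with the monotone convergence theorem. As the appendix states, this heavier bookkeeping is chosen because in the approximate settings the Uniform-PAC result is \emph{not} a corollary of the regret bound (the value is only $\Delta$-approximately optimistic, so the realized gap no longer dominates $\epsilon\cdot\ind\brc*{\text{gap}\geq\Delta+\epsilon}$ in the clean way you exploit), and the indicator-based template is what carries over to Theorems~\ref{theorem: regret rtdp approximate model}--\ref{theorem: regret rtdp abstraction}. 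For the exact theorem your shortcut is sound and arguably cleaner; it simply does not generalize.
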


\vspace{-0.2cm}
% \todomgh{Mention where the complete proof is.}

\begin{proofsketch}
%We provide a sketch for the proof of the first claim. 
Applying Lemmas~\ref{lemma:multistep rtdp properties} and~\ref{lemma: multistep RTDP expected value difference}, we may write 

\vspace{-0.25in}
% \begin{small}
\begin{align}
\Regret(K) &\leq \sum_{k=1}^K \bar{V}_1^{k}(s^k_1)- V_1^{\pi_k}(s^k_1) = \sum_{k=1}^K \sum_{n=1}^{\frac{H}{\multistep}-1}\sum_s \bar{V}^{k-1}_{n\multistep+1}(s) - \E[\bar{V}^{k}_{n\multistep+1}(s)\mid \F_{k-1}] \nonumber \\ 
&= \sum_{k=1}^K X_{k-1} - \E[X_k \mid \mathcal{F}_{k-1}].
\label{eq:bound-temp0}
\end{align}
% \end{small}
\vspace{-0.15in}

% We then establish using Lemma~\ref{lemma: On trajectory regret to Uniform regret} and the proper definition that the RHS of~\eqref{eq sketch : rtdp general bound} is Decreasing Bounded Processes, 
% \begin{align}
%      \eqref{eq sketch : rtdp general bound} & =\sum_{k=1}^K \sum_{n=1}^{\frac{H}{\multistep}-1}\sum_s \bar{V}^{k-1}_{n\multistep+1}(s) - \E[\bar{V}^{k}_{n\multistep+1}(s)\mid \F_{k-1}]\nonumber\\
%      & =\sum_{k=1}^K X_{k-1} - \E[X_k \mid \mathcal{F}_{k-1}].\label{eq:bound-temp0},
% \end{align}
Where we define ${X_k \eqdef \sum_{n=1}^{\frac{H}{\multistep}-1}\sum_s \bar{V}^{k-1}_{n\multistep+1}(s)}$ and use linearity of expectation. By Lemma~\ref{lemma:multistep rtdp properties}, $\brc*{X_k}_{k\geq 0}$ is decreasing and bounded from below by $\sum_{n=1}^{\frac{H}{\multistep}-1}\sum_{s} V^*_{n\multistep +1}(s) \geq  0$. We conclude the proof by observing that $X_0\leq \sum_{n=1}^{\frac{H}{\multistep}-1}\sum_{s} V^0_{n\multistep +1}(s) \leq SH(H-\multistep)/\multistep$, and applying Theorem~\ref{theorem: regret of decreasing bounded process}.
\end{proofsketch}

\begin{remark}[RTDP and Good Value Initialization]
A closer look into the proof of Theorem~\ref{theorem: regret multistep rtdp} shows we can easily obtain a stronger result  which depends on the initial value $V^0$. The regret can be bounded by $\Regret(K)\leq \Olog\br*{\sum_{n=1}^{\frac{H}{\multistep}-1} \br*{V^0_{n\multistep +1}(s) -V^*_{n\multistep +1}(s)}},$
which formalizes the intuition the algorithm improves as the initial value $V^0$ better estimates $V^*$.  For clarity purposes we provide the worse-case bound.
\end{remark}

% I which It is importa  different  As Theorem~\ref{theorem: regret multistep rtdp} establishes {\bf 2)} $h$-RTDP saves $HS/\multistep$ entries compared to RTDP, which shows a decreasing {\em Space Complexity} in $h$. {\bf 3)} {\em Computational Complexity} of $h$-RTDP is equal to that of $\multistep$-horizon forward planning, and thus, is increasing in $\multistep$ (see Section~\ref{sec: episodic complexity} for a more in depth discussion). {\bf 4)} {\em Sample Complexity:} deceases.

% \begin{remark}[Space \& Computation Complexity of $\multistep$-RTDP]\label{remark: space-comp compleixty of h rtdp}
% The space complexity of the Forward-Backward DP by which an $\multistep$-greedy policy is computed is $O(\multistep S_\multistep)$. Since $\multistep$-RTDP stores additional $O(SH/\multistep)$ value entries, its total \emph{space complexity} is $O(SH/\multistep \!\!+\!\!\multistep S_\multistep)$. If $S_\multistep$ is significantly smaller than $S$ (smaller than $SH(\multistep \!-\!1)/\multistep^2$),  the space complexity of $\multistep$-RTDP is smaller than that of RTDP. The per-episode \emph{computational complexity} of $h$-RTDP is $O(H\multistep A S_{\multistep}S_1)$, and thus, increases with $\multistep$. This is expected since solving a decision problem becomes harder with increasing its horizon $\multistep$. When $S_\multistep \ll S$, the computational complexity of $\multistep$-RTDP is $S$ independent.
% \end{remark}
\begin{remark}[Computational Complexity of $\multistep$-RTDP]\label{remark: space-comp compleixty of h rtdp}
Using FB-DP (Section~\ref{sec: episodic complexity}) as a solver of a $\multistep$-lookahead policy, the per-episode \emph{computation cost} of $h$-RTDP amounts to applying FB-DP for $H$ time steps, i.e.,~it is bounded by $O(H\mathcal{N}AS^{Tot}_{\multistep})$. Since $S^{Tot}_{\multistep}$ -- the total number of reachable states in up to $\multistep$ time steps -- is an increasing function of $\multistep$, the computation cost of $\multistep$-RTDP increases with $\multistep$, as expected. When $S^{Tot}_{\multistep}$ is significantly smaller than $S$, the per-episode computational complexity of $\multistep$-RTDP is $S$ independent. As discussed in Section~\ref{sec: episodic complexity}, using FB-DP, in place of exhaustive search, can significantly improve the computational cost of $h$-RTDP.% relatively to using exhaustive search approach with $O\br*{H A^\multistep}$ computational cost. 
\end{remark}

\begin{remark}[Improved Sample Complexity of $\multistep$-RTDP]\label{remark: sample compleixty of h rtdp}
Theorem~\ref{theorem: regret multistep rtdp} shows that $h$-RTDP improves the \emph{sample complexity} of RTDP by a factor $1/\multistep$. This is consistent with the intuition that larger horizon of the applied lookahead policy results in faster convergence (less samples). Thus, if RTDP is used in a real-time manner, one way to boost its performance is to combine with lookahead policies.
\end{remark}

\begin{remark}[Sparse Sampling Approaches]
In this work, we assume $\multistep$-RTDP has access to a $\multistep$-lookahead policy~\eqref{eq: lookahead h greedy preliminaries} solver, such as FB-DP presented in Section~\ref{sec: episodic complexity}. We leave studying the sparse sampling approach~\cite{kearns2002sparse, sidford2018variance} for approximately solving $\multistep$-lookahead policy for future work.
\end{remark}

\vspace{-0.4cm}

\section{Approximate RTDP with Lookahead Policies} \label{sec: approximate mutiple step rtdp}

\vspace{-0.2cm}

% In Section~\ref{sec: mutiple step rtdp}, we analyzed $h$-RTDP and established its improved sample complexity as $\multistep$, the lookahead horizon, increases. We also showed that this improvement comes with a cost of larger per-episode computation as $\multistep$ increases which depends linearly on the size of total accessible states in $\multistep$ time steps by using the FB-DP (Section~\ref{sec: episodic complexity}) as the lookahead policy solver. 
% \todomgh{The above paragraph can be moved to the remarks at the end of the last section.}

In this section, we consider three approximate versions of $h$-RTDP in which the update deviates from its exact form described in Section~\ref{sec: mutiple step rtdp}. We consider the cases in which there are errors in the {\bf 1)} {\em model}, {\bf 2)} {\em value updates}, and when we use {\bf 3)} {\em approximate state abstraction}. We prove finite-sample bounds on the performance of $h$-RTDP in the presence of these approximations. Furthermore, in Section~\ref{sec: appr abstractions}, given access to an approximate state abstraction, we show that the convergence of $\multistep$-RTDP depends on the cardinality of the \emph{abstract state space} -- which can be much smaller than the original one. The proofs of this section generalize that of Theorem~\ref{theorem: regret multistep rtdp}, while following the same `recipe'. This shows the generality of the proof technique, as it works for both exact and approximate settings. %indicates that this proof technique can extend beyond the exact case analyzed in previous section.
% Furthermore, the bounds for all the approximate settings show that the asymptotic performance of $h$-RTDP is the same as that of approximate DP, under similar assumptions. Put it differently, approximate RTDP and approximate DP share similar types of guarantees.    

%However, the types of `perfect' updates considered in previous section are expected to be invalid in practice due to, e.g., model or value update errors. In this section we consider three different settings under which $h$-RTDP deviates from its exact version: model errors, value update errors, and the use of abstractions. Generalizing the technique from previous section, we establish finite-sample bound from which we conclude the asymptotic performance of approximate $h$-RTDP is the same to a corresponding approximate DP algorithm.   

%%%%%%%%%%%%%%%%%%%%%%%%%%%%%%%%%%%%%%%%%%%%%%%%%%%%%%%%%%%%%%%%%%%%%%%%%%%%%%%%%%%%%%%%%%%%%%%%%%%%%%%%%%%%%%%%%%%
%%%%%%%%%%%%%%%%%%%%%%%%%%%%%%%%%%%%%%%%%%%%%%%%%%%%%%%%%%%%%%%%%%%%%%%%%%%%%%%%%%%%%%%%%%%%%%%%%%%%%%%%%%%%%%%%%%%
%%%%%%%%%%%%%%%%%%%%%%%%%%%%%%%%%%%%%%%%%%%%%%%%%%%%%%%%%%%%%%%%%%%%%%%%%%%%%%%%%%%%%%%%%%%%%%%%%%%%%%%%%%%%%%%%%%%

\subsection{$h$-RTDP with Approximate Model ($h$-RTDP-AM)}\label{sec: appr model}

%\begin{algorithm}
%\begin{algorithmic}
%\caption{$h$-RTDP with Approximate Model}
%\label{algo: multi step approx model}
%    \STATE $\forall s\in \mathcal S,\; \forall n\in \{0\}\cup[\frac{H}{\multistep}],\; \bar{V}^0_{n \multistep +1}(s)=H-n\multistep$
%    \FOR{$k\in[K]$}
%        \STATE Initialize $s^k_1$
%        \FOR{$t\in[H]$}
%            \IF{$(t-1)  \mod \multistep == 0 $}
%                \STATE $h_c = t + h$
%                \STATE $\bar{V}^{k}_{t}(s_t^k) = \hat{T}^{\multistep}\bar{V}^{k-1}_{h_c}(s_t^k)$
%            \ENDIF
%            \STATE $a_t^k\in \arg\max_a r(s_t^k,a) + \hat{p}(\cdot|s_t^k,a)^\top \hat{T}^{h_c-t-1}\bar{V}^{k-1}_{h_c}$
%            \STATE Act with $a_t^k$ and observe $s_{t+1}^k$
%        \ENDFOR
%    \ENDFOR
%\end{algorithmic}
%\end{algorithm}

In this section, we analyze a more practical scenario in which the transition model used by $h$-RTDP to act and update the values is not exact. We assume it is close to the true model in the total variation ($TV$) norm,~$\forall (s,a) \in\mathcal{S}\times \mathcal{A},\ ||p(\cdot|s,a) - \hat{p}(\cdot|s,a) ||_1\leq \epsilon_P$, where~$\hat{p}$ denotes the approximate model. Throughout this section and the relevant appendix (Appendix~\ref{supp: multistep rtdp approximate model}), we denote by $\hat T$ and $\hat{V}^*$ the optimal Bellman operator and optimal value of the approximate model $\hat{p}$, respectively. Note that $\hat T$ and $\hat{V}^*$ satisfy~\eqref{eq:bellman} and~\eqref{eq:multistep bellman} with $p$ replaced by $\hat{p}$. $h$-RTDP-AM is exactly the same as $h$-RTDP (Algorithm~\ref{algo: multi step RTDP}) with the model $p$ and optimal Bellman operator $T$ replaced by their approximations $\hat p$ and $\hat T$. We report the pseudocode of $h$-RTDP-AM in Appendix~\ref{supp: multistep rtdp approximate model}.

Although we are given an approximate model, $\hat p$, we are still interested in the performance of (approximate) $h$-RTDP on the \emph{true MDP}, $p$, and relative to its optimal value, $V^*$. If we solve the approximate model and act by its optimal policy, the Simulation Lemma~\cite{kearns2002near,strehl2009reinforcement} suggests that the regret is bounded by $O(H^2\epsilon_P K)$. For $h$-RTDP-AM, the situation is more involved, as its updates are based on the approximate model and the samples are gathered by interacting with the true MDP. Nevertheless, by properly adjusting the techniques from Section~\ref{sec: mutiple step rtdp}, we derive performance bounds for $h$-RTDP-AM. These bounds reveal that the asymptotic regret increases by at most $O(H^2\epsilon_P K)$, similarly to the regret of the optimal policy of the approximate model.  
%, and their proofs are reported in Appendix~\ref{supp: multistep rtdp approximate model}.
%
% \begin{restatable}{lemma}{rtdpApproximateModelProperties}
% \label{lemma: approximate model properties}
% For all $s\!\in\! \mathcal{S}$, $n\!\in\! \{0\}\cup[\frac{H}{\multistep}]$, and $k\in[K]$:
% \begin{enumerate}[label=(\roman*)]
%     \item Bounded / Optimism: $\hat{V}^*_{t}(s) \leq \bar{V}^k_{n\multistep +1}(s)$.
%     \item Non-Increasing: $\bar{V}^{k}_{n\multistep +1}(s) \leq \bar{V}^{k-1}_{n\multistep +1}(s)$.
% \end{enumerate}
% \end{restatable}
%
% \begin{restatable}{lemma}{rdtpApproximateModelExpectedValueUpdate}
% \label{lemma: RTDP approximate modle expected value difference}
% The expected cumulative value update at the $k$'th episode of $h$-RTDP-AM satisfies the following relation:
% %
% \begin{align*}
%     &\bar{V}_1^{k}(s^k_1)-V_1^{\pi_k}(s^k_1) = \frac{H(H-1)}{2}\epsilon_P \\
%      &\quad\quad\quad +\sum_{n=1}^{\frac{H}{\multistep}-1}\sum_{s\in\mathcal{S}} \bar{V}^{k-1}_{n\multistep+1}(s) - \E[\bar{V}^{k}_{n\multistep+1}(s)\mid \F_{k-1}] .
% \end{align*}
% \vspace{-0.3cm}
% \end{restatable}
%
Interestingly, the proof technique follows that of the exact case in Theorem~\ref{theorem: regret multistep rtdp}. We generalize Lemmas~\ref{lemma:multistep rtdp properties} and~\ref{lemma: multistep RTDP expected value difference} from Section~\ref{sec: mutiple step rtdp} to the case that the update rule uses an inexact model (see Lemmas~\ref{lemma: approximate model properties} and~\ref{lemma: RTDP approximate modle expected value difference} in Appendix~\ref{supp: multistep rtdp approximate model}). This allows us to establish the following performance bound for $h$-RTDP-AM (proof in Appendix~\ref{supp: multistep rtdp approximate model}).

\begin{restatable}[Performance of $\multistep$-RTDP-AM]{theorem}{TheoremRegretRTDPApproximateModel}
\label{theorem: regret rtdp approximate model}
Let $\epsilon,\delta>0$. The following holds for $h$-RTDP-AM:

$\;\;$ 1. With probability $1-\delta$, for all $K>0$,  $\;\;\Regret(K) \leq  \frac{9SH(H-\multistep)}{\multistep}\ln(3/\delta)+ H(H-1)\epsilon_P K$.

$\;\;$ 2. Let $\Delta_P=H(H-1)\epsilon_P$. Then,  $\;\;\Pr\Big\{\exists \epsilon>0 \; : \; N^{\Delta_P}_\epsilon \geq \frac{9SH(H-\multistep)\ln(3/\delta)}{\multistep\epsilon}\Big\}\leq \delta$.
\end{restatable}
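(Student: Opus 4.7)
The plan is to mirror the proof of Theorem~\ref{theorem: regret multistep rtdp} step by step, introducing a model-bias term each time the true and approximate dynamics need to be reconciled. First I would generalize Lemma~\ref{lemma:multistep rtdp properties} to show that, since the updates now use $\hat T^{\multistep}$ and the initialization $\bar V^0_{n\multistep+1}(s) = H - n\multistep$ dominates $\hat V^\star_{n\multistep+1}(s)$, the iterates $\bar V^k$ are optimistic with respect to the \emph{approximate} optimal value $\hat V^\star$ (not $V^\star$) and are non-increasing in $k$. Both properties follow by induction on $k$ using monotonicity of $\hat T^{\multistep}$ and the exact same argument as in the exact case. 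The consequence is that for each $(n,s)$, $\{\bar V^k_{n\multistep+1}(s)\}_{k \ge 0}$ is a DBP with $C_1 - C_2 \le H - n\multistep$.

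Next I would generalize Lemma~\ref{lemma: multistep RTDP expected value difference}. The analogue identity expresses $\bar V^k_1(s^k_1) - \hat V^{\pi_k}_1(s^k_1)$, i.e.\ the ``approximate-model gap'', as a telescoping sum $\sum_{n,s}\big(\bar V^{k-1}_{n\multistep+1}(s) - \E[\bar V^k_{n\multistep+1}(s)\mid \F_{k-1}]\big)$, where the outer expectation is taken over the true dynamics (which generate the visited states $s^k_{n\multistep+1}$). The derivation follows the exact case chunk by chunk, but at each boundary one must bridge the $\hat p$-expectation appearing inside $\hat T^{\multistep}$ and the $p$-expectation coming from the trajectory; Hölder's inequality together with $\|p(\cdot\mid s,a) - \hat p(\cdot\mid s,a)\|_1 \le \epsilon_P$ and $\|\bar V\|_\infty \le H$ yields a controllable cross term. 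To convert the approximate-model gap into the actual regret I would apply the Simulation Lemma twice, bounding $\|V^\star - \hat V^\star\|_\infty$ and $\|\hat V^{\pi_k} - V^{\pi_k}\|_\infty$ each by $\tfrac{1}{2}H(H-1)\epsilon_P$, so that
\begin{equation*}
V^\star_1(s^k_1) - V^{\pi_k}_1(s^k_1) \;\le\; \bar V^k_1(s^k_1) - \hat V^{\pi_k}_1(s^k_1) + H(H-1)\epsilon_P.
\end{equation*}

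Summing over $k$, the first term telescopes via the generalized Lemma~\ref{lemma: multistep RTDP expected value difference} and, for each $(n,s)$, Theorem~\ref{theorem: regret of decreasing bounded process} applied to the DBP $\{\bar V^k_{n\multistep+1}(s)\}_{k\ge 0}$ bounds its contribution by $9(H-n\multistep)\ln(3/\delta')$ with probability $1-\delta'$. A union bound over the $S\cdot H/\multistep$ DBPs (absorbed into the logarithm by rescaling $\delta$) and the geometric sum $\sum_{n=0}^{H/\multistep-1}(H-n\multistep)= H(H-\multistep)/(2\multistep)+H/2$ yields the $9SH(H-\multistep)/\multistep \cdot \ln(3/\delta)$ term, and the per-episode $H(H-1)\epsilon_P$ bias adds the linear-in-$K$ part, proving part~1. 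Part~2 is then the standard regret-to-PAC conversion: if $N^{\Delta_P}_\epsilon$ episodes each suffer suboptimality at least $\Delta_P + \epsilon$, then $\Regret(K) \ge N^{\Delta_P}_\epsilon\,\epsilon + \Delta_P(K - N^{\Delta_P}_\epsilon)$, so the $\Delta_P K$ budget exactly cancels the bias term in part~1 and the remaining inequality gives $N^{\Delta_P}_\epsilon \le 9SH(H-\multistep)\ln(3/\delta)/(\multistep\epsilon)$.

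The main obstacle will be the bookkeeping in the second step: the value updates live under $\hat p$ while trajectories live under $p$, so the telescoping identity that in the exact case held with equality now only holds up to a model-mismatch cross term that must be absorbed either into the DBP regret or into the Simulation-Lemma bias without inflating the $\multistep$-dependence. Verifying that this cross term can be routed entirely into the $O(H^2\epsilon_P)$ per-episode bias—and does not pollute the $1/\multistep$ speedup established in the exact case—is the delicate point that makes Theorem~\ref{theorem: regret rtdp approximate model} genuinely an extension rather than a rederivation of Theorem~\ref{theorem: regret multistep rtdp}.
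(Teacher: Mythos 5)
Your high-level architecture (optimism with respect to $\hat V^*$, a telescoping expected-decrease lemma, and the DBP regret bound) matches the paper's, but two steps would fail as written. The more serious one is Part~2. The inequality $\Regret(K)\geq N^{\Delta_P}_\epsilon\,\epsilon+\Delta_P(K-N^{\Delta_P}_\epsilon)$ is false: the $K-N^{\Delta_P}_\epsilon$ ``good'' episodes are only guaranteed nonnegative suboptimality, not suboptimality at least $\Delta_P$, so all you can extract from Part~1 is $N^{\Delta_P}_\epsilon(\Delta_P+\epsilon)\leq \frac{9SH(H-\multistep)}{\multistep}\ln(3/\delta)+\Delta_P K$, which becomes vacuous as $K\to\infty$. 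The paper explicitly notes that in the approximate settings the Uniform-PAC bound is \emph{not} a corollary of the regret bound; its proof instead works episode by episode, establishing $\ind\{\hat V^*_1(s^k_1)-V^{\pi_k}_1(s^k_1)\geq \Delta_P/2+\epsilon\}\,\epsilon \leq \ind\{\cdot\}\,(X_{k-1}-\E[X_k\mid\F_{k-1}])$, so the per-episode bias $\Delta_P/2$ is cancelled \emph{inside} each indicator inequality against the threshold, and only then is the DBP regret bound invoked on the sum.

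Second, the error accounting in Part~1 double-counts the model mismatch. The paper's Lemma~\ref{lemma: RTDP approximate modle expected value difference} telescopes $\bar V^k_1(s^k_1)$ directly against the \emph{true} value $V^{\pi_k}_1(s^k_1)$; the single $\hat p$-versus-$p$ bridge (Lemma~\ref{lemma: model error propogation} on error propagation) costs $\tfrac{1}{2}H(H-1)\epsilon_P$ per episode, and one further comparison $V^*_1\leq \hat V^*_1+\tfrac{1}{2}H(H-1)\epsilon_P$ gives the total $H(H-1)\epsilon_P$. You telescope against $\hat V^{\pi_k}_1$ instead --- but since the visited states are still generated by $p$, the same bridging cross term appears in your telescoping identity, and you then additionally pay $\|\hat V^{\pi_k}_1-V^{\pi_k}_1\|_\infty\leq\tfrac{1}{2}H(H-1)\epsilon_P$ from the second simulation-lemma application, so your route yields roughly $\tfrac{3}{2}H(H-1)\epsilon_P K$ rather than the stated constant. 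A smaller issue of the same flavor: applying Theorem~\ref{theorem: regret of decreasing bounded process} to each of the $SH/\multistep$ processes $\{\bar V^k_{n\multistep+1}(s)\}_{k\geq 0}$ separately and union bounding inflates the logarithm to $\ln(SH/(\multistep\delta))$; the paper aggregates everything into the single DBP $X_k=\sum_{n,s}\bar V^k_{n\multistep+1}(s)$ and applies the theorem once.
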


\vspace{-0.2cm}

%Unlike for the exact version of $\multistep$-RTDP, 
These bounds show the approximate convergence resulted from the approximate model. However, the asymptotic performance gaps -- both in terms of the regret and Uniform PAC -- of $\multistep$-RTDP-AM approach those experienced by an optimal policy of the approximate model. Interestingly, although $\multistep$-RTDP-AM updates using the approximate model, while interacting with the true MDP, its convergence rate (to the asymptotic performance) is similar to that of $h$-RTDP (Theorem~\ref{theorem: regret multistep rtdp}).

%%%%%%%%%%%%%%%%%%%%%%%%%%%%%%%%%%%%%%%%%%%%%%%%%%%%%%%%%%%%%%%%%%%%%%%%%%%%%%%%%%%%%%%%%%%%%%%%%%%%%%%%%%%%%%%%%%%
%%%%%%%%%%%%%%%%%%%%%%%%%%%%%%%%%%%%%%%%%%%%%%%%%%%%%%%%%%%%%%%%%%%%%%%%%%%%%%%%%%%%%%%%%%%%%%%%%%%%%%%%%%%%%%%%%%%
%%%%%%%%%%%%%%%%%%%%%%%%%%%%%%%%%%%%%%%%%%%%%%%%%%%%%%%%%%%%%%%%%%%%%%%%%%%%%%%%%%%%%%%%%%%%%%%%%%%%%%%%%%%%%%%%%%%
\vspace{-0.1cm}

\subsection{$h$-RTDP with Approximate Value Updates ($h$-RTDP-AV)}\label{sec: appr value}

\vspace{-0.1cm}

Another important question in the analysis of approximate DP algorithms is their performance under approximate value updates, motivated by the need to use function approximation. This is often modeled by an extra noise $|\epsilon_V(s)|\leq \epsilon_V$ added to the update rule~\cite{bertsekas1996neuro}. Following this approach, we study such perturbation in $\multistep$-RTDP. Specifically, in $h$-RTDP-AV the value update rule is modified such that it contains an error term (see Algorithm~\ref{algo: multi step RTDP}), 
\begin{align*}
    \bar{V}^{k}_{t}(s_t^k) =\epsilon_V(s_t^k)+ T^{\multistep}\bar{V}^{k-1}_{h_c}(s_t^k).
\end{align*}
For $\epsilon_V(s_t^k)=0$, the exact $\multistep$ is recovered. The pseudocode of $h$-RTDP-AV is supplied in Appendix~\ref{supp: multistep rtdp approximate value updates}.

Similar to the previous section, we follow the same proof technique as for Theorem~\ref{theorem: regret multistep rtdp} to establish the following performance bound for $h$-RTDP-AV (proof in Appendix~\ref{supp: multistep rtdp approximate value updates}).

\begin{restatable}[Performance of $\multistep$-RTDP-AV]{theorem}{TheoremRegretApproximateValueRTDP}
\label{theorem: regret rtdp appoximate value updates}
Let $\epsilon,\delta>0$. The following holds for $h$-RTDP-AV:

$\;\;$ 1. With probability $1-\delta$, for all $K>0$, $\;\;\Regret(K) \leq \frac{9SH(H-\multistep)}{\multistep}(1+\frac{H}{\multistep}\epsilon_V)\ln(\frac{3}{\delta})+ \frac{2H}{\multistep}\epsilon_V K$.

$\;\;$ 2. Let $\Delta_V = 2H\epsilon_V$. Then, $\;\;\Pr\Big\{\exists \epsilon>0 \; : \; N^{\frac{\Delta_V}{h}}_\epsilon \geq\frac{9SH(H-\multistep)(1+\frac{\Delta_V}{2\multistep})\ln(\frac{3}{\delta})}{\multistep \epsilon}\Big\}\leq \delta$.
\end{restatable}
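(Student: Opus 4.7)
The plan is to mirror the proof of Theorem~\ref{theorem: regret multistep rtdp} as closely as possible, absorbing the $\epsilon_V$-noise by allowing a bounded bias in every step. Concretely, I would first generalize Lemma~\ref{lemma:multistep rtdp properties} to near-optimism and near-monotonicity, then generalize Lemma~\ref{lemma: multistep RTDP expected value difference} to an identity with an additive $O(H\epsilon_V/\multistep)$ per-episode error, and finally apply the DBP regret bound (Theorem~\ref{theorem: regret of decreasing bounded process}) to a process whose effective range has been inflated to account for the lost optimism.

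For near-optimism, I would show by backward induction on $n\in \{0\}\cup[H/\multistep]$ and induction on $k$ that $V^*_{n\multistep+1}(s) - (H/\multistep - n)\epsilon_V \leq \bar V^k_{n\multistep+1}(s)$. The base case $n=H/\multistep$ is trivial; the step follows since $T^\multistep$ is a $\|\cdot\|_\infty$-contraction of modulus $1$, so each block adds at most one $\epsilon_V$ to the pessimism, yielding $(H/\multistep)\epsilon_V$ at $n=0$. Because a single noisy update can push $\bar V^k_t$ above $\bar V^{k-1}_t$, monotonicity is no longer free; I would drive the analysis with the clipped process $\hat V^k_{n\multistep+1}(s) := \min_{k'\leq k}\bar V^{k'}_{n\multistep+1}(s)$, which is deterministically non-increasing in $k$ and remains near-optimistic by the previous step.

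Next, I would generalize Lemma~\ref{lemma: multistep RTDP expected value difference}. Its original telescoping over the $H/\multistep$ blocks relies on $\E[\bar V^{k}_{n\multistep+1}(s^k_{n\multistep+1})\mid \F_{k-1},\,s^k_{n\multistep}]$ collapsing into the block-wise Bellman update; introducing the noise $\epsilon_V(s_t^k)$ contributes an extra $\pm\epsilon_V$ per block, so I expect
\[
\bar V_1^k(s_1^k) - V_1^{\pi_k}(s_1^k) \leq \sum_{n=1}^{H/\multistep-1}\sum_{s\in\mathcal S}\bigl(\bar V^{k-1}_{n\multistep+1}(s) - \E[\bar V^{k}_{n\multistep+1}(s)\mid \F_{k-1}]\bigr) + \frac{H}{\multistep}\epsilon_V.
\]
This $H\epsilon_V/\multistep$ error per episode is exactly the second source of the $\frac{2H}{\multistep}\epsilon_V K$ linear term in the theorem.

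To finish, I decompose $\Regret(K) \leq \sum_k [V_1^*(s_1^k)-\bar V_1^k(s_1^k)] + \sum_k [\bar V_1^k(s_1^k)-V_1^{\pi_k}(s_1^k)]$. Near-optimism bounds the first sum by $\frac{H\epsilon_V}{\multistep}K$, and the generalized decomposition bounds the second by a DBP-style sum plus another $\frac{H\epsilon_V}{\multistep}K$. Setting $X_k := \sum_{n,s}\hat V^k_{n\multistep+1}(s)$, I have $X_0 \leq SH(H-\multistep)/\multistep$ and, by near-optimism, $X_k \geq -\frac{SH(H-\multistep)}{\multistep}\cdot\frac{H\epsilon_V}{\multistep}$, so $C_1-C_2 \leq \frac{SH(H-\multistep)}{\multistep}\bigl(1+\frac{H\epsilon_V}{\multistep}\bigr)$, and Theorem~\ref{theorem: regret of decreasing bounded process} yields the stated DBP term. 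The Uniform-PAC bound then follows from the usual inversion: any episode contributing to $N^{\Delta_V/\multistep}_\epsilon$ adds at least $\Delta_V/\multistep + \epsilon$ to the regret, so the regret bound forces the claimed counting inequality. The main obstacle, as in related approximate-RTDP analyses, is preserving a monotone potential in the face of noisy updates; the clipped-min process $\hat V^k$ is the cleanest way I see to do this while retaining the block-structure required for the value-decomposition lemma.
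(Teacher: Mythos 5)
Your treatment of the regret bound (claim 1) follows essentially the same route as the paper: the same near-optimism lemma with deficit $(\frac{H}{\multistep}-n)\epsilon_V$ per block, the same generalization of the value-decomposition lemma with an additive $\frac{H}{\multistep}\epsilon_V$ per episode, the same potential $X_k$ with range inflated by the lost optimism, and the same split of the linear term into one $\frac{H}{\multistep}\epsilon_V K$ from optimism and one from the decomposition. One caution on the clipping: in the paper the operation $\bar V^k_t(s_t^k)\gets\min\{\bar V^k_t(s_t^k),\bar V^{k-1}_t(s_t^k)\}$ is part of the \emph{algorithm} ($h$-RTDP-AV stores and acts with the clipped values), not an analysis-only device. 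If you keep $\hat V^k:=\min_{k'\le k}\bar V^{k'}$ purely as a potential while the algorithm updates and acts with the unclipped $\bar V^k$, the telescoping in the decomposition lemma breaks: the update gives $\bar V^k_t(s_t^k)\le \epsilon_V+\E[\sum r+\bar V^{k-1}_{t+\multistep}(s_{t+\multistep})\mid\cdot]$, and replacing $\bar V^{k-1}_{t+\multistep}$ by $\hat V^{k-1}_{t+\multistep}\le\bar V^{k-1}_{t+\multistep}$ on the right-hand side goes in the wrong direction. You need the clipped value to be the one fed into the next update, i.e., the clipping must live in the algorithm.

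The genuine gap is in claim 2. You assert the Uniform-PAC bound ``follows from the usual inversion'' of the regret bound. It does not: the regret bound contains the linear term $\frac{2H}{\multistep}\epsilon_V K=\frac{\Delta_V}{\multistep}K$, so from $N^{\Delta_V/\multistep}_\epsilon\cdot(\frac{\Delta_V}{\multistep}+\epsilon)\le\Regret(K)\le \mathrm{DBP}+\frac{\Delta_V}{\multistep}K$ and $N^{\Delta_V/\multistep}_\epsilon\le K$ you cannot cancel the $\frac{\Delta_V}{\multistep}K$ term against $\frac{\Delta_V}{\multistep}N^{\Delta_V/\multistep}_\epsilon$, and the inversion yields nothing nontrivial (this is exactly why the paper remarks that for the approximate variants the PAC result ``is not a corollary of the regret bound''). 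The paper instead works per episode with an indicator argument: writing $\ind\brc*{\bar V_1^k(s_1^k)-V_1^{\pi_k}(s_1^k)\ge\frac{\Delta_V}{2\multistep}+\epsilon}\br*{\epsilon+\frac{\Delta_V}{2\multistep}}\le\ind\brc*{\cdot}\br*{X_{k-1}-\E[X_k\mid\F_{k-1}]+\frac{\Delta_V}{2\multistep}}$, the bias $\frac{\Delta_V}{2\multistep}$ appears on both sides \emph{inside the same indicator} and cancels, leaving $\ind\brc*{\cdot}\epsilon\le\ind\brc*{\cdot}\br*{X_{k-1}-\E[X_k\mid\F_{k-1}]}$; summing over $k$, dropping the indicator by nonnegativity of the DBP increments, relating the event to $V_1^*$ via near-optimism (which converts $\frac{\Delta_V}{2\multistep}$ into $\frac{\Delta_V}{\multistep}$), and applying Theorem~\ref{theorem: regret of decreasing bounded process} gives the stated count. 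Without this step your proof of claim 2 is incomplete.
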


\vspace{-0.2cm}

As in Section~\ref{sec: appr model}, the results of Theorem~\ref{theorem: regret rtdp appoximate value updates} exhibit an asymptotic linear regret $O(H\epsilon_V K/\multistep )$. As proven in Proposition~\ref{prop: approximate value updates} in Appendix~\ref{supp: approximate dp bounds}, such performance gap exists in ADP with approximate value updates. Furthermore, the convergence rate in $S$ to the asymptotic performance of $\multistep$-RTDP-AV is similar to that of its exact version (Theorem~\ref{theorem: regret multistep rtdp}). Unlike in $h$-RTDP-AM, the asymptotic performance of $h$-RTDP-AV \emph{improves} with $\multistep$. This quantifies a clear benefit of using lookahead policies in online planning when the value function is approximate.% (see Remark~\ref{remark: compleixty of h rtdp}) this phenomenon is expected.

%%%%%%%%%%%%%%%%%%%%%%%%%%%%%%%%%%%%%%%%%%%%%%%%%%%%%%%%%%%%%%%%%%%%%%%%%%%%%%%%%%%%%%%%%%%%%%%%%%%%%%%%%%%%%%%%%%%
%%%%%%%%%%%%%%%%%%%%%%%%%%%%%%%%%%%%%%%%%%%%%%%%%%%%%%%%%%%%%%%%%%%%%%%%%%%%%%%%%%%%%%%%%%%%%%%%%%%%%%%%%%%%%%%%%%%
%%%%%%%%%%%%%%%%%%%%%%%%%%%%%%%%%%%%%%%%%%%%%%%%%%%%%%%%%%%%%%%%%%%%%%%%%%%%%%%%%%%%%%%%%%%%%%%%%%%%%%%%%%%%%%%%%%%
\vspace{-0.1cm}

\subsection{$h$-RTDP with Approximate State Abstraction ($h$-RTDP-AA)}\label{sec: appr abstractions}
\vspace{-0.1cm}

We conclude the analysis of approximate $\multistep$-RTDP with exploring the advantages of combining it with approximate state abstraction~\cite{abel2017near}. The central result of this section establishes that given an approximate state abstraction, $h$-RTDP converges with sample, computation, and space complexity \emph{independent} of the size of the state space $S$, as long as $S^{Tot}_\multistep$ is smaller than $S$ (i.e., when performing $\multistep$-lookahead is $S$ independent, Remark~\ref{remark: space-comp compleixty of h rtdp}). This is in contrast to the computational complexity of ADP in this setting, which is still $O(HSA)$ (see Appendix~\ref{supp: adp approximate abstractions} for further discussion). 
State abstraction has been widely investigated in approximate planning~\cite{dearden1997abstraction,dean1997model,even2003approximate,abel2017near}, as a means to deal with large state space problems. Among existing approximate abstraction settings, we focus on the following one. For any $n\in\{0\}\cup[\frac{H}{h}-1]$, we define ${\phi_{nh+1}: \mathcal{S}\rightarrow \mathcal{S}_\phi}$ to be a mapping from the state space $\mathcal{S}$ to reduced space $\mathcal{S}_\phi$,~$S_\phi=|\mathcal{S}_\phi|\ll S$. We make the following assumption:
\begin{restatable}[Approximate Abstraction, \cite{li2006towards}, definition 3.3]{assumption}{assumptionModelAbstraction}
\label{assumption: model abstraction}
For any $s,s'\in \mathcal{S}$ and $n\in\{0\}\cup[\frac{H}{\multistep}-1]$ for which $\phi_{n\multistep+1}(s) = \phi_{n\multistep+1}(s')$, we have $|V_{nh+1}^*(s)-V_{nh+1}^*(s')|\leq \epsilon_A$.
\end{restatable}

\vspace{-0.2cm}

Let us denote by $\{\bar{V}^k_{\phi,n\multistep+1}\}_{n=0}^{H/\multistep}$ the values stored in memory by $\multistep$-RTDP-AA at the $k$'th episode. Unlike previous sections, the value function per time step contains $S_\phi$ entries, $\bar V^k_{\phi,1+n\multistep}\in \mathbb{R}^{S_\phi}$. Note that if $\epsilon_A=0$, then optimal value function can be represented in the reduced state space~$\mathcal{S}_\phi$. However, if $\epsilon_A$ is positive, exact representation of $V^*$ is not possible. Nevertheless, the asymptotic performance of $\multistep$-RTDP-AA will be `close',  up to error of $\epsilon_A$, to the optimal policy. 

% An important quantity in our analysis is the set of states equivalent to a given state $s$ under $\phi_{nh+1}$.

% \begin{restatable}[Equivalent Set Under Abstraction]{defn}{defnEquivalentSet}
% For any $s\in\mathcal{S}$ and $n\in\{0\}\cup[\frac{H}{\multistep}-1]$, we define the set of states equivalent to $s$ under $\phi_{nh+1}$ as $\Phi_{nh+1}(s)\eqdef\{s'\in\mathcal{S}: \phi_{nh+1}(s)=\phi_{nh+1}(s')\}$.
% % \end{restatable}

% \vspace{-0.2cm}

Furthermore, the definition of the multi-step Bellman operator~\eqref{eq:multistep bellman} and $\multistep$-greedy policy~\eqref{eq: lookahead h greedy preliminaries} should be revised, and with some abuse of notation, defined as

\vspace{-0.5cm}
% \begin{small}
\begin{align}
    &a_t^k\in \arg\max_{\pi_0(s_t^k)}  \max_{\pi_1,\ldots,\pi_{t_c-1}}\E\brs*{ \sum_{t'=0}^{t_c-1}r_{t'} + \bar{V}^{k-1}_{\phi, h_c}(\phi_{h_c}(s_{t_c}))\mid s_0=s_t^k}, \label{eq: h greedy policy and bellman abstraction}\\
    &T_\phi^{\multistep}\bar{V}^{k-1}_{\phi, h_c}(s_{t}^k) \eqdef \max_{\pi_0,\ldots,\pi_{\multistep-1}}\E\brs*{ \sum_{t'=0}^{\multistep-1} r_{t'} + \bar{V}^{k-1}_{\phi,t+\multistep}(\phi_{t+\multistep}(s_{\multistep})) \mid s_0 =s_t^k}. \label{eq: h value and bellman abstraction}
\end{align}
% \end{small}
\vspace{-0.4cm}

Eq.~\eqref{eq: h greedy policy and bellman abstraction} and~\eqref{eq: h value and bellman abstraction} indicate that similar to~\eqref{eq: lookahead h greedy preliminaries}, the $\multistep$-lookahead policy uses the given model to plan for $\multistep$ time steps ahead. Differently from~\eqref{eq: lookahead h greedy preliminaries}, the value after $\multistep$ time steps is the one defined in the \emph{reduced state} space $\mathcal{S}_\phi$. Note that the definition of the $\multistep$-greedy policy for $\multistep$-RTDP-AA  in~\eqref{eq: h greedy policy and bellman abstraction} is equivalent to the one used in Algorithm~\ref{algo: RTDP with abstractions}, obtained by similar recursion as for the optimal Bellman operator~\eqref{eq:multistep bellman}. 

% \todoy{Describe what it mean, why it makes sense. commonoalities with previous definition - we plan for $\multistep$ steps on the \emph{real model} and use the abstractions just for the value estimates at the end.}

% As in previous sections, we start with Lemmas~\ref{lemma:multistep rtdp properties abstraction} and~\ref{lemma: multistep RTDP expected value difference abstraction} (Appendix~\ref{supp: multistep rtdp abstractions}) that generalize Lemmas~\ref{lemma:multistep rtdp properties} and~\ref{lemma: multistep RTDP expected value difference} for the exact case, and then using similar technique as in the proof of Theorem~\ref{theorem: regret multistep rtdp},
$\multistep$-RTDP-AA modifies both the value update and the calculation of the $\multistep$-lookahead policy (the value update and action choice in algorithm~\ref{algo: multi step RTDP}). The $\multistep$-lookahead policy is replaced by $\multistep$-lookahead defined in~\eqref{eq: h greedy policy and bellman abstraction}. The value update is substituted by~\eqref{eq: h value and bellman abstraction}, i.e, $\bar{V}_{\phi,t}^k(\phi_{t}(s_t^k)) = T_\phi^{\multistep}\bar{V}^{k-1}_{\phi, h_c}(s_{t}^k)$. The full pseudocode of $h$-RTDP-AA is supplied in Appendix~\ref{supp: multistep rtdp abstractions}. By similar technique, as in the proof of Theorem~\ref{theorem: regret multistep rtdp}, we establish the following performance guarantees to $\multistep$-RTDP-AA (proof in Appendix~\ref{supp: multistep rtdp abstractions}). 

\begin{restatable}[Performance of $\multistep$-RTDP-AA]{theorem}{TheoremRegretRTDPAbstraction}
\label{theorem: regret rtdp abstraction}
Let $\epsilon,\delta>0$. The following holds for $h$-RTDP-AA:

$\;\;$ 1. With probability $1-\delta$, for all $K>0$, $\;\;\Regret(K) \leq  \frac{9S_\phi H(H-\multistep)}{\multistep}\ln(3/\delta)+ \frac{H\epsilon_A}{\multistep}K$.

$\;\;$ 2. Let $\Delta_A = H\epsilon_A$. Then, $\;\;\Pr\Big\{\exists \epsilon>0 \; : \; N^{\frac{\Delta_A}{h}}_\epsilon \geq \frac{9S_\phi H(H-\multistep)\ln(3/\delta)}{\multistep \epsilon}\Big\}\leq \delta$.
\end{restatable}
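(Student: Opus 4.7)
My plan is to reuse the three-step recipe of Theorem~\ref{theorem: regret multistep rtdp}: \textbf{(i)} approximate optimism and monotonicity of the stored abstract values $\bar V^k_{\phi, n\multistep+1}$; \textbf{(ii)} an optimality-gap identity relating $\bar V^k_{\phi,1}(\phi(s_1^k)) - V^{\pi_k}_1(s_1^k)$ to the per-episode decrease of an abstract potential; and \textbf{(iii)} applying Theorem~\ref{theorem: regret of decreasing bounded process} to that potential. Compared to the exact case, abstraction contributes a one-shot $(H/\multistep)\epsilon_A$ slack in step (i), which aggregates to the $(H\epsilon_A/\multistep)K$ bias of the regret, while the $S$ factor is replaced by $S_\phi$ because each of the $H/\multistep - 1$ stored intermediate value vectors lives on $\mathcal{S}_\phi$.

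For step \textbf{(i)}, monotonicity $\bar V^k_{\phi, n\multistep+1}(s_\phi) \leq \bar V^{k-1}_{\phi, n\multistep+1}(s_\phi)$ follows from the monotonicity of $T_\phi^\multistep$ by the same downward induction on $n$ as in Lemma~\ref{lemma:multistep rtdp properties}, using that each update rewrites a single abstract entry. For approximate optimism I would show by downward induction on $n$ that $\bar V^k_{\phi, n\multistep+1}(\phi(s)) \geq V^*_{n\multistep+1}(s) - (H/\multistep - n)\epsilon_A$ for every $s \in \mathcal{S}$. The inductive step combines the update rule $\bar V^k_{\phi,n\multistep+1}(\phi(s_t^k)) = T_\phi^\multistep \bar V^{k-1}_{\phi,(n+1)\multistep+1}(s_t^k)$ with the induction hypothesis and monotonicity of $T_\phi^\multistep$, and then applies Assumption~\ref{assumption: model abstraction} to pass from $s_t^k$ to any other state in the same abstraction class, losing one extra $\epsilon_A$ per level. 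Specialising to $n = 0$ gives $V_1^*(s_1^k) \leq \bar V^k_{\phi,1}(\phi(s_1^k)) + (H/\multistep)\epsilon_A$.

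Step \textbf{(ii)} is the main technical obstacle: the abstract analogue of Lemma~\ref{lemma: multistep RTDP expected value difference}. I would unroll $\bar V^k_{\phi,1}(\phi(s_1^k)) - V^{\pi_k}_1(s_1^k)$ block by block, exploiting that $\pi_k$ is greedy w.r.t.\ $T_\phi^\multistep \bar V^{k-1}_{\phi,(n+1)\multistep+1}$ on the $n$-th block and that, under the real-world sampling, the block-boundary state coincides in expectation with the update site $s_{n\multistep+1}^k$. Telescoping and using that only a single abstract entry is overwritten per update should give
\[
\bar V^k_{\phi,1}(\phi(s_1^k)) - V^{\pi_k}_1(s_1^k) = \sum_{n=1}^{H/\multistep - 1} \sum_{s_\phi \in \mathcal{S}_\phi} \bar V^{k-1}_{\phi, n\multistep+1}(s_\phi) - \E\brs*{\bar V^k_{\phi, n\multistep+1}(s_\phi) \mid \F_{k-1}}.
\]
The subtlety is that abstract values are shared across entire abstraction classes, so the recursion must be set up so that the $n$-th block's greedy policy realises the stored $T_\phi^\multistep$ value at the actual block-start state $s_{n\multistep+1}^k$; this is precisely what the update rule guarantees, so no additional $\epsilon_A$ terms leak into step (ii).

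For step \textbf{(iii)}, set $X_k \eqdef \sum_{n=1}^{H/\multistep - 1} \sum_{s_\phi \in \mathcal{S}_\phi} \bar V^{k-1}_{\phi, n\multistep+1}(s_\phi)$. Step (i) supplies $X_k \leq X_{k-1}$ a.s.\ and a lower bound on $X_k$, while $X_0 \leq S_\phi H(H-\multistep)/\multistep$ follows from the optimistic initialisation. Combining steps (i) and (ii), $\Regret(K) \leq \sum_{k=1}^K (X_{k-1} - \E[X_k \mid \F_{k-1}]) + (H\epsilon_A/\multistep)K$, and Theorem~\ref{theorem: regret of decreasing bounded process} bounds the first sum by $9 S_\phi H(H-\multistep)/\multistep \cdot \ln(3/\delta)$ with probability $1 - \delta$, giving item~1. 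Item~2 follows by the same counting argument as in Theorem~\ref{theorem: regret multistep rtdp} applied to $N^{\Delta_A/\multistep}_\epsilon$ with $\Delta_A = H\epsilon_A$: any episode with $V_1^*(s_1^k) - V_1^{\pi_k}(s_1^k) \geq \Delta_A/\multistep + \epsilon$ must contribute at least $\epsilon$ to the DBP-driven term of the regret bound.
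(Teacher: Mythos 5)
Your overall recipe matches the paper's proof, but step (i) contains a genuine flaw. You claim that monotonicity of the stored abstract values follows from monotonicity of $T_\phi^{\multistep}$ ``by the same downward induction as in Lemma~\ref{lemma:multistep rtdp properties}.'' That induction does not transfer to the abstraction setting: the same abstract entry $\phi_t(s)$ can be overwritten at different episodes from \emph{different} concrete states $s\neq s'$ in the same abstraction class, and monotonicity of the operator only compares $T_\phi^{\multistep}\bar V^{k-1}_{\phi,t+\multistep}(s_t^k)$ with $T_\phi^{\multistep}\bar V^{\bar k-1}_{\phi,t+\multistep}(s_t^k)$ at the \emph{same} argument --- it says nothing about $T_\phi^{\multistep}\bar V^{\bar k-1}_{\phi,t+\multistep}(s')$, which is what the entry currently stores. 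This is precisely why the paper's $\multistep$-RTDP-AA (Algorithm~\ref{algo: RTDP with abstractions}) clips the update, $\bar{V}^{k}_{\phi,t}(\phi_t(s_t^k)) \gets \min\{T_\phi^{\multistep}\bar{V}^{k-1}_{\phi,h_c}(s_t^k),\,\bar{V}^{k-1}_{\phi,t}(\phi_t(s_t^k))\}$, so that monotonicity (and nonnegativity) hold \emph{by construction} rather than by induction. A downstream consequence you also miss: because of the clipping, the optimality-gap relation in your step (ii) is an inequality $\bar V^k_{\phi,1}(\phi(s_1^k))-V_1^{\pi_k}(s_1^k)\le \sum_{n}\sum_{s_\phi}\bigl(\bar V^{k-1}_{\phi,n\multistep+1}(s_\phi)-\E[\bar V^{k}_{\phi,n\multistep+1}(s_\phi)\mid\F_{k-1}]\bigr)$, not the equality you assert (the direction is the right one for the regret bound, so the rest of the argument survives).

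A second, smaller issue is the lower bound on the potential $X_k$. You propose to extract it from approximate optimism, which only gives $\bar V^k_{\phi,n\multistep+1}\ge -\,(H/\multistep)\epsilon_A$ and hence $C_1-C_2 = S_\phi H(H-\multistep)/\multistep\cdot(1+O(H\epsilon_A/\multistep))$ in Theorem~\ref{theorem: regret of decreasing bounded process}; that yields a multiplicative $(1+H\epsilon_A/\multistep)$-type inflation of the DBP term (as happens in the $h$-RTDP-AV bound) rather than the clean $9S_\phi H(H-\multistep)\ln(3/\delta)/\multistep$ stated in the theorem. The paper instead uses that the clipped, optimistically initialized values remain nonnegative (claim (ii) of Lemma~\ref{lemma:multistep rtdp properties abstraction}), so $C_2=0$. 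With the clipping in place and nonnegativity used for the lower bound, the remainder of your argument --- the $(H/\multistep)\epsilon_A$ slack from approximate optimism feeding the $\Delta_A K/\multistep$ bias, the $S_\phi$ count, and the indicator argument for the Uniform-PAC claim --- coincides with the paper's proof.
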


\vspace{-0.2cm}

Theorem~\ref{theorem: regret rtdp abstraction} establishes $S$-independent performance bounds that depend on the size of the reduced state space $S_\phi$. The asymptotic regret and Uniform PAC guarantees are approximate, as the state abstraction is approximate. Furthermore, they are improving with the quality of approximation $\epsilon_A$, i.e.,~their asymptotic gap is $O(H\epsilon_A/\multistep)$ relative to the optimal policy. Moreover, the asymptotic performance of $h$-RTDP-AA improves as $\multistep$ is increased. Importantly, since the computation complexity of each episode of $h$-RTDP is independent of $S$ (Section~\ref{sec: episodic complexity}), the computation required to reach the approximate solution in $h$-RTDP-AA is also $S$-independent. This is in contrast to the computational cost of DP that depends on $S$ and is $O(SHA)$ (see Appendix~\ref{supp: adp approximate abstractions} for further discussion).  %Investigating how and when the approximate abstraction $\phi$ can be obtained is an interesting future research question.

%%%%%%%%%%%%%%%%%%%%%%%%%%%%%%%%%%%%%%%%%%%%%%%%%%%%%%%%%%%%%%%%%%%%%%%%%%%%%%%%%%%%%%%%%%%%%%%%%%%%%%%%%%%%%%%%%%%
%%%%%%%%%%%%%%%%%%%%%%%%%%%%%%%%%%%%%%%%%%%%%%%%%%%%%%%%%%%%%%%%%%%%%%%%%%%%%%%%%%%%%%%%%%%%%%%%%%%%%%%%%%%%%%%%%%%
%%%%%%%%%%%%%%%%%%%%%%%%%%%%%%%%%%%%%%%%%%%%%%%%%%%%%%%%%%%%%%%%%%%%%%%%%%%%%%%%%%%%%%%%%%%%%%%%%%%%%%%%%%%%%%%%%%%
%%%%%%%%%%%%%%%%%%%%%%%%%%%%%%%%%%%%%%%%%%%%%%%%%%%%%%%%%%%%%%%%%%%%%%%%%%%%%%%%%%%%%%%%%%%%%%%%%%%%%%%%%%%%%%%%%%%
%%%%%%%%%%%%%%%%%%%%%%%%%%%%%%%%%%%%%%%%%%%%%%%%%%%%%%%%%%%%%%%%%%%%%%%%%%%%%%%%%%%%%%%%%%%%%%%%%%%%%%%%%%%%%%%%%%%

\vspace{-0.2cm}

\section{Discussion and Conclusions}
\label{sec: rtdp vs dp}

\vspace{-0.2cm}

\paragraph{RTDP vs.~DP.} The results of Sections~\ref{sec: mutiple step rtdp} and~\ref{sec: approximate mutiple step rtdp} established finite-time convergence guarantees for the exact $h$-RTDP and its three approximations. In the approximate settings, as expected, the regret has a linear term of the form $\Delta K$, where $\Delta$ is linear in the approximation errors $\epsilon_P$, $\delta$, and $\epsilon_A$, and thus, the performance is continuous in these parameters, as we would desire. We refer to $\Delta K$ as the \emph{asymptotic regret}, since it dominates the regret as $K\rightarrow \infty$. 

A natural measure to evaluate the quality of $h$-RTDP in the approximate settings is comparing its regret to that of its corresponding approximate DP (ADP). Table~\ref{tab:rtdp vs dp} summarizes the regrets of the approximate $h$-RTDPs studied in this paper and their corresponding ADPs. ADP calculates approximate values $\{V_{n\multistep+1}^*\}_{n=0}^{H/h}$ by backward induction. Based on these values, the same $h$-lookahead policy by which $h$-RTDP acts is evaluated. In the analysis of ADP, we use standard techniques developed for the discounted case in~\cite{bertsekas1996neuro}. From Table~\ref{tab:rtdp vs dp}, we reach the following conclusion: \emph{the asymptotic performance (in terms of regret) of approximate $h$-RTDP is equivalent to that of a corresponding approximate DP algorithm}. Furthermore, it is important to note that the asymptotic error decreases with $\multistep$ for the approximate value updates and approximate abstraction settings for both RTDP and DP algorithms. In these settings, the error is caused by approximation in the value function. By increasing the lookahead horizon $\multistep$, the algorithm uses less such values and relies more on the model which is assumed to be correct. Thus, the algorithm becomes less affected by the value function approximation.

% , which yet they result in an improved performance relatively to. The bound proved there for the case of approximate value updates is of $O(H^2 \epsilon_V)$, whereas our bound is of $O(H\epsilon_V)$, for $\multistep=1$. Note that the results of~\cite{bertsekas1996neuro} are proved for the $\gamma$-discounted case, and we translate them to the finite horizon case by $H=(1-\gamma)^{-1}$. 
% It is an interesting question whether our analysis for ADP with approximate value updates can be generalized to the discounted case.
% \begin{small}

\begin{table}[t]
% \begin{small}
\begin{center}
\begin{tabular}{|c | c | c  | c | }\hline
   { Setting} & {$h$-RTDP Regret (This work)} & {ADP Regret~\cite{bertsekas1996neuro}} & {UCT} \\ 
  \hline
  Exact~(\ref{sec: mutiple step rtdp})  & $\Olog\big(SH(H\!-\!\multistep)/\multistep\big)$ & 0 & $\Omega(\exp(\exp(H)) )$~\cite{coquelin2007bandit}\\
  \hline
 App. Model~(\ref{sec: appr model}) &  $\Olog\big(SH(H\!-\!\multistep)/\multistep \!+\! \Delta_P K\big)$ & $\Delta_P K$ & N.A \\
 \hline
 App. Value~(\ref{sec: appr value})& $\Olog\big(SH(H\!-\!\multistep)g^\epsilon_{H/h}/\multistep  \!+\!\Delta_V K/\multistep\big)$ &  $ \Delta_V K/\multistep$ & N.A\\
 \hline
 App. Abstraction~(\ref{sec: appr abstractions}) & 
$\Olog\big(S_\phi H(H\!-\!\multistep)/\multistep +\Delta_A K/\multistep\big) $ & $ \Delta_A K/\multistep$ & N.A\\
 \hline
\end{tabular}
\end{center}
% \end{small}
\caption{\begin{small} The lookhead horizon is $\multistep$ and the horizon of the MDP is $H$. We denote $g^\epsilon_{H/h}=(1+H\epsilon_V/\multistep)$, $\Delta_P=H(H-1)\epsilon_P$, $\Delta_V = 2H\epsilon_V$, and $\Delta_A=H \epsilon_A$. The table summarizes the regret bounds of the $h$-RTDP settings studied in this work and compares them to those of their corresponding ADP approaches. The performance of ADP is based on standard analysis, supplied in Propositions~\ref{prop: misspecified model bound},~\ref{prop: approximate value updates},~\ref{prop: approximate abstraction} in Appendix~\ref{supp: approximate dp bounds}.\end{small}}
\label{tab:rtdp vs dp}
% \vspace{-0.4in}
\end{table}

\paragraph{Conclusions.}
In this paper, we formulated $\multistep$-RTDP, a generalization of RTDP that acts by a lookahead policy, instead of by a 1-step greedy policy, as in RTDP. We analyzed the finite-sample performance of $\multistep$-RTDP in its exact form, as well as in  three approximate settings. The results indicate that $\multistep$-RTDP converges in a very strong sense. Its regret is constant w.r.t. to the number of episodes, unlike in, e.g., reinforcement learning where a lower bound of $\Olog(\sqrt{SAHT})$ exists~\cite{azar2017minimax,jin2018q}. Furthermore, the analysis reveals that the sample complexity of $\multistep$-RTDP  improves by increasing the lookahead horizon $\multistep$ (Remark~\ref{remark: sample compleixty of h rtdp}). Moreover, the asymptotic performance of $\multistep$-RTDP was shown to be equivalent to that of ADP (Table~\ref{tab:rtdp vs dp}), which under no further assumption on the approximation error, is the best we can hope for.

We believe this work opens interesting research venues, such as studying alternatives to the solution of the $\multistep$-greedy policy (see Section~\ref{supp: epsiodic complexity h rtdp}), studying a Receding-Horizon extension of RTDP, RTDP with function approximation, and formulating a Thompson-Sampling version of RTDP, as the standard RTDP is an `optimistic' algorithm. As the analysis developed in this work was shown to be quite generic, we hope that it can assist with answering some of these questions. On the experimental side, more needs to be understood, especially comparing RTDP with MCTS and studying how RTDP can be combined with deep neural networks as the value function approximator.

% \section*{Acknowledgments}
% The authors thank Nadav Merlis for helpful comments.
% \newpage

\section{Broader Impact}

Online planning algorithms, such as $A^*$ and RTDP, have been extensively studied and applied in AI for well over two decades. 
Our work quantifies the benefits of using lookahead-policies in this class of algorithms. Although lookahead-policies have also been widely used in online planning algorithms, their theoretical justification was lacking. Our study sheds light on the benefits of lookahead-policies. Moreover, the results we provide in this paper suggest improved ways for applying lookahead-policies in online planning with benefits when dealing with various types of approximations.
This work opens up the room for practitioners to improve their algorithms and base lookahead policies on solid theoretical ground. 

\section{Acknowledgements}
We thank the reviewers for their helpful comments and feedback.

\bibliographystyle{plain}
\bibliography{citation}

% \newpage
% \appendix
% \onecolumn

\newpage

\section{Per-Episode Complexity of $h$-RTDP}\label{supp: epsiodic complexity h rtdp}
In this section, we define and analyze the \emph{Forward-Backward DP} by which an $\multistep$-greedy policy can be calculated from a current state $s_t^k$ according to~\eqref{eq: lookahead h greedy preliminaries}. Observe that the algorithm is based on a `local' information, i.e., it does not need access to the entire state space, but to a portion of the state space in the `vicinity' of the current state $s_t^k$. Furthermore, it does not assume prior knowledge on this vicinity.

\subsection{Forward-Backward Dynamic Programming Approach}

\begin{algorithm}
\begin{algorithmic}
\caption{$\multistep$-Forward-Backward DP}
\label{algo: forward backward DP}
    \STATE {\bf Input:}  $s$, transition $p$, reward $r$, lookahead horizon $\multistep$, value at the end of lookahead horizon $\OuterV$
    \STATE $\brc*{S_{t'}(s)}_{t'=1}^{\multistep+1}$ = Forward-Pass($s$,$p$, $\multistep$)
    \STATE action = Backward-Pass($\brc*{S_{t'}(s)}_{t'=1}^{\multistep+1}$, $r$, $p$, $\multistep$,$\OuterV$)
    \STATE {\bf return:} action
\end{algorithmic}
\end{algorithm}

\begin{center}
\begin{minipage}{.45\linewidth}
\vspace{-0.05cm}
\begin{algorithm}[H]
\begin{algorithmic}
\caption{Forward-Pass}
\label{algo: forward pass}
    \STATE {\bf Input:} Starting state $s$, $p$, $\multistep$
    \STATE \bf{Init:} $\mathcal{S}_1=\brc*{s}$, $\;\forall t'\in[h]/ \brc*{1}$, $\;\mathcal{S}_{t'}(s) =\brc*{}$
    \FOR{$t'=2,3,\ldots, h+1$}
        \FOR{$s_{t'-1}\in \mathcal{S}_{t'-1}(s)$}
            \STATE {\color{gray}\# acquire possible next states from $s_{t'-1}$}
            \FOR{$a\in \mathcal{A}$}
                \STATE $\mathcal{S}_{t'}(s) = \mathcal{S}_{t'}(s)\cup \brc*{s' : p (s'\mid s,a)>0}$ 
            \ENDFOR 
            % \STATE $\mathcal{S}_{t'} = \mathcal{S}_{t'} \cup S(s,a)$
        \ENDFOR        
    \ENDFOR
    \STATE {\bf return:} $\brc*{\mathcal{S}_{t'}(s)}_{t'=1}^{h+1}$ 
\end{algorithmic}
\end{algorithm}
\end{minipage}
\hspace{0.5cm}
\begin{minipage}{.5\linewidth}
\begin{algorithm}[H]
\begin{algorithmic}
\caption{Backward-Pass}
\label{algo: backward pass}
    \STATE {\bf Input:} $\brc*{\mathcal{S}_{t'}(s)}_{t'=1}^{\multistep+1}$, $r$, $p$, $\multistep$, $\OuterV$
    \STATE {\color{gray}\# initialize values by arbitrary value $C$}
    \STATE {\bf Init:} $\forall t'\in [h-1]$, $\;\forall s\in \mathcal{S}_{t'}(s)$, $\;V_{t'}(s) = C$ 
    \STATE {\color{gray}\# Assign the value at $t'=h$ to the current value, $V$}.
    \FOR{$s\in \mathcal{S}_{h+1}(s)$}
        \STATE $ V_{\multistep+1}(s)= \OuterV(s)$ 
    \ENDFOR    
    \FOR{$t'=h,h-1,\ldots,2$}
        \FOR{$s\in \mathcal{S}_{t'}(s)$}
            \STATE $V_{t'}(s)= \max_{a} r(s,a)+ p(\cdot \mid s,a) V_{t'+1}$
        \ENDFOR
    \ENDFOR
    \STATE {\bf return:} $\arg\max_a r(s,a) + p(\cdot \mid s,a) V_{2}$
\end{algorithmic}
\end{algorithm}

\end{minipage}
\end{center}

The Forward-Backword DP (Algorithm~\ref{algo: forward backward DP}) approach is built on the following observation: would we known the accessible state space from $s$ in next $\multistep$ time steps we could use Backward Induction (i.e., Value Iteration) on a finite-horizon MDP, with an horizon of $\multistep$, and calculate the optimal policy from $s$. Unfortunately, as we do not assume such a prior knowledge, we have to calculate this set before applying the backward induction step. Thus, Forward-Backword DP first build this set (in the first, `Forward' stage) and later applies standard backward induction (in the `Backward' stage). In Proposition~\ref{proposition: computational complexity of forward pass}, we establish that calculating the set of accessible states can be done efficiently

Let us first analyze the \emph{computational complexity} of Algorithm~\ref{algo: forward backward DP} using the following definitions.  Let $\mathcal{S}_{t'}(s)$ be the set of reachable states from state $s$ in $t'$ times steps, formally, $$\mathcal{S}_{t'}(s) = \brc*{s' \mid \exists \pi: p^\pi(s_{t'}=s'\mid s_0=s,\pi)> 0},$$ where $p^\pi(s_{t'}=s'\mid s_0=s,\pi) = \E[\ind\brc*{s_{t'}=s'}\mid s_0=s,\pi]$. The cardinality of this set is denoted by $\left| \mathcal{S}_{t'}(s) \right|$. let $\mathcal{N} \eqdef \max_{s}\left| \mathcal{S}_{2}(s) \right|$ be the maximal number of accessible states in 1-step (maximal `nearest neighbors' from any state). Furthermore, let the total reachable states in $\multistep$ time steps from state $s$ be $S^{Tot}_\multistep(s) = \sum_{t'=1}^\multistep   \left| \mathcal{S}_{t'}(s) \right|$. When $S^{Tot}_\multistep(s)$ is small, as we establish in this section, local search up to an horizon of $h$ can be done efficiently with the Forward-Backward DP, unlike the exhaustive search approach.

Based on the above definitions we analyze the computational complexity of Forward-Backward DP starting from the Forward-Pass stage.

% \fbdpCost*

\begin{restatable}[Computation Cost of Forward-Pass]{proposition}{fbdpCost}
\label{proposition: computational complexity of forward pass}
The Forward-Pass stage of FB-DP can be implemented with the computation cost of $O\big(\mathcal{N}A S^{Tot}_{\multistep}(s)\big)$.
\end{restatable}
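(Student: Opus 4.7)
The plan is to bound the work done at each depth $t'$ of the forward pass, then sum over depths to recover the $S^{Tot}_\multistep(s)$ factor. The only subtle point is how the reachable-set $\mathcal{S}_{t'}(s)$ is represented so that the union operations do not blow up the cost.

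First I would fix a data structure: maintain each $\mathcal{S}_{t'}(s)$ as a hash set (or, equivalently, as a bit-vector indexed by a dynamically allocated dictionary over visited states), so that both membership queries and insertions cost $O(1)$ amortized. I would also assume the standard sparse representation of the transition kernel, i.e., for each $(s,a)$ the list $\{s' : p(s'|s,a)>0\}$ can be enumerated in time proportional to its length. Neither assumption goes beyond what is usually charged to a one-step Bellman evaluation, so the accounting is consistent with the $O(\mathcal{N}A)$ cost quoted for a single application of the Bellman operator elsewhere in the paper.

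Next I would analyze one level. At iteration $t'$, the outer loop ranges over the $|\mathcal{S}_{t'-1}(s)|$ states already discovered. For each such state $s_{t'-1}$, we iterate over all $A$ actions and, for each action, scan the support of $p(\cdot|s_{t'-1},a)$, inserting each element into $\mathcal{S}_{t'}(s)$. By definition of $\mathcal{N}=S^{Tot}_1$, the union $\bigcup_a \mathrm{supp}\,p(\cdot|s_{t'-1},a)$ has size at most $\mathcal{N}$, so $\sum_a |\mathrm{supp}\,p(\cdot|s_{t'-1},a)| \le A\mathcal{N}$. With $O(1)$ insertions, the total work at state $s_{t'-1}$ is $O(A\mathcal{N})$, giving a per-level cost of $O(A\mathcal{N}\,|\mathcal{S}_{t'-1}(s)|)$.

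Finally I would sum over $t'=2,\dots,\multistep+1$:
\begin{equation*}
\sum_{t'=2}^{\multistep+1} O\bigl(A\mathcal{N}\,|\mathcal{S}_{t'-1}(s)|\bigr) = O\!\left(A\mathcal{N}\sum_{t'=1}^{\multistep} |\mathcal{S}_{t'}(s)|\right) = O\bigl(\mathcal{N}A\,S^{Tot}_{\multistep}(s)\bigr),
\end{equation*}
using the definition $S^{Tot}_{\multistep}(s)=\sum_{t'=1}^{\multistep} S_{t'}(s)$. The main obstacle, if any, is the data-structure argument that repeated insertions of already-present states do not multiply the bound; once one fixes a hash-set representation so that duplicate insertions cost $O(1)$ (and crucially so that we never ``re-expand'' a state more than once at its level), the layer-by-layer accounting above goes through essentially mechanically.
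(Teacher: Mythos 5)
Your proof is correct and follows essentially the same route as the paper's: both bound the per-state work at each level by $O(\mathcal{N}A)$ under a hash-table assumption for constant-time set insertions, and then sum $O(\mathcal{N}A\,|\mathcal{S}_{t'-1}(s)|)$ over the levels $t'$ to recover $O(\mathcal{N}A\,S^{Tot}_{\multistep}(s))$. Your treatment is marginally more explicit about the data-structure accounting, but the decomposition and the final summation are identical to the paper's argument.
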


\begin{proof}
Calculating the set $\brc*{s' : p (s'\mid s,a)>0}$ cost is upper bounded by $O(\mathcal{N})$ as we need to enumerate at most all possible $O(\mathcal{N})$ next-states. We assume that $\mathcal{S}_{t'} = \mathcal{S}_{t'}(s) \cup \brc*{s' : p (s'\mid s,a)>0}$ can be done by $O(\mathcal{N})$, e.g., when using a hash-table for saving $\mathcal{S}_{t'}$ in memory. As we need to repeat this operation $A$ times, the complexity for each $t'\in \brc*{2,3,.,,\multistep+1}$ is upper bounded by $O(\mathcal{N}A\abs{S_{t'-1}(s)})$. Summing over all $t'$ we get that the computational complexity of the Forward pass is upper bounded by
\begin{align*}
    O\br*{\mathcal{N}A\sum_{t'=2}^{h+1} \abs{S_{t'-1}(s)}} =O\br*{\mathcal{N}A\abs*{S^{Tot}_{\multistep}(s)}},
\end{align*}
where the second equality holds by definition of total number of accessible states in $\multistep$ time steps.
% Since, by definition, $|\mathcal{S}_{t'-1}|< O(S_\multistep)$, we get that the complexity of each iteration of the forward pass is $O(AS_\multistep S_1)$. As we perform $t_c = h_c-t \leq \multistep$ iterations, we conclude that the complexity of the forward pass is bounded by,
% \begin{align}
%     O(\multistep AS_\multistep S_1).\label{eq: complexity of forward pass}
% \end{align}
\end{proof}

The computational complexity of the backward passage is the computational complexity of Backward Induction, which is the total number of states in which actions can be taken times the number of actions per state, i.e.,
\begin{align}
    O(A \mathcal{N} S^{Tot}_{\multistep}(s)).\label{eq: complexity of backward pass},
\end{align}
where the origin  of the factor $\mathcal{N}$ is due to the need to calculate the sum $\sum_{s'}p(s'\mid s,a)V(s')$ for each $(s,a)$ pair, and, by definition, this sum contain at most $\mathcal{N}$ elements.

Using Proposition~\ref{proposition: computational complexity of forward pass} and \eqref{eq: complexity of backward pass} we get that for every $t\in [H]$, the computational complexity of calculating an $h$-lookahead policy from a state $s$ using the Forward-Backward DP is bounded by,
\begin{align*}
    O( (\mathcal{N}A +\mathcal{N} A) S^{Tot}_{\multistep}(s)) =  O( \mathcal{N} A S^{Tot}_{\multistep}),
\end{align*}
where the last relation holds by definition, $S^{Tot}_{\multistep} = \max_{s} S^{Tot}_{\multistep}(s).$

% Since in each episode there are $H$ time step, we get that the computational complexity per episode of $h$-RTDP with Forward-Backward DP as implementing the calculation of the $h_c-t$ lookahead policy from every time step $t\in[H]$ is
% \begin{align*}
%     O(H\multistep AS_\multistep\mathcal{N}).
% \end{align*}

Finally, the \emph{space complexity} of Forward-Backward DP is the space required the save in memory the possible visited states  in $\multistep$ time steps (their identity in the Forward-Pass and their values in the Backward-Pass). By definition it is at most $O(\multistep S_\multistep).$

\newpage

%####################################################################################################################
%#############################################MULTISTEP WITH EXACT MODEL#####################################################
%####################################################################################################################

%Notations we should pay attention to at the end:
%\begin{itemize}
%    \item Write $1+n\multistep$ and not $1+\multistep n$.
%\end{itemize}

\section{Real Time Dynamic Programming with Lookahead}\label{supp: multistep rtdp}

This section contains the full proofs of all the results of Section~\ref{sec: mutiple step rtdp} in chronological order.

\multistepRtdpProperties*
\begin{proof}
Both claims are proven using induction. 

\paragraph{{\em (i)}} 
Let $n\in\{0\}\cup[\frac{H}{h}]$. By the initialization, $\forall s,n,\  V^*_{nh+1}(s) \leq V^0_{nh+1}(s)$. Assume the claim holds for the first $(k-1)$ episodes. Let $s_t^{k}$ be the state of the algorithm at a time step $t$ of the $k$'th episode at which a value update takes place, i.e.,~$t=nh+1$, for some $n\in\{0\}\cup[\frac{H}{h}]$. By the value update of Algorithm~\ref{algo: multi step RTDP} and~\eqref{eq:multistep bellman}, we have
\begin{align*}
     \bar{V}_t^{k}(s_t^{k}) =  (T^h \bar{V}^{k-1}_{h_c})(s_t^k) = (T^h \bar{V}^{k-1}_{t+h})(s_t^k) \geq (T^h V^*_{t+h})(s_t^k) = V^*_t(s_t^k).
\end{align*}
The inequality holds by the induction hypothesis and the monotonicity of $T^h$, a consequence of the monotonicity of $T$, the optimal Bellman operator~\cite{bertsekas1996neuro}. The last equality holds by the fact that the recursion is satisfied by the optimal value function~\eqref{eq:multistep bellman}. Thus, the induction step is proven for the first claim.

\paragraph{{\em (ii)}} 
Let $n\in\{0\}\cup[\frac{H}{h}]$ and $t=nh+1$ be a time step in which a value update takes place. To prove the base case, we use the optimistic initialization. Let $s^1_t$ be the state of the algorithm in the $t$'th time step of the first episode. By the update rule, we have
\begin{align*}
    \bar{V}^1_t(s^1_t) &= (T^h \bar{V}^0_{t+h})(s_t^0) = \max_{a_0,\ldots,a_{h-1}}\E\left[\sum_{t'=0}^{h-1}r(s_{t'},a_{t'}) +\bar{V}^0_{t+h}(s_h)\mid s_0=s_t^0\right] \\
    &\stackrel{\text{(a)}}{\le} h+H-(t+h-1) = H-(t-1) \stackrel{\text{(b)}}{=}\bar{V}^0_t(s^1_t).
\end{align*}
{\bf (a)} holds since $r(s,a)\in [0,1]$ and by the optimistic initialization. \\ 
{\bf (b)} observe that $H-(t-1)$ is the value of the optimistic initialization. \\

Assume that the claim holds for the first $(k-1)$ episodes. Let $s_t^{k}$ be the state of the algorithm at a time step $t$ of the $k$'th episode at which a value update takes place, i.e.,~$t=nh+1$, for some $n\in\{0\}\cup[\frac{H}{h}]$. By the value update rule of Algorithm~\ref{algo: multi step RTDP}, we have $\bar{V}_t^{k}(s_t^{k}) = (T^h \bar{V}^{k-1}_{h_c})(s_t^k) = (T^h \bar{V}^{k-1}_{t+h})(s_t^k)$. If $s_t^k$ was previously updated, let $\bar{k}$ be the last episode in which the update occurred, i.e.,~$\bar{V}^{\bar k}_t(s_t^k)=(T^h\bar{V}^{\bar{k}-1}_{t+h})(s_t^k)=\bar{V}^{k-1}_t(s_t^k)$. By the induction hypothesis, we have that $\forall s,t,\ \bar{V}^{\bar{k}-1}_t(s)\geq \bar{V}^{k-1}_t(s)$. Using the monotonicity of $T^h$, we may write 
\begin{align*}
    \bar{V}_t^{k}(s_t^{k}) = (T^h \bar{V}^{k-1}_{t+h})(s_t^k) \leq (T^h \bar{V}^{\bar{k}-1}_{t+h})(s_t^k) = \bar{V}^{k-1}_t(s_t^k).
\end{align*}
Thus, $\bar{V}_t^{k}(s_t^{k}) \leq \bar{V}^{k-1}(s_t^k)$ and the induction step is proved. If $s_t^k$ was not previously updated, then $\bar{V}_t^{k-1}(s_t^{k})=\bar{V}_t^{0}(s_t^{k})$. In this case, the induction hypothesis implies that $\forall s', \bar{V}_{t+h}^{k-1}(s')\le \bar{V}_{t+h}^{0}(s')$ and the result is proven similarly to the base case.
\end{proof}

\MultistepRdtpExpectedValueUpdate*

\begin{proof}
Let $n\in\{0\}\cup[\frac{H}{h}]$ and $t=n\multistep+1$ be a time step in which a value update takes place. By the definition of the update rule, the following holds for the value update at the visited state $s_t^k$:
\begin{align}
     \bar{V}_t^{k}(s_t^k)  &= (T^h \bar{V}_{t+h}^{k-1})(s_t^k) \label{eq: first relation before conditional exp}\\
                           &= (T^{\pi_{k}(t)}\cdot\cdot\cdot T^{\pi_{k}(t+h-1)}\bar{V}_{t+h}^{k-1})(s_t^k) = \E\brs*{\sum_{t'=t}^{t+h-1}r(s_{t'},a_{t'}) + \bar{V}^{k-1}_{t+h}(s_{t+h}) \mid \pi_k,s_t=s_t^k}\nonumber \\
                           &\stackrel{\text{(a)}}{=} \E\brs*{\sum_{t'=t}^{t+h-1}r(s^k_{t'},a^k_{t'}) + \bar{V}^{k-1}_{t+h}(s_{t+h}^k) \mid \mathcal{F}_{k-1},s_t^k}. \label{eq: model updates to filtration}
\end{align}
{\bf (a)} We prove this passage for each reward element $r(s_{t'},a_{t'})$ in the expectation. The proof for the expectation of $\bar{V}^{k-1}_{t+h}(s_{t+h})$ follows in a similar manner. Since the first expectation is w.r.t.~the dynamics of the true model, a consequence of updating by the true model, for any $t'\geq t$, we may write
\begin{align*}
    \E\brs*{r(s_{t'},a_{t'}) \mid \pi_k,s_t=s_t^k} &= \sum_{s_{t'}\in\mathcal{S}}p(s_{t'}\mid s_t=s_t^k,\pi_k)r(s_{t'},\pi_k(s_{t'},t'))\\
    &\stackrel{\text{(i)}}{=}\sum_{s_{t'}^k\in\mathcal{S}}p(s_{t'}^k\mid s_t^k,\mathcal{F}_{k-1})r(s^k_{t'},\pi_k(s_{t'}^k,t')) = \E\brs*{r(s^k_{t'},a^k_{t'}) \mid \mathcal{F}_{k-1},s_t^k},
\end{align*}
where $p(s_{t'}\mid s_t^k,\pi_k)$ is the probability of starting at state $s_t^k$, following $\pi_k$, and reaching state $s_{t'}$ in $t'-t$ steps.  

\noindent
{\bf (i)} We use the fact that $p(s_{t'}\mid s_t^k,\pi_k) = p(s_{t'}^k\mid s_t^k,\mathcal{F}_{k-1})$, in words, given the policy $\pi_k$ (which is $\mathcal{F}_{k-1}$ measurable) and $s_t^k$ the probability for a state $s_{t'}^k$ with $t'\geq t$ is independent of the rest of the history. 

Now that we proved {\bf (a)}, we take the conditional expectation of~\eqref{eq: first relation before conditional exp} w.r.t.~$\mathcal{F}_{k-1}$ and use the tower rule to obtain
\begin{align}
\label{eq:temp0}
    \E\brs*{\bar{V}_t^{k}(s_t^k)\mid \mathcal{F}_{k-1}} = \E\brs*{\sum_{t'=t}^{t+h-1}r(s^k_{t'},a^k_{t'}) + \bar{V}^{k-1}_{t+h}(s_{t+h}^k) \mid \mathcal{F}_{k-1}}.
\end{align}
Summing~\eqref{eq:temp0} for all $n\in\{0\}\cup[\frac{H}{\multistep}]$, and using the linearity of expectation and the fact that $\bar{V}^{k}_{H+1}(s)=0$ for all $s,k$, we have
\begin{align}
    &\sum_{n=0}^{\frac{H}{\multistep}-1 } \E\brs*{\bar{V}_{n\multistep+1}^{k}(s_{nh+1}^k) \mid \F_{k-1}} 
      = \E\brs*{\sum_{t=1}^H r(s_t^k,a_t^k)\mid \F_{k-1}} + \sum_{n=1}^{\frac{H}{\multistep}-1 } \E\brs*{ \bar{V}_{n\multistep+1}^{k-1}(s_{n\multistep+1}^k) \mid \F_{k-1}} \nonumber\\
     \iff & \bar{V}^{k}_{1}(s_1^k)+\sum_{n=1}^{\frac{H}{\multistep}-1 } \E\brs*{\bar{V}_{n\multistep+1}^{k}(s_t^k) \mid \F_{k-1}} = \E\brs*{\sum_{t=1}^H r(s_t^k,a_t^k)\mid \F_{k-1}} + \sum_{n=1}^{\frac{H}{\multistep}-1 } \E\brs*{ \bar{V}_{n\multistep+1}^{k-1}(s_{n\multistep+1}^k) \mid \F_{k-1}} \nonumber\\
     \iff & \bar{V}^{k}_{1}(s_1^k)+\sum_{n=1}^{\frac{H}{\multistep}-1 } \E\brs*{\bar{V}_{n\multistep+1}^{k}(s_t^k) \mid \F_{k-1}} = V^{\pi_k}(s_1^k) + \sum_{n=1}^{\frac{H}{\multistep}-1 } \E\brs*{ \bar{V}_{n\multistep+1}^{k-1}(s_{n\multistep+1}^k) \mid \F_{k-1}} \nonumber\\
     \iff & \bar{V}^{k}_{1}(s_1^k) - V^{\pi_k}(s_1^k) = \sum_{n=1}^{\frac{H}{\multistep}-1 } \E\brs*{ \bar{V}_{n\multistep+1}^{k-1}(s_{nh+1}^k) -\bar{V}_{n\multistep+1}^{k}(s_{nh+1}^k) \mid \F_{k-1}}. \label{eq: on trajecotry regret excat}
\end{align}
The second line holds by the fact that $s_1^k$ is measurable w.r.t.~$\mathcal{F}_{k-1}$ The third line holds since $$V^{\pi_k}_1(s_1^k) = \E\brs*{\sum_{t=1}^H r(s_t^k,a_t^k)\mid s_1^k,\pi_k } =\E\brs*{\sum_{t=1}^H r(s_t^k,a_t^k)\mid \F_{k-1}}.$$

Applying Lemma~\ref{lemma: On trajectory regret to Uniform regret} from Appendix~\ref{sec:useful-lemma} with $g_t^k=\bar V^k_t$ for $t=n\multistep+1$, we obtain
\begin{align*}
    \eqref{eq: on trajecotry regret excat} = \sum_{n=1}^{\frac{H}{\multistep}-1}\sum_{s\in\mathcal{S}} \bar{V}^{k-1}_{n\multistep+1}(s) - \E[\bar{V}^{k}_{n\multistep+1}(s)\mid \F_{k-1}],
\end{align*}
which concludes the proof. Note that the update of $\bar V^k_t$ occurs only at the visited state $s_t^k$ and the update rule uses $\bar V^{k-1}_{t+h}$, i.e.,~it is measurable w.r.t.~$\mathcal{F}_{k-1}$, and thus, it is valid to apply Lemma~\ref{lemma: On trajectory regret to Uniform regret}.
\end{proof}

\TheoremRegretMultistepRTDP*

\begin{proof}
We start by proving {\bf Claim (1)}. We know that the following bounds hold on the regret:
\begin{align}
     \Regret(K) &\eqdef \sum_{k=1}^K V^*_1(s^k_1)- V^{\pi_k}_1(s^k_1) \stackrel{\text{(a)}}{\leq} \sum_{k=1}^K \bar{V}_1^{k}(s^k_1)- V^{\pi_k}_1(s^k_1) \nonumber \\ 
     &\stackrel{\text{(b)}}{=} \sum_{k=1}^K \sum_{n=1}^{\frac{H}{\multistep}-1}\sum_{s\in\mathcal S} \bar{V}^{k-1}_{n\multistep+1}(s) - \E[\bar{V}^{k}_{n\multistep+1}(s)\mid \F_{k-1}]. \label{eq: regret bound multiple step rtdp}
\end{align}
{\bf (a)} is by the optimism of the value function (Lemma~\ref{lemma:multistep rtdp properties}), and {\bf (b)} is by Lemma~\ref{lemma: multistep RTDP expected value difference}. 

We would like to show that~\eqref{eq: regret bound multiple step rtdp} is the regret of a Decreasing Bounded Process (DBP). We start by defining
\begin{align}
    X_k \eqdef \sum_{n=1}^{\frac{H}{\multistep}-1}\sum_{s\in\mathcal S} \bar{V}^{k}_{n\multistep+1}(s). \label{def: DBP multiple rtdp}
\end{align}
We now prove that $\brc*{X_k}_{k\geq 0}$ is a DBP. Note that $\brc*{X_k}_{k\geq 0}$

\begin{enumerate}
    \item is decreasing, since $\forall s,t,\;\bar{V}^k_t(s)\leq \bar{V}^{k-1}_t(s)$ by Lemma~\ref{lemma:multistep rtdp properties}, and thus, their sum is also decreasing, and 
    \item is bounded since $\forall s,t\;\bar{V}^k_t(s)\geq V_t^*(s) \geq 0$ by Lemma~\ref{lemma:multistep rtdp properties}, and thus, the sum is bounded from below by $0$.
\end{enumerate}

We can show that the initial value $X_0$ is also bounded as
\begin{align*}
    X_0 = \sum_{n=1}^{\frac{H}{\multistep}-1}\sum_{s\in\mathcal S} \bar{V}^{0}_{n\multistep+1}(s)\leq \sum_{n=1}^{\frac{H}{\multistep}-1}\sum_{s\in\mathcal S} H= \frac{SH(H-\multistep)}{\multistep}.
\end{align*}
Using the linearity of expectation and the definition~\eqref{def: DBP multiple rtdp}, we observe that $\eqref{eq: regret bound multiple step rtdp}$ can be written as
\begin{align*}
    \Regret(K) \leq \eqref{eq: regret bound multiple step rtdp} = \sum_{k=1}^K X_{k-1} - \E[X_k\mid \mathcal{F}_{k-1}],
\end{align*}
which is regret of a DBP. Applying the bound on the regret of a DBP, Theorem~\ref{theorem: regret of decreasing bounded process}, we conclude the proof of the first claim. \\

\noindent
We now prove {\bf Claim (2)}. Here we use a different technique than the one used in~\cite{efroni2019tight}. The technique allows us to prove uniform-PAC bounds for the approximate versions of $h$-RTDP described in Section~\ref{sec: approximate mutiple step rtdp}. For these approximate versions, the uniform-PAC result is not a corollary of the regret bound and more careful analysis should be used.

For all $\epsilon>0$, the following relations hold:
\begin{align}
    \ind\brc*{V_1^*(s^k_1)-V_1^{\pi_k}(s^k_1) \geq \epsilon} \epsilon &\stackrel{\text{(a)}}{\leq}  \ind\brc*{\bar{V}_1^{k}(s^k_1)-V_1^{\pi_k}(s^k_1) \geq \epsilon} \epsilon \nonumber\\
    &\stackrel{\text{(b)}}{\leq} \ind\brc*{\bar{V}_1^{k}(s^k_1)-V_1^{\pi_k}(s^k_1) \geq \epsilon} \br*{\bar{V}_1^{k}(s^k_1)-V_1^{\pi_k}(s^k_1)}\nonumber\\
    &\stackrel{\text{(c)}}{=} \ind\brc*{\bar{V}_1^{k}(s^k_1)-V_1^{\pi_k}(s^k_1) \geq \epsilon}\br*{\sum_{n=1}^{\frac{H}{\multistep}-1}\sum_{s\in\mathcal S} \bar{V}^{k-1}_{n\multistep+1}(s) - \E[\bar{V}^{k}_{n\multistep+1}(s)\mid \F_{k-1}]}\nonumber\\
    &\stackrel{\text{(d)}}{=} \ind\brc*{\bar{V}_1^{k}(s^k_1)-V_1^{\pi_k}(s^k_1) \geq \epsilon}\br*{X_{k-1} -\E[X_{k}\mid \mathcal{F}_{k-1}] }. \label{eq: central pac multiple step exact}
\end{align}
{\bf (a)} holds since for all $t,s$, $\bar{V}^{k}_t(s)\geq V^*_t(s)$ by Lemma~\ref{lemma:multistep rtdp properties}. {\bf (b)} holds by the indicator function. {\bf (c)} holds by Lemma~\ref{lemma: multistep RTDP expected value difference}. {\bf (d)} holds by the definition of $X_k$ from~\eqref{def: DBP multiple rtdp} and the linearity of expectation.

Let define $N_\epsilon(K) = \sum_{k=1}^K \ind\brc*{V_1^{*}(s^k_1)-V_1^{\pi_k}(s^k_1) \geq \epsilon}$ as the  number of times $V^*_1(s_1^k)-V^{\pi_k}_1(s_1^k) \geq \epsilon$ at the first $K$ episodes. For all $\epsilon>0$, we may write
\begin{align*}
    N_\epsilon(K) \epsilon &\stackrel{\text{(a)}}{=} \sum_{k=1}^K \ind\brc*{V_1^{*}(s^k_1)-V_1^{\pi_k}(s^k_1) \geq \epsilon} \epsilon \stackrel{\text{(b)}}{\leq} \sum_{k=1}^K \ind\brc*{\bar{V}_1^{k}(s^k_1)-V_1^{\pi_k}(s^k_1) \geq \epsilon}\br*{X_{k-1} -\E[X_{k}\mid \mathcal{F}_{k-1}] }\\
    &\stackrel{\text{(c)}}{\leq} \sum_{k=1}^K X_{k-1} -\E[X_{k}\mid \mathcal{F}_{k-1}],
\end{align*}
{\bf (a)} holds by the definition of $N_\epsilon(K)$. {\bf (b)} follows from~\eqref{eq: central pac multiple step exact}. {\bf (c)} holds because $\brc*{X_{k}}_{k\geq 0}$ is a DBP, and thus, ${X_{k-1} - \E[X_{k}\mid \mathcal{F}_{k-1}] \geq 0}$ a.s. Therefore, the following relation holds:
\begin{align*}
     \brc*{\forall K>0: \sum_{k=1}^K X_{k-1} -\E[X_{k}\mid \mathcal{F}_{k-1}] \leq \frac{9SH(H-\multistep)}{\multistep} \ln\frac{3}{\delta} } \subseteq \brc*{\forall \epsilon>0: N_\epsilon(K) \epsilon \leq \frac{9SH(H-\multistep)}{\multistep} \ln\frac{3}{\delta} },
\end{align*}
from which we obtain that for any $K>0$,
\begin{align*}
    \Pr\br*{\forall \epsilon>0: N_\epsilon(K) \epsilon \leq \frac{9SH(H-\multistep)}{\multistep} \ln\frac{3}{\delta} } \geq \Pr\br*{\forall K>0: \sum_{k=1}^K X_{k-1} -\E[X_{k}\mid \mathcal{F}_{k-1}] \leq \frac{9SH(H\multistep)}{\multistep} \ln\frac{3}{\delta} } \stackrel{\text{(a)}}{\geq} 1- \delta. 
\end{align*}
{\bf (a)} holds because of the bound on the regret of DBP (see Theorem~\ref{theorem: regret of decreasing bounded process}). Equivalently, for any $K>0$, 
\begin{align}
    \Pr\br*{\exists \epsilon>0: N_\epsilon(K) \epsilon \geq \frac{9SH(H-\multistep)}{\multistep} \ln\frac{3}{\delta} } \leq \delta.\label{eq: final pac bound multistep rtdp}
\end{align}
Note that for all $\epsilon>0$, $K_1\geq K_2$, $\ind\brc*{N_\epsilon(K_2)\epsilon \geq C} = 1$ implies $\ind\brc*{N_\epsilon(K_1)\epsilon \geq C} = 1$, and thus, $\ind\brc*{N_\epsilon(K)\epsilon \geq C} \leq \lim_{K\rightarrow \infty} \ind\brc*{N_\epsilon(K)\epsilon \geq C}$. Furthermore, $\ind\brc*{N_\epsilon(K)\epsilon \geq C}\geq 0$ by definition. Thus, we can apply the Monotone Convergence Theorem to conclude the proof:
\begin{align*}
    &\Pr\br*{ \exists \epsilon>0: N_\epsilon \epsilon \geq \frac{9SH(H-\multistep)}{\multistep} \ln\frac{3}{\delta} }=\Pr\br*{\lim_{K\rightarrow \infty} \brc*{\exists \epsilon>0: N_\epsilon(K) \epsilon \geq \frac{9SH(H-\multistep)}{\multistep} \ln\frac{3}{\delta} }}\\
    &=\E\brs*{\lim_{K\rightarrow \infty} \ind\brc*{\exists \epsilon>0: N_\epsilon(K) \epsilon \geq \frac{9SH(H-\multistep)}{\multistep} \ln\frac{3}{\delta} }} \stackrel{\text{(a)}}{=} \lim_{K\rightarrow \infty}\E\brs*{\ind\brc*{\exists \epsilon>0: N_\epsilon(K) \epsilon \geq \frac{9SH(H-\multistep)}{\multistep} \ln\frac{3}{\delta}}}\\
    &= \lim_{K\rightarrow \infty}\Pr\br*{ \exists \epsilon>0: N_\epsilon(K) \epsilon \geq \frac{9SH(H-\multistep)}{\multistep} \ln\frac{3}{\delta} } \stackrel{\text{(b)}}{\leq} \delta.
\end{align*}
{\bf (a)} is by the Monotone Convergence Theorem by which $\E[\lim_{k\rightarrow\infty} X_k] =\lim_{k\rightarrow\infty}\E[ X_k]$, for $X_k\geq 0$ and $X_k\leq \lim_{k\rightarrow \infty} X_k$. {\bf (b)} holds by~\eqref{eq: final pac bound multistep rtdp}.
%
% The second claim is a corollary of the first claim and we prove using similar technique as in \cite{efroni2019tight}, Corollary~5, using our improved regret bound. Let $N_\epsilon = \sum_{k=1}^K \mathbbm{1}\brc*{V^*_1(s_1^k)-V^{\pi_k}(s_1^k) \geq \epsilon}$. The following relation holds $\forall \epsilon>0$
% \begin{align*}
%     N_\epsilon \epsilon \leq \Regret(K). 
% \end{align*}
% Thus,
% \begin{align*}
%     \brc*{\forall \epsilon>0: N_\epsilon \epsilon \leq \frac{9SH(H\multistep)}{\multistep} \ln\frac{3}{\delta} } \subseteq \brc*{\exists K>0: \Regret(K) \leq \frac{9SH(H\multistep)}{\multistep} \ln\frac{3}{\delta} }.
% \end{align*}
%
% Which results in
% \begin{align*}
%     \Pr\br*{\forall \epsilon>0: N_\epsilon \epsilon \leq \frac{9SH(H\multistep)}{\multistep} \ln\frac{3}{\delta} } \leq \Pr\br*{\exists K>0: \Regret(K) \leq \frac{9SH(H\multistep)}{\multistep} \ln\frac{3}{\delta} } \leq \delta,
% \end{align*}
% and the third relation holds by the first claim of this theorem.
\end{proof}

%####################################################################################################################
%#############################################APPROXIMATE MODEL#####################################################
%####################################################################################################################

\newpage
\section{$\multistep$-RTDP with Approximate Model}\label{supp: multistep rtdp approximate model}

\begin{algorithm}
\begin{algorithmic}
\caption{$h$-RTDP with Approximate Model ($h$-RTDP-AM)}
\label{algo: multi step approx model}
    \STATE init: $\forall s\in \mathcal S,\; \forall n\in \{0\}\cup[\frac{H}{\multistep}],\; \bar{V}^0_{n \multistep +1}(s)=H-n\multistep$
    \FOR{$k\in[K]$}
        \STATE Initialize $s^k_1$
        \FOR{$t\in[H]$}
            \IF{$(t-1)  \mod \multistep == 0 $}
                \STATE $h_c = t + h$
                \STATE $\bar{V}^{k}_{t}(s_t^k) = \hat{T}^{\multistep}\bar{V}^{k-1}_{h_c}(s_t^k)$
            \ENDIF
            \STATE $a_t^k\in \arg\max_a r(s_t^k,a) + \hat{p}(\cdot|s_t^k,a) \hat{T}^{h_c-t-1}\bar{V}^{k-1}_{h_c}$
            \STATE Act with $a_t^k$ and observe $s_{t+1}^k\sim p(\cdot \mid s_t^k,a_t^k)$ 
        \ENDFOR
    \ENDFOR
\end{algorithmic}
\end{algorithm}

Algorithm~\ref{algo: multi step approx model} contains the pseudocode of $h$-RTDP with approximate model. The algorithm is exactly the same as $h$-RTDP (Algorithm~\ref{algo: multi step RTDP}) with the model $p$ and optimal Bellman operator $T$ replaced by their approximations $\hat p$ and $\hat T$. Meaning, $h$-RTDP is agnostic whether it uses the true or approximate model.

We now provide the full proofs of all results in Section~\ref{sec: appr model} in their chronological order. We use the notation $\E_{\hat P}$ to denote expectation w.r.t. the approximate model, i.e., w.r.t. the dynamics $\hat{p}(s'\mid s,a)$ instead according to $p(s'\mid s,a)$.

\begin{restatable}{lemma}{rtdpApproximateModelProperties}
\label{lemma: approximate model properties}
For all $s\!\in\! \mathcal{S}$, $n\!\in\! \{0\}\cup[\frac{H}{\multistep}]$, and $k\in[K]$:
\begin{enumerate}[label=(\roman*)]
    \item Bounded / Optimism: $\hat{V}^*_{n\multistep +1}(s) \leq \bar{V}^k_{n\multistep +1}(s)$.
    \item Non-Increasing: $\bar{V}^{k}_{n\multistep +1}(s) \leq \bar{V}^{k-1}_{n\multistep +1}(s)$.
\end{enumerate}
\end{restatable}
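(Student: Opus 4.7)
The plan is to mirror the proof of Lemma~\ref{lemma:multistep rtdp properties} essentially verbatim, with the key observation that since $h$-RTDP-AM performs all of its value updates using the approximate Bellman operator $\hat T$ (and never $T$), the analysis carries through by simply replacing $T$ with $\hat T$ and $V^*$ with $\hat V^*$ throughout. Importantly, the fact that samples $s_{t+1}^k$ are drawn from the true model $p$ plays no role here, because these two claims are pointwise statements about the stored values $\bar V^k_{n\multistep+1}(s)$ and not about trajectory distributions; the true dynamics will only become relevant in the expected-decrease lemma (Lemma~\ref{lemma: RTDP approximate modle expected value difference}).

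For part \emph{(i)} (optimism relative to $\hat V^*$), I would proceed by induction on $k$. The base case is immediate: the optimistic initialization satisfies $\bar V^0_{n\multistep+1}(s) = H - n\multistep \geq \hat V^*_{n\multistep+1}(s)$ because rewards are in $[0,1]$ regardless of the transition kernel. For the inductive step, fix a time index $t = n\multistep + 1$ at which an update occurs, and let $s_t^k$ be the visited state. By the update rule of Algorithm~\ref{algo: multi step approx model}, $\bar V^k_t(s_t^k) = \hat T^\multistep \bar V^{k-1}_{t+\multistep}(s_t^k)$. Using the inductive hypothesis and the monotonicity of $\hat T^\multistep$ (which is inherited from the monotonicity of $\hat T$, exactly as in the exact case), this is at least $\hat T^\multistep \hat V^*_{t+\multistep}(s_t^k) = \hat V^*_t(s_t^k)$, where the last equality uses that $\hat V^*$ satisfies the multi-step Bellman equation~\eqref{eq:multistep bellman} with $p$ replaced by $\hat p$. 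For all states not equal to $s_t^k$, the stored value is unchanged, so the inductive hypothesis transfers directly.

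For part \emph{(ii)} (non-increasing), I would again induct on $k$. The base case uses the optimistic initialization and the fact that rewards lie in $[0,1]$: unrolling $\hat T^\multistep$ as a maximization of expected cumulative rewards under $\hat p$ of the form
\begin{equation*}
\bar V^1_t(s^1_t) = \max_{a_0,\dots,a_{\multistep-1}} \E_{\hat p}\!\left[\sum_{t'=0}^{\multistep-1} r(s_{t'},a_{t'}) + \bar V^0_{t+\multistep}(s_\multistep) \,\Big|\, s_0 = s_t^1\right]
\le \multistep + (H - (t+\multistep - 1)) = H - (t-1) = \bar V^0_t(s^1_t).
\end{equation*}
For the inductive step, for the visited state $s_t^k$ at an update time, let $\bar k \le k-1$ be the most recent prior episode in which $s_t^k$ was updated (if any); then $\bar V^{k-1}_t(s_t^k) = \hat T^\multistep \bar V^{\bar k - 1}_{t+\multistep}(s_t^k)$, while $\bar V^k_t(s_t^k) = \hat T^\multistep \bar V^{k-1}_{t+\multistep}(s_t^k)$, and monotonicity of $\hat T^\multistep$ together with $\bar V^{k-1}_{t+\multistep} \le \bar V^{\bar k - 1}_{t+\multistep}$ (inductive hypothesis) yields $\bar V^k_t(s_t^k) \le \bar V^{k-1}_t(s_t^k)$. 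The case where $s_t^k$ was never previously updated reduces, just as in the exact proof, to the base-case argument. For non-visited states the value is unchanged, so the property is preserved trivially.

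No real obstacle is expected here; the only conceptual point worth flagging is to be careful not to confuse $\hat V^*$ with $V^*$, since under an approximate model one cannot expect $\bar V^k \ge V^*$ in general. The actual gap between $\hat V^*$ and $V^*$ will be absorbed later into the $O(H^2 \epsilon_P K)$ additive term when translating Lemma~\ref{lemma: approximate model properties} into the regret bound of Theorem~\ref{theorem: regret rtdp approximate model}, via the simulation-lemma-style argument sketched in the paragraph preceding the theorem.
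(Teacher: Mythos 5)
Your proposal is correct and follows essentially the same route as the paper: induction on $k$ for both claims, using monotonicity of $\hat T^{\multistep}$, the Bellman recursion of $\hat V^*$ under $\hat p$, and the optimistic initialization, with optimism stated relative to $\hat V^*$ rather than $V^*$. Your remark that the true dynamics are irrelevant here and that the $\hat V^*$-vs-$V^*$ gap is deferred to the regret analysis matches exactly how the paper organizes the argument.
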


% \rtdpApproximateModelProperties*
\begin{proof}
Both claims are proven using induction. 

\paragraph{{\em (i)}} 
Let $n\in[0,\frac{H}{h}-1]$ and denote $\hat{T}, \hat V^*$ as the optimal Bellman operators and optimal value of the approximate MDP $(\mathcal{S},\mathcal{A},\hat{p},r,H)$. See that they satisfy usual Bellman equation~\ref{eq:multistep bellman}. 

By the initialization, $\forall s,t,\  \hat{V}^*_{1+hn}(s) \leq V^0_{1+hn}(s)$. Assume the claim holds for $k-1$ episodes. Let $s_t^{k}$ be the state the algorithm is at in the $t= 1+hn$ time step of the $k$'th episode, i.e., at a time step in which a value update is taking place. By the value update of Algorithm~\ref{algo: multi step approx model},
\begin{align*}
     \bar{V}_t^{k}(s_t^{k}) &=  (\hat{T}^h \bar{V}_{t+h})(s_t^k) \geq (\hat{T}^h \hat{V}^*_{t+h})(s_t^k) = \hat V^*_t(s_t^k).
\end{align*}
The second relation holds by the induction hypothesis and the monotonicity of $\hat{T}^h$, a consequnce of the monotonicity of $\hat{T}$, the optimal Bellman operator~\cite{bertsekas1996neuro}. The third relation holds by the recursion satisfied by the optimal value function~\eqref{eq:multistep bellman}. Thus, the induction step is proven for the first claim.

\paragraph{{\em (ii)}} 
Let $n\in[0,\frac{H}{\multistep}-1]$ and let $t= 1+\multistep n$ be a time step in which a value update is taking place. To prove the base case of the second claim we use the optimistic initialization. Let $s^1_t$ be the state the algorithm is at in the $t$'th time step of the first episode. By the update rule,
\begin{align*}
    \bar{V}^1_t(s^1_t)&= \; (\hat{T}^h \bar{V}^0_{t+h})(s_t^0) \\
    & \stackrel{(1)}{=} \max_{\pi_0,\pi_1,..,\pi_{h-1}} \; \E_{\hat{P'}}[\sum_{t'=0}^{h-1}r(s_t',\pi_{t'}(s_t')) +\bar{V}^0_{t+h}(s_h)\mid s_0=s_t^0] \\
    &\stackrel{(2)}{\le} h+H-(t+h-1) = H-(t-1) \stackrel{(3)}{=}\bar{V}^0_t(s^1_t).
\end{align*}
Relation $(1)$ is by the update rule (see Algorithm~\ref{algo: multi step approx model}), when the expectation is taken place w.r.t. the approximate model $\hat{P}$. Relation $(2)$ holds since $r(s,a)\in [0,1]$ and and by the optimistic initialization (see that for $t$ the values at times step $t+h$ were not updated and keep their initial value).  For $(3)$ observe that $H-(t-1)$ is the value of the optimistic initialization.

Assume the second claim holds for $k-1$ episodes. Let $s_t^{k}$ be the state that the algorithm is at in the $t$'th time step of the $k$'th episode. Again, assume that $t= 1+h n$, a time step in which a value update is being done. By the value update of Algorithm \ref{algo: multi step approx model}, we have
\begin{align*}
     \bar{V}_t^{k}(s_t^{k}) &= (\hat{T}^h \bar{V}^{k-1}_{t+h})(s_t^k).
\end{align*}
If $s_t^k$ was previously updated, let  $\bar{k}$ be the previous episode in which the update occured. By the induction hypothesis, we have that $\forall s,t,\ \bar{V}^{\bar{k}}_t(s)\geq \bar{V}^{k-1}_t(s)$. Using the monotonicity of $T^h$ (due to the monotonicity of the Bellman operator), 
\begin{align*}
    & (\hat{T}^h \bar{V}^{k-1}_{t+h})(s_t^k) \leq (\hat{T}^h \bar{V}^{\bar{k}}_{t+h})(s_t^k) = \bar{V}_t^{k-1}(s_t^k).
\end{align*}
Thus, $\bar{V}_t^{k}(s_t^{k}) \leq \bar{V}^{k-1}(s_t^k)$ and the induction step is proved. If $s_t^k$ was not previously updated, then $\bar{V}_t^{k-1}(s_t^{k})=\bar{V}_t^{0}(s_t^{k})$. In this case, the induction hypothesis implies that $\forall s', \bar{V}_{t+h}^{k-1}(s')\le \bar{V}_{t+h}^{0}(s')$ and the result is proven similarly to the base case.
\end{proof}

\begin{restatable}{lemma}{rdtpApproximateModelExpectedValueUpdate}
\label{lemma: RTDP approximate modle expected value difference}
The expected cumulative value update at the $k$'th episode of $h$-RTDP-AM satisfies the following relation:
\begin{align*}
    &\bar{V}_1^{k}(s^k_1)-V_1^{\pi_k}(s^k_1) = \frac{H(H-1)}{2}\epsilon_P \\
     &\quad\quad\quad +\sum_{n=1}^{\frac{H}{\multistep}-1}\sum_{s\in\mathcal{S}} \bar{V}^{k-1}_{n\multistep+1}(s) - \E[\bar{V}^{k}_{n\multistep+1}(s)\mid \F_{k-1}] .
\end{align*}
\vspace{-0.3cm}
\end{restatable}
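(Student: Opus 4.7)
The plan is to mimic the proof of Lemma~\ref{lemma: multistep RTDP expected value difference} while carefully accounting for the mismatch between the approximate model $\hat{p}$ (used by $\multistep$-RTDP-AM for both value updates and action selection) and the true dynamics $p$ (which generates the observed trajectory). The bias term $\frac{H(H-1)}{2}\epsilon_P$ in the statement should emerge from summing per-chunk Simulation-Lemma estimates.

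First, I would fix a chunk index $n\in\{0,\ldots,H/\multistep-1\}$ and $t=n\multistep+1$, and unfold the update $\bar{V}^k_t(s_t^k)=\hat{T}^\multistep \bar{V}^{k-1}_{t+\multistep}(s_t^k)$. Since $\pi_k$ is by construction the greedy policy of the $\hat{T}^\multistep$ operation, this equals $\hat{T}^\multistep_{\pi_k}\bar{V}^{k-1}_{t+\multistep}(s_t^k)=\E_{\hat{p}}\bigl[\sum_{t'=t}^{t+\multistep-1}r(s_{t'},\pi_k(s_{t'}))+\bar{V}^{k-1}_{t+\multistep}(s_{t+\multistep})\,\big|\,s_t=s_t^k,\pi_k\bigr]$. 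I would then split off the model-mismatch term by writing
\begin{equation*}
\bar{V}^k_t(s_t^k)=T^\multistep_{\pi_k}\bar{V}^{k-1}_{t+\multistep}(s_t^k)+\Delta^k_t(s_t^k),\qquad \Delta^k_t(s_t^k):=(\hat{T}^\multistep_{\pi_k}-T^\multistep_{\pi_k})\bar{V}^{k-1}_{t+\multistep}(s_t^k).
\end{equation*}
The first term is a true-dynamics expectation and is precisely the object that was handled in the exact proof; the second term carries all of the model error.

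Second, I would bound $|\Delta^k_t(s_t^k)|$ by a standard $h$-step simulation argument. Using $\|p(\cdot|s,a)-\hat{p}(\cdot|s,a)\|_1\leq \epsilon_P$, telescoping the one-step differences, and bounding $\bar{V}^{k-1}_{(n+1)\multistep+1}(s)\leq H-(n+1)\multistep$ (which follows from Lemma~\ref{lemma: approximate model properties} together with the optimistic initialization), I obtain $|\Delta^k_{n\multistep+1}(\cdot)|\leq \multistep\epsilon_P(H-(n+1)\multistep)+\tfrac{\multistep(\multistep-1)}{2}\epsilon_P$. Summing these per-chunk bounds over $n=0,\ldots,H/\multistep-1$ simplifies (after collecting the two arithmetic progressions) to exactly $\tfrac{H(H-1)}{2}\epsilon_P$.

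Third, for the $T^\multistep_{\pi_k}$ part I would reproduce the exact-case manipulation step by step. Taking $\E[\cdot\mid\mathcal{F}_{k-1}]$ of the identity, the tower rule converts $\E_p$-expectations over the $\multistep$-step rollout (starting at $s_{n\multistep+1}^k$) into $\E[\cdot\mid\mathcal{F}_{k-1}]$ over the sampled trajectory, since $\pi_k$ is $\mathcal{F}_{k-1}$-measurable and the rollout is governed by $p$. Summing over all chunks $n=0,\ldots,H/\multistep-1$, the reward pieces concatenate into $V_1^{\pi_k}(s_1^k)$, and the internal $\bar{V}^{k-1}_{n\multistep+1}$ boundary terms telescope, leaving exactly
\begin{equation*}
\bar{V}^k_1(s_1^k)-V_1^{\pi_k}(s_1^k)=\sum_{n=0}^{H/\multistep-1}\E[\Delta^k_{n\multistep+1}(s_{n\multistep+1}^k)\mid\mathcal{F}_{k-1}]+\sum_{n=1}^{H/\multistep-1}\E\bigl[\bar{V}^{k-1}_{n\multistep+1}(s_{n\multistep+1}^k)-\bar{V}^k_{n\multistep+1}(s_{n\multistep+1}^k)\mid\mathcal{F}_{k-1}\bigr].
\end{equation*}
Finally, I would invoke Lemma~\ref{lemma: On trajectory regret to Uniform regret} (justified because $\bar{V}^k_{n\multistep+1}$ and $\bar{V}^{k-1}_{n\multistep+1}$ agree off the visited state $s_{n\multistep+1}^k$ and the update is $\mathcal{F}_{k-1}$-measurable given $s_{n\multistep+1}^k$) to replace the on-trajectory conditional sum by the uniform sum $\sum_n\sum_s\bar{V}^{k-1}_{n\multistep+1}(s)-\E[\bar{V}^k_{n\multistep+1}(s)\mid\mathcal{F}_{k-1}]$. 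Combined with the per-chunk bound on $\Delta$ summed above, this yields the stated identity. The main obstacle is the careful arithmetic in step two that produces precisely $\tfrac{H(H-1)}{2}\epsilon_P$ (rather than a looser $O(H^2\epsilon_P)$); this relies on using the sharp time-dependent bound $H-(n+1)\multistep$ on $\bar{V}^{k-1}$ rather than the uniform bound $H$.
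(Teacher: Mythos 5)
Your proposal follows essentially the same route as the paper's proof: the same add-and-subtract decomposition into a true-model expectation plus an $\multistep$-step model-mismatch term, the same simulation-style bound using the sharp time-dependent value bound $H-(n+1)\multistep$ (the paper's Lemma~\ref{lemma: model error propogation}), the same arithmetic collapsing the per-chunk errors to $\tfrac{H(H-1)}{2}\epsilon_P$ (the paper's Lemma~\ref{lemma: algebraic bound for approximate model}), and the same telescoping followed by Lemma~\ref{lemma: On trajectory regret to Uniform regret}. The only caveat is that, exactly as in the paper's own derivation, the model-mismatch term is controlled by an upper bound rather than evaluated exactly, so what is really established is the relation with ``$\leq$'' in place of ``$=$'' --- which is all that is used downstream in Theorem~\ref{theorem: regret rtdp approximate model}.
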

% \rdtpApproximateModelExpectedValueUpdate*

\begin{proof}
Let $n\in[0,\frac{H}{\multistep}-1]$ and let $t= 1+\multistep n$ be a time step in which a value update is taking place. By the definition of the update rule, the following holds for the update at the visited state $s_t^k$:

\begin{align}
     &\bar{V}_t^{k}(s_t^k)  = (\hat{T}^\multistep \bar{V}_{t+\multistep}^{k-1})(s_t^k) \label{eq: approximate model first relation theorem}\\
     &= (\hat{T}^{\pi_{k}(t)}\cdot\cdot\cdot \hat{T}^{\pi_{k}(t+\multistep-1)}\bar{V}_{t+\multistep}^{k-1})(s_t^k)\nonumber\\
                           &= \E_{P'}\brs*{\sum_{t'=t}^{t+\multistep-1}r(s^k_{t'},a^k_{t'}) + \bar{V}^{k-1}_{t+\multistep}(s_{t+\multistep}^k) \mid \pi_k,s_t^k} \nonumber \\
                          &= \E \brs*{\sum_{t'=t}^{t+\multistep-1}r(s^k_{t'},a^k_{t'}) + \bar{V}^{k-1}_{t+\multistep}(s_{t+\multistep}^k) \mid \pi_k,s_t^k}\nonumber\\
                          &\quad + \E_{P'}\brs*{\sum_{t'=t}^{t+\multistep-1}r(s^k_{t'},a^k_{t'}) + \bar{V}^{k-1}_{t+\multistep}(s_{t+\multistep}^k) \mid \pi_k,s_t^k} - \E \brs*{\sum_{t'=t}^{t+\multistep-1}r(s^k_{t'},a^k_{t'}) + \bar{V}^{k-1}_{t+\multistep}(s_{t+\multistep}^k) \mid \pi_k,s_t^k}\nonumber\\
                          &= \E \brs*{\sum_{t'=t}^{t+\multistep-1}r(s^k_{t'},a^k_{t'}) + \bar{V}^{k-1}_{t+\multistep}(s_{t+\multistep}^k) \mid \pi_k,s_t^k}\nonumber\\
                          &\quad + \sum_{t'=t}^{t+\multistep - 1}\sum_{s_{t'}} \br*{P^{\pi_k}(s_{t'} \mid s_t^k) - \hat{P}^{\pi_k}(s_{t'} \mid s_t^k)}r(s^k_{t'},a^k_{t'}) + \sum_{s_{t+\multistep}} \br*{P^{\pi_k}(s_{t+\multistep} \mid s_t^k) - \hat{P}^{\pi_k}(s_{t+\multistep} \mid s_t^k)}\bar{V}^{k-1}_{t+\multistep}(s_{t+\multistep}^k) )\nonumber\\
                          &\leq \E \brs*{\sum_{t'=t}^{t+\multistep-1}r(s^k_{t'},a^k_{t'}) + \bar{V}^{k-1}_{t+\multistep}(s_{t+\multistep}^k) \mid \pi_k,s_t^k}\nonumber\\
                          &\quad + \sum_{t'=t}^{t+\multistep - 1}\sum_{s_{t'}} \bra*{P^{\pi_k}(s_{t'} \mid s_t^k) - \hat{P}^{\pi_k}(s_{t'} \mid s_t^k)} + (H-(t+\multistep-1))\sum_{s_{t+\multistep}} \bra*{P^{\pi_k}(s_{t+\multistep} \mid s_t^k) - \hat{P}^{\pi_k}(s_{t+\multistep} \mid s_t^k)}.\nonumber
\end{align}
Applying Lemma~\ref{lemma: model error propogation} we bound the above by,
\begin{align}
    &\eqref{eq: approximate model first relation theorem} \leq \E \brs*{\sum_{t'=t}^{t+\multistep-1}r(s^k_{t'},a^k_{t'}) + \bar{V}^{k-1}_{t+\multistep}(s_{t+\multistep}^k) \mid \pi_k,s_t^k}+ \sum_{t'=t}^{t+\multistep - 1} (t'-t)\epsilon_P + (H-(t+\multistep-1))\multistep\epsilon_P \nonumber \\
    & = \E \brs*{\sum_{t'=t}^{t+\multistep-1}r(s^k_{t'},a^k_{t'}) + \bar{V}^{k-1}_{t+\multistep}(s_{t+\multistep}^k) \mid \pi_k,s_t^k} -  \frac{1}{2}(\multistep-1)\multistep\epsilon_P + (H-t)\multistep\epsilon_P\nonumber \\
    & = \E \brs*{\sum_{t'=t}^{t+\multistep-1}r(s^k_{t'},a^k_{t'}) + \bar{V}^{k-1}_{t+\multistep}(s_{t+\multistep}^k) \mid \mathcal{F}_{k-1},s_t^k} -  \frac{1}{2}(\multistep-1)\multistep\epsilon_P + (H-t)\multistep\epsilon_P. \label{eq: approximate model another first relation theorem}
\end{align}
Where the second relation holds by using the close form of the arithmetic sum and by algebraic manipulations. For the third relation, we observe that given $\pi_k,s_t^k$ the state $s^k_{t'}$ with $t'\geq t$ is independent of the past episodes (see~\ref{eq: model updates to filtration}),
\begin{align*}
    \E \brs*{\sum_{t'=t}^{t+\multistep-1}r(s^k_{t'},a^k_{t'}) + \bar{V}^{k-1}_{t+\multistep}(s_{t+\multistep}^k) \mid \pi_k,s_t^k} = \E \brs*{\sum_{t'=t}^{t+\multistep-1}r(s^k_{t'},a^k_{t'}) + \bar{V}^{k-1}_{t+\multistep}(s_{t+\multistep}^k) \mid \mathcal{F}_{k-1},s_t^k}
\end{align*}

Taking the conditional expectation w.r.t. $\mathcal{F}_{k-1}$ of both ~\eqref{eq: approximate model first relation theorem} and its RHS~\eqref{eq: approximate model another first relation theorem}, using the tower property and the fact for all $s$, $\bar{V}_{H+1}(s)=0$ we get,
\begin{align*}
    \E\brs*{\bar{V}_t^{k}(s_t^k)\mid \mathcal{F}_{k-1}} \leq& \E\brs*{\sum_{t'=t}^{t+h-1}r(s^k_{t'},a^k_{t'}) + \bar{V}^{k-1}_{t+\multistep}(s_{t+\multistep}^k) \mid \mathcal{F}_{k-1}}\\
    &-  \frac{1}{2}(\multistep-1)\multistep\epsilon_P + (H-t)\multistep\epsilon_P
\end{align*}

Let us denote $d_n \eqdef -  \frac{1}{2}(\multistep-1)\multistep\epsilon_P + (H-n)\multistep\epsilon_P$. Summing the above relation for all $n\in [\frac{H}{\multistep}]-1$, using linearity of expectation, and the fact $\bar{V}^{k}_{H+1}(s)=$ for all $s,k$,
\begin{align}
    &\sum_{n=0}^{\frac{H}{\multistep}-1 } \E\brs*{\bar{V}_{1+n\multistep}^{k}(s_t^k) \mid \F_{k-1}} 
      = \E\brs*{\sum_{t=1}^H r(s_t^k,a_t^k)\mid \F_{k-1}} + \sum_{n=1}^{\frac{H}{\multistep}-1 } \E\brs*{ \bar{V}_{1+n\multistep}^{k-1}(s_{1+n\multistep}^k) \mid \F_{k-1}} + \sum_{n=0}^{\frac{H}{\multistep}-1 }d_{1+n\multistep} \label{eq: approximate model second relation theorem}
\end{align}

By simple algebraic manipulation we get $\sum_{n=0}^{\frac{H}{\multistep}-1 }d_{1+n\multistep}= \frac{1}{2}H(H-1)\epsilon_P$ (see Lemma~\ref{lemma: algebraic bound for approximate model}). Thus, \eqref{eq: approximate model second relation theorem} has the following equivalent forms, by which we conclude the proof of this lemma.
\begin{align*}
     \iff & \bar{V}^{k}_{1}(s_1^k)+\sum_{n=1}^{\frac{H}{\multistep}-1 } \E\brs*{\bar{V}_{1+n\multistep}^{k}(s_t^k) \mid \F_{k-1}} = \E\brs*{\sum_{t=1}^H r(s_t^k,a_t^k)\mid \F_{k-1}} + \sum_{n=1}^{\frac{H}{\multistep}-1 } \E\brs*{ \bar{V}_{1+n\multistep}^{k-1}(s_{1+n\multistep}^k) \mid \F_{k-1}} +\frac{1}{2}H(H-1)\epsilon_P\\
     \iff & \bar{V}^{k}_{1}(s_1^k)+\sum_{n=1}^{\frac{H}{\multistep}-1 } \E\brs*{\bar{V}_{1+n\multistep}^{k}(s_t^k) \mid \F_{k-1}} = V^{\pi_k}(s_1^k) + \sum_{n=1}^{\frac{H}{\multistep}-1 } \E\brs*{ \bar{V}_{1+n\multistep}^{k-1}(s_{1+n\multistep}^k) \mid \F_{k-1}} +\frac{1}{2}H(H-1)\epsilon_P \\
     \iff & \bar{V}^{k}_{1}(s_1^k) - V^{\pi_k}(s_1^k) = \sum_{n=1}^{\frac{H}{\multistep}-1 } \E\brs*{ \bar{V}_{1+n\multistep}^{k-1}(s_{1+n\multistep}^k) -\bar{V}_{1+n\multistep}^{k}(s_{1+n\multistep}^k) \mid \F_{k-1}}  +\frac{1}{2}H(H-1)\epsilon_P\\
     \iff & \bar{V}^{k}_{1}(s_1^k) - V^{\pi_k}(s_1^k) = \sum_{k=1}^K \sum_{n=1}^{\frac{H}{\multistep}-1}\sum_s \bar{V}^{k-1}_{n\multistep+1}(s) - \E[\bar{V}^{k}_{n\multistep+1}(s)\mid \F_{k-1}]  +\frac{1}{2}H(H-1)\epsilon_P
\end{align*}

The second line holds by the fact $s_1^k$ is measurable w.r.t. $\mathcal{F}_{k-1}$, the third line holds since $$V^{\pi_k}_1(s_1^k)=\E\brs*{\sum_{t=1}^H r(s_t^k,a_t^k)\mid \F_{k-1}}.$$ The forth line holds by Lemma~\ref{lemma: On trajectory regret to Uniform regret} with $\bar V^k_t=g_t^k$ for $t=n\multistep+1$. See that the update of $\bar V^k_t$ occurs only at the visited state $s_t^k$ and the update rule uses $\bar V^{k-1}_{t+1}$, i.e., it is measurable w.r.t. to $\mathcal{F}_{k-1}$, and it is valid to apply the lemma. 

% We now focus on the constant term. We have that
% \begin{align*}
%     \sum_{n=0}^{\frac{H}{\multistep}-1 }d_{1+n\multistep} &= -  \frac{1}{2}H(\multistep-1)\epsilon_P +\sum_{n=0}^{\frac{H}{\multistep}-1 } (H-1-n\multistep)\multistep\epsilon_P\\
%     &= -  \frac{1}{2}H(\multistep-1)\epsilon_P +H(H-1)\epsilon_P - \multistep^2\epsilon_P\sum_{n=0}^{\frac{H}{\multistep}-1 } n\\
%     &= -  \frac{1}{2}H(\multistep-1)\epsilon_P +H(H-1)\epsilon_P - \frac{1}{2}\multistep^2\epsilon_P(\frac{H-\multistep}{\multistep})\frac{H}{\multistep}\\
%     &= -  \frac{1}{2}H(\multistep-1)\epsilon_P +H(H-1)\epsilon_P - \frac{1}{2}H(H-\multistep)\epsilon_P\\
%     &= -  \frac{1}{2}H(H-1)\epsilon_P +H(H-1)\epsilon_P = \frac{1}{2}H(H-1)\epsilon_P.
% \end{align*}
\end{proof}

\TheoremRegretRTDPApproximateModel*
\begin{proof}
We start by proving { \bf claim (1)}. The following bounds on the regret hold.
\begin{align}
     \Regret(K)&\eqdef \sum_{k=1}^K V^*_1(s^k_1)- V^{\pi_k}_1(s^k_1)
      \nonumber \\
     &\leq \sum_{k=1}^K \hat{V}^*_1(s^k_1)- V^{\pi_k}_1(s^k_1) +\frac{H(H-1)}{2}\epsilon_P
      \nonumber \\
     &\leq  \sum_{k=1}^K \bar{V}_1^{k}(s^k_1)- V^{\pi_k}_1(s^k_1) +\frac{H(H-1)}{2}\epsilon_P\nonumber \\
    %  &= \sum_{k=1}^K \sum_{n=1}^{\frac{H}{\multistep}-1}\E[ \bar{V}^{k-1}_{n\multistep+1}(s_{n\multistep+1}^k) - \bar{V}^{k}_{n\multistep+1}(s_{n\multistep+1}^k)\mid \F_{k-1}] +H(H-1)\epsilon_P \nonumber\\
     &= H(H-1)\epsilon_P K + \sum_{k=1}^K \sum_{n=1}^{\frac{H}{\multistep}-1}\sum_s \bar{V}^{k-1}_{n\multistep+1}(s) - \E[\bar{V}^{k}_{n\multistep+1}(s)\mid \F_{k-1}] \label{eq: regret bound multiple step rtdp approximate model}
\end{align}
The second relation holds by Lemma~\ref{lemma: approximate model bound for planning} which relates the optimal value of the approximate model to the optimal value of the environment. The third relation is by the optimism of the value function (Lemma \ref{lemma: approximate model properties}), and the forth relation is by Lemma \ref{lemma: RTDP approximate modle expected value difference}. 

We now observe the regret is a regret of a Decreasing Bounded Process. Let
\begin{align}
    X_k \eqdef \sum_{n=1}^{\frac{H}{\multistep}-1}\sum_s \bar{V}^{k}_{n\multistep+1}(s), \label{def: DBP multiple rtdp approximate model}
\end{align}
and observe that $\brc*{X_k}_{g\geq 0}$ is a Decreasing Bounded Process.
\begin{enumerate}
    \item It is decreasing since for all $s,t$ $\bar{V}^k_t(s)\leq \bar{V}^{k-1}_t(s)$ by Lemma~\ref{lemma: approximate model properties}. Thus, their sum is also decreasing.
    \item It is bounded since for all $s,t$ $\bar{V}^k_t(s)\geq V_t^*(s) \geq 0$ by Lemma~\ref{lemma: approximate model properties}. Thus, the sum is bounded from below by $0$.
\end{enumerate}

See that the initial value can be bounded as follows,
\begin{align*}
    X_0 &= \sum_{n=1}^{\frac{H}{\multistep}-1}\sum_s \bar{V}^{0}_{n\multistep+1}(s)\leq \sum_{n=1}^{\frac{H}{\multistep}-1}\sum_s H= \frac{SH(H-\multistep)}{\multistep}.
\end{align*}

Using linearity of expectation and the definition \eqref{def: DBP multiple rtdp approximate model} we observe that $\eqref{eq: regret bound multiple step rtdp approximate model}$ can be written,
\begin{align*}
    \Regret(K) \leq \eqref{eq: regret bound multiple step rtdp approximate model} =  H(H-1)\epsilon_P K + \sum_{k=1}^K X_{k-1} - \E[X_k\mid \mathcal{F}_{k-1}],
\end{align*}

which is regret of A Bounded Decreasing Process. Applying the regret bound on DBP, Theorem~\ref{theorem: regret of decreasing bounded process},  we conclude the proof of the first claim. 

We now prove the {\bf claim (2)} using the proving technique at Theorem~\ref{theorem: regret multistep rtdp}. Denote $\Delta_P = H(H-1)\epsilon_P$. The following relations hold for all $\epsilon>0$.
\begin{align}
    &\ind\brc*{\hat{V}_1^{*}(s^k_1)-V_1^{\pi_k}(s^k_1) \geq  \frac{\Delta_P}{2} +\epsilon} \br*{\epsilon + \frac{\Delta_P}{2} } \nonumber \\
    &\leq \ind\brc*{\bar{V}_1^{k}(s^k_1)-V_1^{\pi_k}(s^k_1) \geq  \frac{\Delta_P}{2} +\epsilon} \br*{\epsilon + \frac{\Delta_P}{2} }  \nonumber \\
    &\leq \ind\brc*{\bar{V}_1^{k}(s^k_1)-V_1^{\pi_k}(s^k_1)  \geq  \frac{\Delta_P}{2} +\epsilon } \br*{\bar{V}_1^{k}(s^k_1)-V_1^{\pi_k}(s^k_1) }
    \nonumber \\
    &= \ind\brc*{\bar{V}_1^{k}(s^k_1)-V_1^{\pi_k}(s^k_1) \geq \frac{\Delta_P}{2} +\epsilon}\br*{\sum_{n=1}^{\frac{H}{\multistep}-1}\sum_s \bar{V}^{k-1}_{n\multistep+1}(s) - \E[\bar{V}^{k}_{n\multistep+1}(s)\mid \F_{k-1}] + \frac{\Delta_P}{2}}\nonumber \\
    &= \ind\brc*{\bar{V}_1^{k}(s^k_1)-V_1^{\pi_k}(s^k_1) \geq \frac{\Delta_P}{2} +\epsilon}\br*{X_{k-1} -\E[X_{k}\mid \mathcal{F}_{k-1}] +  \frac{\Delta_P}{2}}. \nonumber
\end{align}
The first relation holds since for all $t,s$, $\bar{V}^{k}_t(s)\geq \hat{V}^*_t(s)$ by Lemma~\ref{lemma: approximate model properties}. The second relation holds by the indicator function and the third relation holds by Lemma~\ref{lemma: RTDP approximate modle expected value difference}. The forth relation holds by  the definition of $X_k$~\eqref{def: DBP multiple rtdp approximate model} and linearity of expectation. Using an algebraic manipulation the above leads to the following relation,
\begin{align}
    \ind\brc*{\hat{V}_1^{*}(s^k_1)-V_1^{\pi_k}(s^k_1) \geq  \frac{\Delta_P}{2} +\epsilon} \epsilon  \leq \ind\brc*{\bar{V}_1^{k}(s^k_1)-V_1^{\pi_k}(s^k_1) \geq \frac{\Delta_P}{2} +\epsilon}\br*{X_{k-1} -\E[X_{k}\mid \mathcal{F}_{k-1}]}\label{eq: theorem approximate mopdel first relation}
\end{align}

As we wish the final performance to be compared to $V^*$ and not $\hat{V}$ we use the the first claim of Lemma~\ref{lemma: approximate model bound for planning}, by which for all $s$, $\hat{V}_1^{*}(s) \geq V_1^{*}(s) - \frac{\Delta_P}{2}$. This implies that 
\begin{align}
    \ind\brc*{ V_1^{*}(s^k_1)- V_1^{\pi_k}(s^k_1) \geq  \Delta_P +\epsilon } \leq \ind\brc*{\hat{V}_1^{*}(s^k_1)-V_1^{\pi_k}(s^k_1) \geq  \frac{\Delta_P}{2} +\epsilon }\label{eq: theorem approximate mopdel second relation}.
\end{align}

Combining all the above, we get
\begin{align}
    &\ind\brc*{ V_1^{*}(s^k_1)- V_1^{\pi_k}(s^k_1) \geq  \Delta_P +\epsilon }\epsilon \nonumber \\
    &\leq \ind\brc*{\hat{V}_1^{*}(s^k_1)-V_1^{\pi_k}(s^k_1) \geq  \frac{\Delta_P}{2} +\epsilon} \epsilon  \nonumber\\
    &\leq \ind\brc*{\bar{V}_1^{k}(s^k_1)-V_1^{\pi_k}(s^k_1) \geq \frac{\Delta_P}{2} +\epsilon}\br*{X_{k-1} -\E[X_{k}\mid \mathcal{F}_{k-1}]}. \label{eq: central identity second claim approximate model theorem}
\end{align}
The first relation is by~\eqref{eq: theorem approximate mopdel second relation} and the second relation by~\eqref{eq: theorem approximate mopdel first relation}.

Define $N_\epsilon(K) = \sum_{k=1}^K \ind\brc*{V_1^{*}(s^k_1)-V_1^{\pi_k}(s^k_1) \geq \Delta_P + \epsilon}$ as the  number of times $V^*_1(s_1^k)-V^{\pi_k}_1(s_1^k) \geq \Delta_P + \epsilon $ at the first $K$ episodes. Summing the above inequality~\eqref{eq: central identity second claim approximate model theorem} for all $k\in[K]$ and denote  we get that for all $\epsilon>0$
\begin{align*}
    &N_\epsilon(K) \epsilon = \sum_{k=1}^K \ind\brc*{V_1^{*}(s^k_1)-V_1^{\pi_k}(s^k_1) \geq \Delta_P + \epsilon} \epsilon \\
    &\leq \sum_{k=1}^K \ind\brc*{\bar{V}_1^{k}(s^k_1)-V_1^{\pi_k}(s^k_1) \geq \frac{\Delta_P}{2} +\epsilon}\br*{X_{k-1} -\E[X_{k}\mid \mathcal{F}_{k-1}]}\\
    &\leq \sum_{k=1}^K X_{k-1} -\E[X_{k}\mid \mathcal{F}_{k-1}].
\end{align*}
The first relation holds by definition, the second by~\eqref{eq: central identity second claim approximate model theorem} and the third relation holds as $\brc*{X_{k}}_{k\geq 0}$ is a DBP~\eqref{def: DBP multiple rtdp approximate model} and, thus, ${X_{k-1} - \E[X_{k}\mid \mathcal{F}_{k-1}] \geq 0}$ a.s. . Thus, the following relation holds
\begin{align*}
     \brc*{\forall K>0: \sum_{k=1}^K X_{k-1} -\E[X_{k}\mid \mathcal{F}_{k-1}] \leq \frac{9SH(H-\multistep)}{\multistep} \ln\frac{3}{\delta} } \subseteq \brc*{\forall \epsilon>0: N_\epsilon(K) \epsilon \leq \frac{9SH(H-\multistep)}{\multistep} \ln\frac{3}{\delta} },
\end{align*}

from which we get that for any $K>0$
\begin{align*}
    &\Pr\br*{\forall \epsilon>0: N_\epsilon(K) \epsilon \leq \frac{9SH(H-\multistep)}{\multistep} \ln\frac{3}{\delta} } \\
    &\geq \Pr\br*{\forall K>0: \sum_{k=1}^K X_{k-1} -\E[X_{k}\mid \mathcal{F}_{k-1}] \leq \frac{9SH(H\multistep)}{\multistep} \ln\frac{3}{\delta} } \geq 1- \delta,
\end{align*}
and the third relation holds the bound on the regret of DBP, Theorem~\ref{theorem: regret of decreasing bounded process}.  Equivalently, for any $K>0$, 
\begin{align}
    \Pr\br*{\exists \epsilon>0: N_\epsilon(K) \epsilon \geq \frac{9SH(H-\multistep)}{\multistep} \ln\frac{3}{\delta} } \leq \delta.\label{eq: final pac bound multistep rtdp approximate model}
\end{align}

Applying the Monotone Convergence Theorem as in the proof of Theorem~\ref{theorem: regret multistep rtdp} we conclude the proof.

\end{proof}

%####################################################################################################################
%#############################################APPROXIMATE VALUE UPDATE#####################################################
%####################################################################################################################

\newpage
\section{$\multistep$-RTDP with Approximate Value updates}\label{supp: multistep rtdp approximate value updates}

\begin{algorithm}[t]
\begin{algorithmic}
\caption{$h$-RTDP with Approximate Value Updates ($h$-RTDP-AV)}
\label{algo: RTDP approximate value update}
    \STATE init: $\forall s\in \mathcal S,\; n\in \{0\}\cup[\frac{H}{\multistep}],\; \bar{V}^0_{n \multistep +1}(s)=H-n\multistep$
    \FOR{$k\in[K]$}
        \STATE Initialize $s^k_1$
        \FOR{$t\in[H]$}
            \IF{$(t-1) \mod \multistep == 0$}
                \STATE $h_c = t + h$
                \STATE $\bar{V}^{k}_{t}(s_t^k) = \epsilon_V(s_t^k) + T^{\multistep}\bar{V}^{k-1}_{h_c}(s_t^k)\;;$ $\qquad\bar{V}^{k}_{t}(s_t^k) \gets \min \brc*{\bar{V}^{k}_{t}(s_t^k),\bar{V}^{k-1}_{t}(s_t^k)}\;;$
            \ENDIF
%            \STATE $t_{\multistep} = \multistep \ceil{\frac{t}{ \multistep}} + 1$
            \STATE $a_t^k\in \arg\max_a r(s_t^k,a) + p(\cdot | s_t^k,a) T^{h_c-t-1}\bar{V}^{k-1}_{h_c}\;;$
            \STATE Act with $a_t^k$ and observe $s_{t+1}^k\sim p(\cdot \mid s_t^k,a_t^k)$ 
        \ENDFOR
    \ENDFOR
\end{algorithmic}
\end{algorithm}

\begin{restatable}{lemma}{rtdpApproximateValueProperties}
\label{lemma:multistep approximate value rtdp properties}
For all $s\!\in\! \mathcal{S}$, $n\!\in\! \{0\}\!\cup\![\frac{H}{\multistep}]$, and $k\!\in\![K]$:
\begin{enumerate}[label=(\roman*)]
    \item Bounded / Optimism: $$V^*_{n\multistep +1}(s) \!\leq\! \bar{V}^k_{n\multistep +1}(s)\!+\! \epsilon_V(\frac{H}{\multistep}\!-\! n).$$
    \item Non-Increasing: $\bar{V}^{k}_{n\multistep +1}(s)  \leq \bar{V}^{k-1}_{n\multistep +1}(s)$.
\end{enumerate}
\end{restatable}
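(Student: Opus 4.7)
The plan is to dispatch property (ii) immediately and reduce property (i) to a clean induction on the episode index $k$. Property (ii) is free: the update in Algorithm~\ref{algo: RTDP approximate value update} explicitly sets $\bar V^k_{nh+1}(s^k_t) \leftarrow \min\{\bar V^k_{nh+1}(s^k_t), \bar V^{k-1}_{nh+1}(s^k_t)\}$ at the single visited state, while at every non-visited state and at every time that is not a stored layer we have $\bar V^k = \bar V^{k-1}$ by definition, so $\bar V^k_{nh+1}(s) \leq \bar V^{k-1}_{nh+1}(s)$ for all $s, n, k$.

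For (i), I would induct on $k$ only (no inner backward induction on $n$ is needed, because the episode-$k$ update depends only on $\bar V^{k-1}$). The base case $k=0$ is immediate from the optimistic initialization $\bar V^0_{nh+1}(s) = H-nh \geq V^*_{nh+1}(s)$, with the slack $\epsilon_V(H/h-n)\geq 0$ to spare. For the inductive step, I assume $\bar V^{k-1}_{nh+1}(s) \geq V^*_{nh+1}(s) - \epsilon_V(H/h - n)$ for every $s$ and every $n \in \{0\}\cup[H/h]$ and verify the same bound at episode $k$. Fix $n$ and $s$. If $n = H/h$ the claim is trivial since $\bar V^k_{H+1}(s)=0=V^*_{H+1}(s)$, and if $s \neq s^k_{nh+1}$ then $\bar V^k_{nh+1}(s)=\bar V^{k-1}_{nh+1}(s)$ and the inductive hypothesis closes the case.

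The one interesting case is $s = s^k_{nh+1}$ with $n < H/h$, where the min rule gives
\begin{equation*}
\bar V^k_{nh+1}(s) \;\geq\; \min\bigl\{\bar V^{k-1}_{nh+1}(s),\ \epsilon_V(s)+T^h \bar V^{k-1}_{(n+1)h+1}(s)\bigr\}.
\end{equation*}
The first argument is controlled by the inductive hypothesis at level $n$. For the second, I would combine $\epsilon_V(s) \geq -\epsilon_V$, monotonicity of $T^h$ applied to the inductive hypothesis at level $n+1$, and the fact that $T^h$ commutes with the addition of a constant, i.e., $T^h(V - c\mathbf 1) = T^h V - c$. Chaining these and using the Bellman identity $T^h V^*_{(n+1)h+1}=V^*_{nh+1}$ yields
\begin{equation*}
\epsilon_V(s)+T^h \bar V^{k-1}_{(n+1)h+1}(s) \;\geq\; -\epsilon_V + V^*_{nh+1}(s) - \epsilon_V\bigl(H/h-n-1\bigr) \;=\; V^*_{nh+1}(s)-\epsilon_V(H/h-n),
\end{equation*}
which closes the induction.

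I do not expect a serious obstacle; the whole proof is bookkeeping, and the only subtlety is that the slack $\epsilon_V(H/h-n)$ must grow by exactly $\epsilon_V$ per layer: $\epsilon_V(H/h-n-1)$ inherited from the inductive hypothesis through $T^h$, plus one further $\epsilon_V$ absorbed by the pointwise update error $\epsilon_V(s)$. The min operation is essential because it ensures (ii), and property (ii) is what licenses the outer induction to remain valid across episodes without losing the slack accounting developed for (i).
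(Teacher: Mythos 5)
Your proposal is correct and follows essentially the same route as the paper: property (ii) by construction of the $\min$ update, and property (i) by induction on $k$, splitting on which argument of the $\min$ is active and, in the nontrivial case, chaining $\epsilon_V(s)\geq-\epsilon_V$, monotonicity of $T^h$, the constant-shift identity $T^h(V-c\mathbf{1})=T^hV-c$, and the Bellman recursion to grow the slack by exactly $\epsilon_V$ per layer. No gaps.
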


% \rtdpApproximateValueProperties*
\begin{proof}
We prove the first claim by induction. The second claim holds by construction.

\paragraph{{\em (i)}} 
Let $n\in\brc*{0}\cup [\frac{H}{\multistep}]$. By the optimistic initialization, $\forall s,n,\  V^*_{1+\multistep n}(s)-\epsilon_V(\frac{H}{\multistep} - n) \leq V^*_{1+\multistep n}(s) \leq V^0_{1+\multistep n}(s)$. Assume the claim holds for $k-1$ episodes. Let $s_t^{k}$ be the state the algorithm is at in the $t= 1+hn$ time step of the $k$'th episode, i.e., at a time step in which a value update is taking place. Let $e\in\mathbb{R}^S$ be the constant vector of ones. By the value update of Algorithm~\ref{algo: RTDP approximate value update},
\begin{align}
\bar{V}^{k}_{t}(s_t^k) = \min \brc*{\epsilon_V(s_t^k) + T^{\multistep}\bar{V}^{k-1}_{h_c}(s_t^k),\bar{V}^{k-1}_{t}(s_t^k)}. \label{eq: supp approximate value update minmum on two terms}
\end{align}

If the minimal value is $\bar{V}^{k-1}_{t}(s_t^k)$ then $\bar{V}^{k}_{t}(s_t^k)$ satisfies the induction hypothesis by the induction assumption. If $\epsilon_V(s_t^k) + T^{\multistep}\bar{V}^{k-1}_{h_c}(s_t^k)$ is the minimal value in~\eqref{eq: supp approximate value update minmum on two terms}, then the following relation holds, 
\begin{align*}
    \bar{V}_t^{k}(s_t^{k}) &= \epsilon_V(s_t^{k})+ T^{\multistep}\bar{V}^{k-1}_{t+\multistep}(s_t^k)\\
    &\geq -\epsilon_V+ T^{\multistep}\bar{V}^{k-1}_{t+\multistep}(s_t^k)\\
    &\geq -\epsilon_V+ T^{\multistep}\br*{V^*_{t+\multistep}-e \epsilon_V(\frac{H}{\multistep} - n-1)}(s_t^k)\\
    &= -\epsilon_V+ T^{\multistep}V^*_{t+\multistep}(s_t^k) -\epsilon_V(\frac{H}{\multistep} - n-1)\\
    &= T^{\multistep}V^*_{t+\multistep}(s_t^k) -\epsilon_V(\frac{H}{\multistep} - n)\\
    &= V^*_{t}(s_t^k) -\epsilon_V(\frac{H}{\multistep} - n).
\end{align*}

The second relation holds by the assumption $| \epsilon_V(s_t^k) |\leq \epsilon_V$. The third relation by the induction hypothesis and the monotonicity of $T^h$. The forth relation holds since for any constant $\alpha\in \mathbb{R}$ and $V\in\mathbb{R}^s$, $T(V+\alpha e)= T V +\alpha$~(e.g.,\cite{bertsekas1996neuro}) and thus $T^\multistep(V+\alpha e)= T^\multistep V +\alpha$. Lastly, the fifth relation holds by the Bellman equations~\eqref{eq:multistep bellman}.

\paragraph{{\em (ii)}}  The second claim holds by construction of the update rule $\bar{V}^{k}_{t}(s_t^k) \gets \min \brc*{\bar{V}^{k}_{t}(s_t^k),\bar{V}^{k-1}_{t}(s_t^k)}$ which enforces ${\bar{V}^{k}_{t}(s) \leq  \bar{V}^{k-1}_{t}(s) }$ for every updated state, and thus for all $s$ and $t$.
\end{proof}

\begin{restatable}{lemma}{rdtpApproximateValueExpectedValueUpdate}
\label{lemma: multistep approximate value update RTDP expected value difference}
The expected cumulative value update at the $k$'th episode of $h$-RTDP-AV satisfies the following relation:
\begin{align*}
    &\bar{V}_1^{k}(s^k_1)-V_1^{\pi_k}(s^k_1) \\
    &\leq \frac{H}{\multistep} \epsilon_V +\sum_{k=1}^K \sum_{n=1}^{\frac{H}{\multistep}-1}\sum_{s\in \mathcal{S}} \bar{V}^{k-1}_{n\multistep+1}(s) - \E[\bar{V}^{k}_{n\multistep+1}(s)\mid \F_{k-1}] .
\end{align*}
\vspace{-0.3cm}

\end{restatable}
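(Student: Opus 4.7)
The plan is to mirror the proof of Lemma~\ref{lemma: multistep RTDP expected value difference}, with two adjustments: the value update now sits below an $\epsilon_V$-perturbed Bellman image (because of the $\min$), so the key chain of equalities becomes a chain of inequalities with an additive error, and these errors accumulate across the $H/\multistep$ update levels.

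First, I would fix $n\in\{0\}\cup[H/\multistep-1]$ and $t=n\multistep+1$, a time step at which a value update takes place. By the update rule in Algorithm~\ref{algo: RTDP approximate value update} together with the min-truncation, and using $|\epsilon_V(\cdot)|\leq \epsilon_V$,
\begin{align*}
\bar V_t^k(s_t^k) \;\leq\; \epsilon_V(s_t^k) + T^{\multistep}\bar V_{t+\multistep}^{k-1}(s_t^k) \;\leq\; \epsilon_V + T^{\multistep}\bar V_{t+\multistep}^{k-1}(s_t^k).
\end{align*}
Since $\pi_k$ is the $\multistep$-lookahead policy with respect to $\bar V_{t+\multistep}^{k-1}$, unrolling $T^{\multistep}$ along $\pi_k$ and then replacing the conditioning on $\pi_k$ by conditioning on $\F_{k-1}$ (identical to step (a) in the proof of Lemma~\ref{lemma: multistep RTDP expected value difference}, since $\pi_k$ is $\F_{k-1}$-measurable and the true model generates the trajectory) gives
\begin{align*}
T^{\multistep}\bar V_{t+\multistep}^{k-1}(s_t^k) \;=\; \E\Bigl[\sum_{t'=t}^{t+\multistep-1}r(s_{t'}^k,a_{t'}^k) + \bar V_{t+\multistep}^{k-1}(s_{t+\multistep}^k)\,\Big|\,\F_{k-1},s_t^k\Bigr].
\end{align*}

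Second, I would take conditional expectation with respect to $\F_{k-1}$ (using the tower property to eliminate the conditioning on $s_t^k$), sum the resulting inequality over $n=0,\dots,H/\multistep-1$, and telescope. The reward segments concatenate into a full $H$-step rollout of $\pi_k$ from $s_1^k$, whose conditional expectation given $\F_{k-1}$ equals $V_1^{\pi_k}(s_1^k)$; the terminal values $\bar V_{t+\multistep}^{k-1}$ from level $n$ cancel against the LHS $\bar V_{n\multistep+1}^k$ from level $n+1$, except at the endpoints where we use $\bar V_{H+1}^k\equiv 0$ and isolate $\bar V_1^k(s_1^k)$ on the left. The $H/\multistep$ accumulated $\epsilon_V$ terms combine additively, leaving
\begin{align*}
\bar V_1^k(s_1^k)-V_1^{\pi_k}(s_1^k) \;\leq\; \tfrac{H}{\multistep}\epsilon_V + \sum_{n=1}^{\frac{H}{\multistep}-1}\E\!\left[\bar V_{n\multistep+1}^{k-1}(s_{n\multistep+1}^k)-\bar V_{n\multistep+1}^k(s_{n\multistep+1}^k)\,\big|\,\F_{k-1}\right].
\end{align*}

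Third, I would convert the ``on-trajectory'' differences into the ``uniform over states'' form of the statement by invoking Lemma~\ref{lemma: On trajectory regret to Uniform regret} with $g_t^k=\bar V_t^k$ at $t=n\multistep+1$. The hypotheses are satisfied exactly as in the exact case: $\bar V_{n\multistep+1}^k$ is updated only at the visited state $s_{n\multistep+1}^k$, and both $\bar V_{n\multistep+1}^{k-1}$ and the perturbed update rule depend only on $\F_{k-1}$ together with the fresh noise $\epsilon_V(s_{n\multistep+1}^k)$ and transition at that state.

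The main subtlety is handling the additive noise through the multi-step Bellman operator without inflating it. Here the key identity is $T^{\multistep}(V+\alpha e)=T^{\multistep}V+\alpha$ for any scalar $\alpha$, so a single noise term at each level propagates unchanged; together with the $\min$-truncation, which can only make $\bar V_t^k$ smaller (helping our upper-bound direction), each of the $H/\multistep$ levels contributes exactly one $\epsilon_V$, giving the stated additive slack.
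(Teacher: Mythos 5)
Your proposal is correct and follows essentially the same route as the paper's proof: bound $\bar V_t^k(s_t^k)$ by $\epsilon_V + T^{\multistep}\bar V^{k-1}_{t+\multistep}(s_t^k)$, unroll $T^{\multistep}$ along $\pi_k$, condition on $\F_{k-1}$, telescope over the $H/\multistep$ update levels, and finish with Lemma~\ref{lemma: On trajectory regret to Uniform regret}. (You even treat the $\min$-truncation slightly more carefully than the paper does; the closing remark about $T^{\multistep}(V+\alpha e)=T^{\multistep}V+\alpha$ is not actually needed here since the noise sits outside the operator, but it does no harm.)
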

% \rdtpApproximateValueExpectedValueUpdate*
\begin{proof}
Let $n\in\brc*{0}\cup[\frac{H}{h}-1]$ and let $t= 1+hn$ be a time step in which a value update is taking place. By the definition of the update rule, the following holds for the update at the visited state $s_t^k$:

\begin{align*}
     \bar{V}_t^{k}(s_t^k)  &= \epsilon_V(s_t^{k}) + (T^h \bar{V}_{t+h}^{k-1})(s_t^k)\\
                           &\leq \epsilon_V + (T^{\pi_{k}(t)}\cdot\cdot\cdot T^{\pi_{k}(t+h-1)}\bar{V}_{t+h}^{k-1})(s_t^k)\\
                           &= \epsilon_V + \E\brs*{\sum_{t'=t}^{t+h-1}r(s^k_{t'},a^k_{t'}) + \bar{V}^{k-1}_{t+h}(s_{t+h}^k) \mid \mathcal{F}_{k-1},s_t^k}.
\end{align*}
Where the third relation holds by the same argument as in~\eqref{eq: model updates to filtration}. Taking the conditional expectation w.r.t. $\mathcal{F}_{k-1}$, using the tower property and the fact for all $s$, $\bar{V}_{H+1}(s)=0$ we get,
\begin{align*}
    \E\brs*{\bar{V}_t^{k}(s_t^k)\mid \mathcal{F}_{k-1}} \leq \epsilon_V + \E\brs*{\sum_{t'=t}^{t+h-1}r(s^k_{t'},a^k_{t'}) + \bar{V}^{k-1}_{t+h}(s_{t+h}^k) \mid \mathcal{F}_{k-1}}.
\end{align*}
Summing the above relation for all $n\in \brc*{0}\cup[\frac{H}{h}-1]$, using linearity of expectation, and the fact $\bar{V}^{k}_{H+1}(s)=$ for all $s,k$,
\begin{align*}
    &\sum_{n=0}^{\frac{H}{\multistep}-1 } \E\brs*{\bar{V}_{1+n\multistep}^{k}(s_t^k) \mid \F_{k-1}} 
      \leq \frac{H}{\multistep}\epsilon_V + \E\brs*{\sum_{t=1}^H r(s_t^k,a_t^k)\mid \F_{k-1}} + \sum_{n=1}^{\frac{H}{\multistep}-1 } \E\brs*{ \bar{V}_{1+n\multistep}^{k-1}(s_{1+n\multistep}^k) \mid \F_{k-1}} \\
     \iff & \bar{V}^{k}_{1}(s_1^k)+\sum_{n=1}^{\frac{H}{\multistep}-1 } \E\brs*{\bar{V}_{1+n\multistep}^{k}(s_t^k) \mid \F_{k-1}} \leq \frac{H}{\multistep}\epsilon_V + \E\brs*{\sum_{t=1}^H r(s_t^k,a_t^k)\mid \F_{k-1}} + \sum_{n=1}^{\frac{H}{\multistep}-1 } \E\brs*{ \bar{V}_{1+n\multistep}^{k-1}(s_{1+n\multistep}^k) \mid \F_{k-1}} \\
     \iff & \bar{V}^{k}_{1}(s_1^k)+\sum_{n=1}^{\frac{H}{\multistep}-1 } \E\brs*{\bar{V}_{1+n\multistep}^{k}(s_t^k) \mid \F_{k-1}} \leq \frac{H}{\multistep}\epsilon_V+ V^{\pi_k}(s_1^k) + \sum_{n=1}^{\frac{H}{\multistep}-1 } \E\brs*{ \bar{V}_{1+n\multistep}^{k-1}(s_{1+n\multistep}^k) \mid \F_{k-1}} \\
     \iff & \bar{V}^{k}_{1}(s_1^k) - V^{\pi_k}(s_1^k) \leq \frac{H}{\multistep}\epsilon_V + \sum_{n=1}^{\frac{H}{\multistep}-1 } \E\brs*{ \bar{V}_{1+n\multistep}^{k-1}(s_{1+nh}^k) -\bar{V}_{1+n\multistep}^{k}(s_{1+nh}^k) \mid \F_{k-1}} \\
     \iff & \bar{V}^{k}_{1}(s_1^k) - V^{\pi_k}(s_1^k) \leq \frac{H}{\multistep}\epsilon_V +  \sum_{k=1}^K \sum_{n=1}^{\frac{H}{\multistep}-1}\sum_s \bar{V}^{k-1}_{n\multistep+1}(s) - \E[\bar{V}^{k}_{n\multistep+1}(s)\mid \F_{k-1}] 
\end{align*}

The second line holds by the fact $s_1^k$ is measurable w.r.t. $\mathcal{F}_{k-1}$, and the third line holds since $$V^{\pi_k}_1(s_1^k)=\E\brs*{\sum_{t=1}^H r(s_t^k,a_t^k)\mid \F_{k-1}}.$$ 
The fifth line holds by by Lemma~\ref{lemma: On trajectory regret to Uniform regret} with $\bar V^k_t=g_t^k$ for $t=n\multistep+1$. See that the update of $\bar V^k_t$ occurs only at the visited state $s_t^k$ and the update rule uses $\bar V^{k-1}_{t+1}$, i.e., it is measurable w.r.t. to $\mathcal{F}_{k-1}$, and it is valid to apply the lemma.

\end{proof}

\TheoremRegretApproximateValueRTDP*

\begin{proof}
We start by proving { \bf claim (1)}. The following bounds on the regret hold.
\begin{align}
     \Regret(K)&\eqdef \sum_{k=1}^K V^*_1(s^k_1)- V^{\pi_k}_1(s^k_1)
      \nonumber \\
     &\leq  \sum_{k=1}^K \bar{V}_1^{k}(s^k_1)- V^{\pi_k}_1(s^k_1)  + \frac{H}{\multistep}\epsilon_V \nonumber \\
    %  &= \sum_{k=1}^K \sum_{n=1}^{\frac{H}{\multistep}-1}\E[ \bar{V}^{k-1}_{n\multistep+1}(s_{n\multistep+1}^k) - \bar{V}^{k}_{n\multistep+1}(s_{n\multistep+1}^k)\mid \F_{k-1}] + \frac{2H}{\multistep}\delta \nonumber\\
    %  &= \frac{2H}{\multistep}\delta K  + \sum_{k=1}^K \sum_{n=1}^{\frac{H}{\multistep}-1}\E[ \bar{V}^{k-1}_{n\multistep+1}(s_{n\multistep+1}^k) - \bar{V}^{k}_{n\multistep+1}(s_{n\multistep+1}^k)\mid \F_{k-1}] \nonumber\\
     &= \frac{2H}{\multistep}\epsilon_V K + \sum_{k=1}^K \sum_{n=1}^{\frac{H}{\multistep}-1}\sum_s \bar{V}^{k-1}_{n\multistep+1}(s) - \E[\bar{V}^{k}_{n\multistep+1}(s)\mid \F_{k-1}] \label{eq: regret bound multiple step rtdp appro value updates}
\end{align}
The second relation is by the approximated optimism of the value function when approximate value updates are used (Lemma \ref{lemma:multistep approximate value rtdp properties}). The third relation is by Lemma \ref{lemma: multistep approximate value update RTDP expected value difference}. 

We now observe the regret is a regret of a Decreasing Bounded Process. Let
\begin{align}
    X_k \eqdef \sum_{n=1}^{\frac{H}{\multistep}-1}\sum_s \bar{V}^{k}_{n\multistep+1}(s), \label{def: DBP multiple rtdp approximate value}
\end{align}
and observe that $\brc*{X_k}_{g\geq 0}$ is a Decreasing Bounded Process.

\begin{enumerate}
    \item It is decreasing since for all $s,t$ $\bar{V}^k_t(s)\leq \bar{V}^{k-1}_t(s)$ by Lemma~\ref{lemma:multistep approximate value rtdp properties}. Thus, their sum is also decreasing.
    \item It is bounded since for all $s,n\in [\frac{H}{\multistep}]-1$, $$\bar{V}^k_{1+\multistep n}(s)\geq V_{1+\multistep n}^*(s) - \epsilon_V(\frac{H}{\multistep} - n) \geq - \epsilon_V(\frac{H}{\multistep}-n)\geq - \epsilon_V \frac{H}{\multistep}$$ by Lemma~\ref{lemma:multistep approximate value rtdp properties}. Thus, $X_0$ which is a sum of the above terms is bounded from below by $
    -  \frac{\epsilon_V}{\multistep}\frac{SH(H-\multistep)}{\multistep}$.
\end{enumerate}

See that the initial value can be bounded as follows,
\begin{align*}
    X_0 &= \sum_{n=1}^{\frac{H}{\multistep}-1}\sum_s \bar{V}^{0}_{n\multistep+1}(s)\leq \sum_{n=1}^{\frac{H}{\multistep}-1}\sum_s H= \frac{SH(H-\multistep)}{\multistep}.
\end{align*}

Using linearity of expectation and the definition \eqref{def: DBP multiple rtdp} we observe that $\eqref{eq: regret bound multiple step rtdp appro value updates}$ can be written,
\begin{align*}
    \Regret(K) \leq \eqref{eq: regret bound multiple step rtdp appro value updates} = \frac{2H}{\multistep}\epsilon_V K + \sum_{k=1}^K X_{k-1} - \E[X_k\mid \mathcal{F}_{k-1}],
\end{align*}

which is regret of A Bounded Decreasing Process. Applying the regret bound on DBP, Theorem~\ref{theorem: regret of decreasing bounded process} we conclude the proof of the first claim.

We now prove {\bf claim (2)} using the proving technique at Theorem~\ref{theorem: regret multistep rtdp}. Denote $\Delta_V = 2H\epsilon_V$. The following relations hold for all $\epsilon>0$. 
% \todoy{should we use $V^{k-1}$ and not $k$ in the following proofs in the indicator function? }
\begin{align}
    &\ind\brc*{\bar{V}_1^{k}(s^k_1)-V_1^{\pi_k}(s^k_1) \geq  \frac{\Delta_V}{2\multistep} +\epsilon} \br*{\epsilon + \frac{\Delta_V}{2\multistep} }  \nonumber \\
    &\leq \ind\brc*{\bar{V}_1^{k}(s^k_1)-V_1^{\pi_k}(s^k_1)  \geq  \frac{\Delta_V}{2\multistep} +\epsilon } \br*{\bar{V}_1^{k}(s^k_1)-V_1^{\pi_k}(s^k_1) }
    \nonumber \\
    &= \ind\brc*{\bar{V}_1^{k}(s^k_1)-V_1^{\pi_k}(s^k_1) \geq \frac{\Delta_V}{2\multistep} +\epsilon}\br*{\sum_{n=1}^{\frac{H}{\multistep}-1}\sum_s \bar{V}^{k-1}_{n\multistep+1}(s) - \E[\bar{V}^{k}_{n\multistep+1}(s)\mid \F_{k-1}] + \frac{\Delta_V}{2\multistep}}\nonumber \\
    &= \ind\brc*{\bar{V}_1^{k}(s^k_1)-V_1^{\pi_k}(s^k_1) \geq \frac{\Delta_V}{2\multistep} +\epsilon}\br*{X_{k-1} -\E[X_{k}\mid \mathcal{F}_{k-1}] +  \frac{\Delta_V}{2\multistep}}. \nonumber
\end{align}
The first relation holds by the indicator function and the second relation by Lemma~\ref{lemma: multistep approximate value update RTDP expected value difference}. The third relation holds by  the definition of $X_k$~\eqref{def: DBP multiple rtdp approximate value} and linearity of expectation. Using an algebraic manipulation the above leads to the following relation,
\begin{align}
    \ind\brc*{\bar{V}_1^{k}(s^k_1) - V_1^{\pi_k}(s^k_1) \geq  \frac{\Delta_V}{2\multistep} +\epsilon} \epsilon  \leq \ind\brc*{\bar{V}_1^{k}(s^k_1)-V_1^{\pi_k}(s^k_1) \geq \frac{\Delta_V}{2\multistep} +\epsilon}\br*{X_{k-1} -\E[X_{k}\mid \mathcal{F}_{k-1}]}\label{eq: theorem approximate value first relation}
\end{align}

As we wish the final performance to be compared to $V^*$ we use the the first claim of Lemma~\ref{lemma:multistep approximate value rtdp properties}, by which for all $s,k$, $\bar{V}_1^{k}(s) \geq V_1^{*}(s) - \frac{\Delta_V}{2\multistep}$. This implies that 
\begin{align}
    \ind\brc*{ V_1^{*}(s^k_1)- V_1^{\pi_k}(s^k_1) \geq  \frac{\Delta_V}{\multistep} +\epsilon } \leq \ind\brc*{\bar V_1^{k}(s^k_1)-V_1^{\pi_k}(s^k_1) \geq  \frac{\Delta_V}{2\multistep} +\epsilon }\label{eq: theorem approximate value second relation}.
\end{align}

Combining the above we get
\begin{align}
    &\ind\brc*{ V_1^{*}(s^k_1)- V_1^{\pi_k}(s^k_1) \geq  \frac{\Delta_V}{\multistep}+\epsilon }\epsilon \nonumber \\
    &\leq \ind\brc*{\bar{V}_1^{k}(s^k_1) - V_1^{\pi_k}(s^k_1) \geq  \frac{\Delta_V}{2\multistep} +\epsilon} \epsilon  \nonumber\\
    &\leq \ind\brc*{\bar{V}_1^{k}(s^k_1)-V_1^{\pi_k}(s^k_1) \geq \frac{\Delta_V}{2\multistep} +\epsilon}\br*{X_{k-1} -\E[X_{k}\mid \mathcal{F}_{k-1}]}. \label{eq: central identity second claim approximate value update}
\end{align}
The first relation is by~\eqref{eq: theorem approximate value second relation} and the second relation by~\eqref{eq: theorem approximate value first relation}.

Define $N_\epsilon(K) = \sum_{k=1}^K \ind\brc*{V_1^{*}(s^k_1)-V_1^{\pi_k}(s^k_1) \geq \frac{\Delta_V}{\multistep} + \epsilon}$ as the  number of times $V^*_1(s_1^k)-V^{\pi_k}_1(s_1^k) \geq \frac{\Delta_V}{\multistep} + \epsilon $ at the first $K$ episodes. Summing the above inequality~\eqref{eq: central identity second claim approximate value update} for all $k\in[K]$ and denote  we get that for all $\epsilon>0$
\begin{align*}
    &N_\epsilon(K)\epsilon = \sum_{k=1}^K \ind\brc*{V_1^{*}(s^k_1)-V_1^{\pi_k}(s^k_1) \geq \frac{\Delta_V}{\multistep} + \epsilon} \epsilon \\
    &\leq \sum_{k=1}^K \ind\brc*{\bar{V}_1^{k}(s^k_1)-V_1^{\pi_k}(s^k_1) \geq \frac{\Delta_V}{2\multistep} +\epsilon}\br*{X_{k-1} -\E[X_{k}\mid \mathcal{F}_{k-1}]}\\
    &\leq \sum_{k=1}^K X_{k-1} -\E[X_{k}\mid \mathcal{F}_{k-1}].
\end{align*}
The first relation holds by definition, the second by~\eqref{eq: central identity second claim approximate value update} and the third relation holds as $\brc*{X_{k}}_{k\geq 0}$ is a DBP~\eqref{def: DBP multiple rtdp approximate value} and, thus, ${X_{k-1} - \E[X_{k}\mid \mathcal{F}_{k-1}] \geq 0}$ a.s. . Thus, the following relation holds
\begin{align*}
     &\brc*{\forall K>0: \sum_{k=1}^K X_{k-1} -\E[X_{k}\mid \mathcal{F}_{k-1}] \leq \frac{9SH(H-\multistep)}{\multistep}(1+\frac{H}{\multistep}\epsilon_V) \ln\frac{3}{\delta} } \\
     &\subseteq \brc*{\forall \epsilon>0: N_\epsilon(K) \epsilon \leq \frac{9SH(H-\multistep)}{\multistep}(1+\frac{H}{\multistep}\epsilon_V) \ln\frac{3}{\delta} },
\end{align*}

from which we get for any $K>0$
\begin{align*}
    &\Pr\br*{\forall \epsilon>0: N_\epsilon(K) \epsilon \leq \frac{9SH(H-\multistep)}{\multistep} (1+\frac{H}{\multistep}\epsilon_V)\ln\frac{3}{\delta} } \\
    &\geq \Pr\br*{\forall K>0: \sum_{k=1}^K X_{k-1} -\E[X_{k}\mid \mathcal{F}_{k-1}] \leq \frac{9SH(H\multistep)}{\multistep} (1+\frac{H}{\multistep}\epsilon_V)\ln\frac{3}{\delta} } \geq 1- \delta,
\end{align*}
and the third relation holds the bound on the regret of DBP, Theorem~\ref{theorem: regret of decreasing bounded process}. Equivalently, for any $K>0$, 
\begin{align}
    \Pr\br*{\exists \epsilon>0: N_\epsilon(K) \epsilon \geq \frac{9SH(H-\multistep)}{\multistep} (1+\frac{H}{\multistep}\epsilon_V)\ln\frac{3}{\delta} } \leq \delta. \label{eq: final pac bound multistep rtdp approixamte value}
\end{align}
Applying the Monotone Convergence Theorem as in the proof of Theorem~\ref{theorem: regret multistep rtdp} we conclude the proof.

\end{proof}

%####################################################################################################################
%#############################################RTDP with ABSTRACTION#####################################################
%####################################################################################################################

\newpage
\section{$\multistep$-RTDP with Approximate State Abstraction}\label{supp: multistep rtdp abstractions}

\begin{algorithm}[t]
\begin{algorithmic}
\caption{$h$-RTDP with Approximate State Abstraction ($h$-RTDP-AA)}
\label{algo: RTDP with abstractions}
    \STATE init: $\forall s_\phi \in \mathcal S_\phi ,\; n\in \{0\}\cup[\frac{H}{\multistep}],\; \bar{V}^0_{\phi, n \multistep +1}(s_\phi)=H-n\multistep$
    \FOR{$k\in[K]$}
        \STATE Initialize $s^k_1$
        \FOR{$t\in[H]$}
            \IF{$(t-1) \mod \multistep == 0$}
                \STATE $h_c = t + h\;;$ $\qquad\bar{V}^{k}_{\phi,t}(\phi_t(s_t^k)) =  T_\phi^{\multistep}\bar{V}^{k-1}_{\phi, h_c}(s_{t}^k)\;;$ 
                \STATE ${\bar{V}^{k}_{\phi,t}(\phi_t(s_t^k)) \gets \min \brc*{\bar{V}^{k}_{\phi,t}(\phi_t(s_t^k)),\bar{V}^{k-1}_{\phi,t}(\phi_t(s_t^k))}}\;;$
            \ENDIF
%            \STATE $t_{\multistep} = \multistep \ceil{\frac{t}{ \multistep}} + 1$
            \STATE $a_t^k\in\arg\max_a r(s_t^k,a) + p(\cdot|s_t^k,a) T_\phi^{h_c-t-1}\bar{V}^{k-1}_{\phi,h_c}\;;$
            \STATE Act with $a_t^k$ and observe $s_{t+1}^k\sim p(\cdot \mid s_t^k,a_t^k)$ 
        \ENDFOR
    \ENDFOR
\end{algorithmic}
\end{algorithm}

In this section we analyze the performance of $\multistep$-RTDP performance which uses approximate abstraction. For clarity we restate the assumption we make on the approximate abstraction and the definition of equivalent set under abstraction.

\assumptionModelAbstraction*

An important quantity in our analysis is the set of states equivalent to a given state $s$ under $\phi_{nh+1}$.

\begin{restatable}[Equivalent Set Under Abstraction]{defn}{defnEquivalentSet}
For any $s\in\mathcal{S}$ and $n\in\{0\}\cup[\frac{H}{\multistep}-1]$, we define the set of states equivalent to $s$ under $\phi_{nh+1}$ as $\Phi_{nh+1}(s)\eqdef\{s'\in\mathcal{S}: \phi_{nh+1}(s)=\phi_{nh+1}(s')\}$.
\end{restatable}
% \defnEquivalentSet*

Before we supply with the proof we emphasize an important difference in the definition of the value function $\bar V_t^k$ when using abstraction. Unlike the usual definition of $\bar V_t^k: \mathcal{S} \rightarrow \mathbb{R} $, in case of abstraction $\bar V_{\phi,t}^k$ is a mapping from the \emph{abstract state space} to the reals, i.e., $\bar V_{\phi,t}^k: \mathcal{S}_\phi \rightarrow \mathbb{R}$. Meaning, $\bar V_{\phi,t}^k$ is defined on the abstract state space. Given a state $s\in\mathcal{S}$ we need to query $\phi_t$ to obtain its value at time $t$ by $\bar V_t^k(\phi_t(s))$.

\begin{restatable}{lemma}{rtdpPropertiesAbstraction}
\label{lemma:multistep rtdp properties abstraction}
For all $s\in \mathcal{S}$, $n\in\{0\}\cup[\frac{H}{\multistep}]$, and $k\in[K]$:
\begin{enumerate}[label=(\roman*)]
    \item Optimism: $$\max_{s'\in \Phi_{nh+1}(s)}V_{n\multistep+1}^*(s') \leq \bar V^k_{n\multistep+1}(\phi_{n\multistep+1}(s)) + \epsilon_A(\frac{H}{\multistep}-n ).$$
    \item Bounded: $\bar{V}^{k}_{n\multistep+1}(\phi_{n\multistep+1}(s))\geq 0.$
    \item Non-Increasing: ${ \bar{V}^{k}_{n\multistep+1}(\phi_{n\multistep+1}(s)) \! \leq \! \bar{V}^{k-1}_{n\multistep+1}(\phi_{n\multistep+1}(s)).}$
\end{enumerate}
\end{restatable}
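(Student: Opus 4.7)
The plan is to prove the three claims by joint induction on the episode index $k$, exactly mirroring the strategy used for Lemma~\ref{lemma:multistep rtdp properties} and its approximate counterparts. The base case $k=0$ is immediate: the optimistic initialization $\bar V^0_{\phi,n\multistep+1}(s_\phi) = H-n\multistep$ is nonnegative, satisfies (iii) vacuously, and majorizes $V^*_{n\multistep+1}(s')$ for every $s'$ since rewards lie in $[0,1]$ (the slack $\epsilon_A(H/\multistep-n)\geq 0$ is not even needed here). For the induction step at episode $k$, I first dispose of abstract states that are not visited at time $n\multistep+1$: for such $\bar s$ we have $\bar V^k_{\phi,n\multistep+1}(\bar s) = \bar V^{k-1}_{\phi,n\multistep+1}(\bar s)$, so all three claims transfer verbatim from the induction hypothesis. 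The remainder of the proof concerns the visited abstract state $\bar s = \phi_t(s_t^k)$ with $t = n\multistep+1$, for which Algorithm~\ref{algo: RTDP with abstractions} sets $\bar V^k_{\phi,t}(\bar s) = \min\bigl\{T_\phi^\multistep \bar V^{k-1}_{\phi,h_c}(s_t^k),\, \bar V^{k-1}_{\phi,t}(\bar s)\bigr\}$.

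Claim (iii) is immediate from the outer $\min$. Claim (ii) follows because $T_\phi^\multistep \bar V^{k-1}_{\phi,h_c}(s_t^k)$ is a maximum of expectations of nonnegative reward sums plus a nonnegative terminal value (by the induction hypothesis applied at $(k-1,\,n+1)$), and the second argument of the $\min$ is nonnegative by the induction hypothesis at $(k-1,\,n)$. For claim (i), fix $s$ with $\phi_t(s) = \bar s$ and let $s'' \in \Phi_t(s)$. Since $\phi_t(s'') = \phi_t(s) = \phi_t(s_t^k)$, Assumption~\ref{assumption: model abstraction} gives $V^*_t(s'') \leq V^*_t(s_t^k) + \epsilon_A$. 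To bound $V^*_t(s_t^k)$, I invoke the induction hypothesis at $(k-1,n+1)$, which yields $V^*_{h_c}(s') \leq \bar V^{k-1}_{\phi,h_c}(\phi_{h_c}(s')) + \epsilon_A(H/\multistep - n - 1)$ for every $s' \in \mathcal{S}$. Because $T_\phi^\multistep$ uses the \emph{true} model and reward over the $\multistep$ step window and differs from $T^\multistep$ only in the terminal lookup, the constant-shift and monotonicity properties of the multi-step Bellman operator give
\[V^*_t(s_t^k) = T^\multistep V^*_{h_c}(s_t^k) \leq T_\phi^\multistep \bar V^{k-1}_{\phi,h_c}(s_t^k) + \epsilon_A\bigl(H/\multistep - n - 1\bigr).\]
Chaining the two bounds yields $V^*_t(s'') \leq T_\phi^\multistep \bar V^{k-1}_{\phi,h_c}(s_t^k) + \epsilon_A(H/\multistep - n)$; maximizing the left-hand side over $s'' \in \Phi_t(s)$ controls the first argument of the $\min$. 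The second argument is controlled directly by the induction hypothesis at $(k-1,n)$, which gives $\max_{s'' \in \Phi_t(s)} V^*_t(s'') \leq \bar V^{k-1}_{\phi,t}(\bar s) + \epsilon_A(H/\multistep - n)$. The $\min$ preserves the common lower bound, establishing (i).

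The main obstacle is bookkeeping the per-stage contribution of the abstraction error. Each application of Assumption~\ref{assumption: model abstraction} introduces a single $\epsilon_A$ slack, and since there are $H/\multistep - n$ stages between time $n\multistep+1$ and the horizon, the errors must telescope cleanly to $\epsilon_A(H/\multistep - n)$. The reason this telescoping works is that within a single $\multistep$-step window the algorithm plans with the true model, so no additional error is incurred inside a window; abstraction error arises only at the terminal lookup, allowing exactly one $\epsilon_A$ to be charged per stage. The constant-shift property of $T^\multistep$, namely $T^\multistep(V + c\mathbf{1}) = T^\multistep V + c$, is what lets me pass the induction slack cleanly across the operator without interacting with the $\max$ over policies or the $p$-expectation.
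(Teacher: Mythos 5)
Your proof is correct and follows essentially the same route as the paper's: induction on $k$, with the optimism claim obtained from the induction hypothesis at the next stored time step $h_c$, the monotonicity and constant-shift properties of the $\multistep$-step Bellman operator (noting $T_\phi^\multistep$ is just $T^\multistep$ applied to the terminal value $s'\mapsto\bar V^{k-1}_{\phi,h_c}(\phi_{h_c}(s'))$), and a single application of Assumption~\ref{assumption: model abstraction} at the current stage to pass from $V^*_t(s_t^k)$ to $\max_{s''\in\Phi_t(s)}V^*_t(s'')$, while (ii) and (iii) follow by construction of the $\min$ update. The only cosmetic difference is that you upper-bound $\max_{s''}V^*_t(s'')$ whereas the paper lower-bounds $\bar V^k_{\phi,t}$ — the same inequality chain read in the opposite direction — and you make explicit the (implicit in the paper) handling of unvisited abstract states.
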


% \rtdpPropertiesAbstraction*
\begin{proof}
We prove the first claim by induction. The second and third claims hold by construction.

\paragraph{{\em (i)}} 
Let $n\in \brc*{0} \cup [\frac{H}{\multistep}-1]$. By the optimistic initialization, $\forall s,n,\  V^*_{1+\multistep n}(s)-\epsilon_A(\frac{H}{\multistep} - n) \leq V^*_{1+\multistep n}(s) \leq V^0_{1+\multistep n}(\phi_{1+\multistep n}(s))$. Assume the claim holds for $k-1$ episodes. Let $s_t^{k}$ be the state the algorithm is at in the $t= 1+hn$ time step of the $k$'th episode, i.e., at a time step in which a value update is taking place. By the value update of Algorithm~\ref{algo: RTDP with abstractions},
\begin{align}
\bar{V}^{k}_{t}(\phi(s_t^k)) = \min \brc*{T^{\multistep}\bar{V}^{k-1}_{ h_c}(s_{t}^k),\bar{V}^{k-1}_{t}(\phi(s_t^k))}. \label{eq: supp approximate abstraction update minmum on two terms}
\end{align}
%

                % \STATE $\bar{V}^{k}_{\phi,t}(\phi_t(s_t^k)) =  T_\phi^{\multistep}\bar{V}^{k-1}_{\phi, h_c}(s_{t}^k)$
                % \STATE $\bar{V}^{k}_{\phi,t}(\phi_t(s_t^k)) \!\! \gets \!\min\! \brc*{\!\bar{V}^{k}_{\phi,t}(\phi_t(s_t^k)),\!\bar{V}^{k-1}_{\phi,t}(\phi_t(s_t^k))\!}$

If the minimal value is $\bar{V}^{k-1}_{t}(\phi(s_t^k))$ then $\bar{V}^{k}_{t}(\phi(s_t^k))$ satisfies the induction hypothesis by the induction assumption. If $T^{\multistep}\bar{V}^{k-1}_{h_c}(s_t^k)$ is the minimal value in~\eqref{eq: supp approximate abstraction update minmum on two terms}, then the following relation holds, 

\begin{align*}
    \bar{V}_t^{k}(\phi_t(s_t^k)) &= \max_{\pi_0,\pi_1,..,\pi_{h-1}} \; \E[\sum_{t'=0}^{h-1}r(s_t',\pi_{t'}(s_t')) +\bar{V}^{k-1}_{t+h}(\phi(s_h))\mid s_0=s_t^k]\\
    &\geq \max_{\pi_0,\pi_1,..,\pi_{h-1}} \; \E[\sum_{t'=0}^{h-1}r(s_t',\pi_{t'}(s_t')) + \max_{s'\in \Phi_{t+\multistep}(s_h)} V^*_{t+h}(s') - \epsilon_A\br*{\frac{H}{\multistep} -n -1 } \mid s_0=s_t^k]\\
    &= \max_{\pi_0,\pi_1,..,\pi_{h-1}} \; \E[\sum_{t'=0}^{h-1}r(s_t',\pi_{t'}(s_t')) + \max_{s'\in \Phi_{t+\multistep}(s_h)} V^*_{t+h}(s')  \mid s_0=s_t^k]- \epsilon_A\br*{\frac{H}{\multistep}-n  -1}\\
    &\geq \max_{\pi_0,\pi_1,..,\pi_{h-1}} \; \E[\sum_{t'=0}^{h-1}r(s_t',\pi_{t'}(s_t')) +  V^*_{t+h}(s_h)  \mid s_0=s_t^k]- \epsilon_A\br*{\frac{H}{\multistep}-n  -1}\\
    &=  V^*_{t}(s_t^k) - \epsilon_A\br*{\frac{H}{\multistep}-n  -1}\\
    &\geq \max_{s'\in \Phi_t(s_t^k)} V^*_{t}(s_t^k) - \epsilon_A - \epsilon_A\br*{\frac{H}{\multistep}-n-1} \\
    &= \max_{s'\in \Phi_t(s_t^k)} V^*_{t}(s_t^k) - \epsilon_A\br*{\frac{H}{\multistep}-n}.
\end{align*}

The first relation is the definition of the update rule. The second relation holds by the monotonicity of the $\max$ operator together with the induction assumption. The third relation as the extracted term out of the $\max$ is constant. The forth relation holds by the definition of the $\max$ operation. The fifth relation by the Bellman equations $V^*_t$ satisfies~\eqref{eq:multistep bellman}, and the sixth relation by Assumption~\ref{assumption: model abstraction}.

\paragraph{{\em (ii)}}  The second claim holds by construction of the update rule $\bar{V}^{k}_{t}(s_t^k) \gets \min \brc*{\bar{V}^{k}_{t}(s_t^k),\bar{V}^{k-1}_{t}(s_t^k)}$ which enforces ${\bar{V}^{k}_{t}(s) \leq  \bar{V}^{k-1}_{t}(s) }$ for every updated state, and thus for all $s$ and $t$.

\paragraph{{\em (iii)}} The third claim holds since $V_t^k(\phi_t(s))$ is initialized with positive elements and is updated by itself and positive elements, as $r(s,a)\geq 0 $. Thus, it remains positive a.s. .
\end{proof}

\begin{restatable}{lemma}{rdtpExpectedValueUpdateAbstraction}
\label{lemma: multistep RTDP expected value difference abstraction}
The expected cumulative value update at the $k$'th episode of $h$-RTDP-AA satisfies the following relation:

\begin{align*}
    &\bar{V}_1^{k}(\phi(s^k_1))-V_1^{\pi_k}(s^k_1) \\
    &\leq \sum_{k=1}^K \sum_{n=1}^{\frac{H}{\multistep}-1}\sum_{s_\phi\in \mathcal{S}_\phi} \bar{V}^{k-1}_{n\multistep+1}(s_\phi) - \E[\bar{V}^{k}_{n\multistep+1}(s_\phi)\mid \F_{k-1}].
    % \E[\sum_{t=1}^{H} \bar{V}^{k-1}(\phi_(s_{t+1}^k)) - \bar{V}^{k}(\phi(s_{t+1}^k))\mid \F_{k-1}].
\end{align*}
\vspace{-0.3cm}
\end{restatable}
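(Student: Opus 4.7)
The plan is to mirror the proof of Lemma~\ref{lemma: multistep RTDP expected value difference} (the exact case), while carefully tracking that the stored value function now lives on the reduced space $\mathcal{S}_\phi$, and that the update rule uses a $\min$ to preserve monotonicity. So I will proceed essentially by the same `recipe' as for $h$-RTDP, $h$-RTDP-AM, and $h$-RTDP-AV, but replace every occurrence of a concrete state $s$ in the stored value with the abstract image $\phi(s)$, and replace $T^h$ by $T_\phi^h$ as defined in~\eqref{eq: h value and bellman abstraction}.

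First, I would fix $n \in \{0\} \cup [H/h-1]$ and $t = n h + 1$, a boundary time step at which an update is performed. By the update rule of $h$-RTDP-AA (Algorithm~\ref{algo: RTDP with abstractions}),
\begin{align*}
\bar V_{\phi,t}^k(\phi_t(s_t^k)) \;\leq\; T_\phi^{h}\bar V_{\phi,h_c}^{k-1}(s_t^k),
\end{align*}
with inequality rather than equality because of the minimum taken with $\bar V_{\phi,t}^{k-1}(\phi_t(s_t^k))$. Next, I would expand the right-hand side using the policy form in~\eqref{eq: h value and bellman abstraction},
\begin{align*}
T_\phi^{h} \bar V_{\phi,h_c}^{k-1}(s_t^k) = \E\!\left[\sum_{t'=t}^{t+h-1} r(s_{t'}^k, a_{t'}^k) + \bar V_{\phi,t+h}^{k-1}(\phi_{t+h}(s_{t+h}^k)) \,\Big|\, \pi_k, s_t^k\right],
\end{align*}
and then, as in the passage {\bf (a)} of Lemma~\ref{lemma: multistep RTDP expected value difference}, exchange the conditioning from $(\pi_k, s_t^k)$ to $(\mathcal{F}_{k-1}, s_t^k)$ using the fact that $\pi_k$ is $\mathcal{F}_{k-1}$-measurable and that the environment dynamics from $s_t^k$ onward are independent of the earlier history.

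Next, I would take conditional expectation with respect to $\mathcal{F}_{k-1}$ and sum the resulting inequality over $n = 0, 1, \dots, H/h - 1$. Using $\bar V_{\phi, H+1}^k \equiv 0$, the reward terms telescope into $\E[\sum_{t=1}^H r(s_t^k, a_t^k) \mid \mathcal{F}_{k-1}] = V_1^{\pi_k}(s_1^k)$, and the boundary value terms rearrange into
\begin{align*}
\bar V_1^k(\phi_1(s_1^k)) - V_1^{\pi_k}(s_1^k) \leq \sum_{n=1}^{H/h-1} \E\!\left[ \bar V_{\phi,nh+1}^{k-1}(\phi_{nh+1}(s_{nh+1}^k)) - \bar V_{\phi,nh+1}^{k}(\phi_{nh+1}(s_{nh+1}^k)) \,\Big|\, \mathcal{F}_{k-1} \right].
\end{align*}

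Finally, I would convert this on-trajectory quantity to a sum over the entire abstract state space $\mathcal{S}_\phi$ by invoking Lemma~\ref{lemma: On trajectory regret to Uniform regret} with the underlying state space chosen to be $\mathcal{S}_\phi$ and with $g_{nh+1}^k = \bar V_{\phi, nh+1}^k$. The main obstacle is verifying the hypothesis of that lemma in the abstracted setting: at each episode $k$ and boundary index $n$, only the single abstract state $\phi_{nh+1}(s_{nh+1}^k)$ is updated, the quantity $\phi_{nh+1}(s_{nh+1}^k)$ is $\mathcal{F}_{k-1}$-measurable through $s_{nh+1}^k$, and the new value $\bar V_{\phi, nh+1}^{k}(\phi_{nh+1}(s_{nh+1}^k))$ is a deterministic function of $\mathcal{F}_{k-1}$ since it is obtained by planning with the known model starting from $s_{nh+1}^k$. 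Once this is in place, applying the lemma yields exactly the claim.
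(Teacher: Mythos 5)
Your proposal follows essentially the same route as the paper's proof: use the $\min$ in the update rule to get an inequality against $T_\phi^h \bar V_{\phi,h_c}^{k-1}(s_t^k)$, rewrite that operator via the executed policy $\pi_k$, switch the conditioning to $\mathcal{F}_{k-1}$, telescope over the $H/h$ boundary steps using $\bar V_{\phi,H+1}\equiv 0$, and convert the on-trajectory sum to a sum over $\mathcal{S}_\phi$ via Lemma~\ref{lemma: On trajectory regret to Uniform regret} with $\tilde{\mathcal{S}}=\mathcal{S}_\phi$. One wording nit: $\phi_{nh+1}(s_{nh+1}^k)$ for $n\geq 1$ is not $\mathcal{F}_{k-1}$-measurable (it is realized during episode $k$); what the lemma actually requires, and what you correctly state afterwards, is that the state process is adapted and the update \emph{rule} is $\mathcal{F}_{k-1}$-measurable.
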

% \rdtpExpectedValueUpdateAbstraction*
\begin{proof}
Let $n\in \brc*{0} \cup [\frac{H}{\multistep}-1]$ and let $t= 1+hn$ be a time step in which a value update is taking place. By the definition of the update rule, the following holds for the update at the visited state $s_t^k$:

\begin{align*}
     \bar{V}_t^{k}(\phi_t(s_t^k)) &\leq  \E\brs*{\sum_{t'=t}^{t+h-1}r(s^k_{t'},a^k_{t'}) + \bar{V}^{k-1}_{t+h}(\phi_{t+h}(s_{t+h}^k)) \mid \pi_k,s_t^k}\\
     &=\E\brs*{\sum_{t'=t}^{t+h-1}r(s^k_{t'},a^k_{t'}) + \bar{V}^{k-1}_{t+h}(\phi_{t+h}(s_{t+h}^k)) \mid \mathcal{F}_{k-1},s_t^k}
\end{align*}
where the last relation follows by the same argument as in~\eqref{eq: model updates to filtration}.

Taking the conditional expectation w.r.t. $\mathcal{F}_{k-1}$ and using the tower property we get,
\begin{align*}
    \E\brs*{\bar{V}_t^{k}(\phi_t(s_t^k))\mid \mathcal{F}_{k-1}} \leq \E\brs*{\sum_{t'=t}^{t+h-1}r(s^k_{t'},a^k_{t'}) + \bar{V}^{k-1}_{t+h}(\phi_{t+h}(s_{t+h}^k)) \mid \mathcal{F}_{k-1}}.
\end{align*}
Denote $s_{\phi,t}^k\eqdef \phi_t(s_t^k).$ Summing the above relation for all $n\in \brc*{0} \cup [\frac{H}{\multistep}-1]$, using linearity of expectation, and the fact $\bar{V}^{k}_{H+1}(\phi_{H+1}(s))=0$ for all $s,k$,
\begin{align*}
    &\sum_{n=0}^{\frac{H}{\multistep}-1 } \E\brs*{\bar{V}_{1+n\multistep}^{k}(s_{\phi,1+n\multistep}^k) \mid \F_{k-1}} 
      \leq \E\brs*{\sum_{t=1}^H r(s_t^k,a_t^k)\mid \F_{k-1}} + \sum_{n=1}^{\frac{H}{\multistep}-1 } \E\brs*{ \bar{V}_{1+n\multistep}^{k-1}(s_{\phi,1+n\multistep}^k) \mid \F_{k-1}} \\
     \iff & \bar{V}^{k}_{1}(s_{\phi,1}^k)+\sum_{n=1}^{\frac{H}{\multistep}-1 } \E\brs*{\bar{V}_{1+n\multistep}^{k}(s_{\phi,1+n\multistep}^k) \mid \F_{k-1}} \leq \E\brs*{\sum_{t=1}^H r(s_t^k,a_t^k)\mid \F_{k-1}} + \sum_{n=1}^{\frac{H}{\multistep}-1 } \E\brs*{ \bar{V}_{1+n\multistep}^{k-1}(s_{\phi,1+n\multistep}^k) \mid \F_{k-1}} \\
     \iff & \bar{V}^{k}_{1}(s_{\phi,1}^k)+\sum_{n=1}^{\frac{H}{\multistep}-1 } \E\brs*{\bar{V}_{1+n\multistep}^{k}(s_{\phi,1+n\multistep}^k) \mid \F_{k-1}} \leq V^{\pi_k}(s_1^k) + \sum_{n=1}^{\frac{H}{\multistep}-1 } \E\brs*{ \bar{V}_{1+n\multistep}^{k-1}(s_{\phi,1+n\multistep}^k) \mid \F_{k-1}} \\
     \iff & \bar{V}^{k}_{1}(s_{\phi,1}^k) - V^{\pi_k}(s_1^k) \leq \sum_{n=1}^{\frac{H}{\multistep}-1 } \E\brs*{ \bar{V}_{1+n\multistep}^{k-1}(s_{\phi,1+n\multistep}^k) -\bar{V}_{1+n\multistep}^{k}(s_{\phi,1+n\multistep}^k) \mid \F_{k-1}} \\
      \iff & \bar{V}^{k}_{1}(s_{\phi,1}^k) - V^{\pi_k}(s_1^k) \leq \sum_{k=1}^K \sum_{n=1}^{\frac{H}{\multistep}-1}\sum_{s_\phi\in \mathcal{S}_\phi} \bar{V}^{k-1}_{n\multistep+1}(s_\phi) - \E[\bar{V}^{k}_{n\multistep+1}(s_\phi)\mid \F_{k-1}]
\end{align*}

The second line holds by the fact $s_1^k$ is measurable w.r.t. $\mathcal{F}_{k-1}$, the third line holds since $$V^{\pi_k}_1(s_1^k)=\E\brs*{\sum_{t=1}^H r(s_t^k,a_t^k)\mid \F_{k-1}}.$$ 

The fifth line holds by Lemma~\ref{lemma: On trajectory regret to Uniform regret} with $\bar V^k_t=g_t^k$ for $t=n\multistep+1$. Furthermore, we set $\tilde{\mathcal{S}}$ of Lemma~\ref{lemma: On trajectory regret to Uniform regret} to be $\mathcal{S}_\phi$. See that the update of $\bar V^k_t$ occurs only at the visited state $s_{\phi,t}^k=\phi(s_t^k)$ of the abstracted state space. Furthermore, the update rule uses $\bar V^{k-1}_{\phi,t+1}$, i.e., it is measurable w.r.t. to $\mathcal{F}_{k-1}$, and it is valid to apply the lemma. 
\end{proof}

\TheoremRegretRTDPAbstraction*

Before supplying with the proof observe the following remark.
% \begin{remark}\label{remark: abstraction s is in its inverse}
% By definition we have that $s\in \phi^{-1}(s)$, since $\phi(s)=\phi(s)$.
% \end{remark}

\begin{proof}
We start by proving { \bf claim (1)}. The following bounds on the regret hold.

\begin{align}
     \Regret(K)&\eqdef \sum_{k=1}^K V^*_1(s^k_1)- V^{\pi_k}_1(s^k_1)
      \nonumber \\
     &\leq  \sum_{k=1}^K \max_{s\in \Phi_1(s^k_1)}V^*_1(s)- V^{\pi_k}_1(s^k_1)  \nonumber \\
     &\leq  \sum_{k=1}^K \bar{V}_1^{k}(\phi_1(s^k_1))- V^{\pi_k}_1(s^k_1) + \epsilon_A\frac{H}{\multistep} \nonumber \\
    %  &= \epsilon_A\frac{H}{\multistep} K+ \sum_{k=1}^K \sum_{n=1}^{\frac{H}{\multistep}-1}\E[ \bar{V}^{k-1}_{n\multistep+1}(\phi(s_{n\multistep+1}^k)) - \bar{V}^{k}_{n\multistep+1}(\phi(s_{n\multistep+1}^k))\mid \F_{k-1}]  \nonumber\\
     &\leq  \epsilon_A\frac{H}{\multistep} K + \sum_{k=1}^K \sum_{n=1}^{\frac{H}{\multistep}-1}\sum_{s_\phi\in \mathcal{S}_\phi} \bar{V}^{k-1}_{n\multistep+1}(s_\phi) - \E[\bar{V}^{k}_{n\multistep+1}(s_\phi)\mid \F_{k-1}] \label{eq: regret bound multiple step rtdp abtractions}
\end{align}
The second relation holds the definition of the $\max$ operator and since $s_1^k\in \phi(s_1^k)$ (by definition we have that $s\in \Phi_{t}(s)$, as $\phi_t(s)=\phi_t(s)$ for any $t$). The third relation holds by the approximate optimism of the value function (Lemma \ref{lemma:multistep rtdp properties abstraction}), and the forth relation is by Lemma \ref{lemma: multistep RTDP expected value difference abstraction}. 

We now observe the regret is a regret of a Decreasing Bounded Process. Let
\begin{align}
    X_k \eqdef \sum_{n=1}^{\frac{H}{\multistep}-1}\sum_{s_\phi\in \mathcal{S}_\phi} \bar{V}^{k}_{n\multistep+1}(s_\phi), \label{def: DBP multiple rtdp abstractions}
\end{align}
and observe that $\brc*{X_k}_{g\geq 0}$ is a Decreasing Bounded Process.
\begin{enumerate}
    \item It is decreasing since for all $s_\phi\in\mathcal{S}_\phi,t$ $\bar{V}^k_t(s_\phi)\leq \bar{V}^{k-1}_t(s_\phi)$ by Lemma~\ref{lemma:multistep rtdp properties abstraction}. Thus, their sum is also decreasing.
    \item It is bounded since for all $s\in \mathcal{S}_\phi,t$ $\bar{V}^k_t(s_\phi)\geq 0$ by Lemma~\ref{lemma:multistep rtdp properties abstraction}. Thus, the sum is bounded from below by $0$.
\end{enumerate}

See that the initial value can be bounded as follows,
\begin{align*}
    X_0 &= \sum_{n=1}^{\frac{H}{\multistep}-1}\sum_{s_\phi\in \mathcal{S}_\phi} \bar{V}^{0}_{n\multistep+1}(s_\phi)\leq \sum_{n=1}^{\frac{H}{\multistep}-1}\sum_{s_\phi\in \mathcal{S}_\phi} H= \frac{S_\phi H(H-\multistep)}{\multistep}.
\end{align*}

Using linearity of expectation and the definition \eqref{def: DBP multiple rtdp} we observe that $\eqref{eq: regret bound multiple step rtdp abtractions}$ can be written,
\begin{align*}
    \Regret(K) \leq \eqref{eq: regret bound multiple step rtdp abtractions} = \epsilon_A\frac{H}{\multistep} K + \sum_{k=1}^K X_{k-1} - \E[X_k\mid \mathcal{F}_{k-1}],
\end{align*}

which is regret of A Bounded Decreasing Process. Applying the bound on the regret of a DRP, Theorem~\ref{theorem: regret of decreasing bounded process}, we conclude the proof of the first claim. 

We now prove {\bf claim (2)} using the proving technique at Theorem~\ref{theorem: regret multistep rtdp}. Denote $\Delta_A = H\epsilon_A$. The following relations hold for all $\epsilon>0$. 
% \todoy{should we use $V^{k-1}$ and not $k$ in the following proofs in the indicator function? }
\begin{align}
    &\ind\brc*{\bar{V}_1^{k}(\phi_1(s^k_1))-V_1^{\pi_k}(s^k_1) \geq  \epsilon} \epsilon  \nonumber \\
    &\leq \ind\brc*{\bar{V}_1^{k}(\phi_1(s^k_1))-V_1^{\pi_k}(s^k_1)  \geq  \epsilon } \br*{\bar{V}_1^{k}(\phi_1(s^k_1))-V_1^{\pi_k}(s^k_1) }
    \nonumber \\
    &\leq \ind\brc*{\bar{V}_1^{k}(\phi_1(s^k_1))-V_1^{\pi_k}(s^k_1) \geq \epsilon}\br*{\sum_{n=1}^{\frac{H}{\multistep}-1}\sum_{s_\phi\in\mathcal{S}_\phi} \bar{V}^{k-1}_{n\multistep+1}(s_\phi) - \E[\bar{V}^{k}_{n\multistep+1}(s_\phi)\mid \F_{k-1}] }\nonumber \\
    &= \ind\brc*{\bar{V}_1^{k}(s^k_1)-V_1^{\pi_k}(s^k_1) \geq \epsilon}\br*{X_{k-1} -\E[X_{k}\mid \mathcal{F}_{k-1}]}. \label{eq: theorem abstraction value first relation}
\end{align}
The first relation holds by the indicator function and the second relation holds by Lemma~\ref{lemma: multistep RTDP expected value difference abstraction}. The forth relation holds by  the definition of $X_k$~\eqref{def: DBP multiple rtdp abstractions} and linearity of expectation. 

% Using an algebraic manipulation the above leads to the following relation,
% \begin{align}
%     \ind\brc*{\bar{V}_1^{k}(s^k_1) - V_1^{\pi_k}(s^k_1) \geq  \frac{\Delta_V}{2} +\epsilon} \epsilon  \leq \ind\brc*{\bar{V}_1^{k}(s^k_1)-V_1^{\pi_k}(s^k_1) \geq \frac{\Delta_V}{2} +\epsilon}\br*{X_{k-1} -\E[X_{k}\mid \mathcal{F}_{k-1}]}
% \end{align}

As we wish the final performance to be compared to $V^*$ we use the the first claim of Lemma~\ref{lemma:multistep rtdp properties abstraction}, by which for all $s,k$, $\bar{V}_1^{k}(\phi_1(s)) \geq V_1^{*}(s) - \frac{ \Delta_A}{\multistep}$. This implies that 
\begin{align}
    \ind\brc*{ V_1^{*}(s^k_1)- V_1^{\pi_k}(s^k_1) \geq  \frac{ \Delta_A}{\multistep} +\epsilon } \leq \ind\brc*{\bar V_1^{k}(\phi_1(s^k_1))-V_1^{\pi_k}(s^k_1) \geq  \epsilon }\label{eq: theorem approximate abstraction second relation}.
\end{align}

Combining the above we get
\begin{align}
    &\ind\brc*{ V_1^{*}(s^k_1)- V_1^{\pi_k}(s^k_1) \geq  \frac{ \Delta_A}{\multistep}  +\epsilon }\epsilon \nonumber \\
    &\leq \ind\brc*{\bar{V}_1^{k}(\phi_1(s^k_1)) - V_1^{\pi_k}(s^k_1) \geq  \epsilon} \epsilon  \nonumber\\
    &\leq \ind\brc*{\bar{V}_1^{k}(\phi_1(s^k_1))-V_1^{\pi_k}(s^k_1) \geq \epsilon}\br*{X_{k-1} -\E[X_{k}\mid \mathcal{F}_{k-1}]}. \label{eq: central identity second claim approximate abstraction}
\end{align}
The first relation is by~\eqref{eq: theorem approximate abstraction second relation} and the second relation by~\eqref{eq: theorem abstraction value first relation}.

Define $N_\epsilon(K) = \sum_{k=1}^K \ind\brc*{V_1^{*}(s^k_1)-V_1^{\pi_k}(s^k_1) \geq \frac{ \Delta_A}{\multistep} + \epsilon}$ as the  number of times $V^*_1(s_1^k)-V^{\pi_k}_1(s_1^k) \geq \frac{ \Delta_A}{\multistep} + \epsilon $ at the first $K$ episodes. Summing the above inequality~\eqref{eq: central identity second claim approximate abstraction} for all $k\in[K]$ and denote  we get that for all $\epsilon>0$
\begin{align*}
    &N_\epsilon(K)\epsilon = \sum_{k=1}^K \ind\brc*{V_1^{*}(s^k_1)-V_1^{\pi_k}(s^k_1) \geq \frac{ \Delta_A}{\multistep} + \epsilon} \epsilon \\
    &\leq \sum_{k=1}^K \ind\brc*{\bar{V}_1^{k}(\phi_1(s^k_1))-V_1^{\pi_k}(s^k_1) \geq \epsilon}\br*{X_{k-1} -\E[X_{k}\mid \mathcal{F}_{k-1}]}\\
    &\leq \sum_{k=1}^K X_{k-1} -\E[X_{k}\mid \mathcal{F}_{k-1}].
\end{align*}
The first relation holds by definition, the second by~\eqref{eq: central identity second claim approximate abstraction} and the third relation holds as $\brc*{X_{k}}_{k\geq 0}$ is a DBP~\eqref{def: DBP multiple rtdp abstractions} and, thus, ${X_{k-1} - \E[X_{k}\mid \mathcal{F}_{k-1}] \geq 0}$ a.s. . Thus, the following relation holds
\begin{align*}
     \brc*{\forall K>0: \sum_{k=1}^K X_{k-1} -\E[X_{k}\mid \mathcal{F}_{k-1}] \leq \frac{9SH(H-\multistep)}{\multistep} \ln\frac{3}{\delta} } \subseteq \brc*{\forall \epsilon>0: N_\epsilon(K) \epsilon \leq \frac{9SH(H-\multistep)}{\multistep} \ln\frac{3}{\delta} },
\end{align*}

from which we get that for any $K>0$
\begin{align*}
    &\Pr\br*{\forall \epsilon>0: N_\epsilon(K) \epsilon \leq \frac{9SH(H-\multistep)}{\multistep} \ln\frac{3}{\delta} } \\
    &\geq \Pr\br*{\forall K>0: \sum_{k=1}^K X_{k-1} -\E[X_{k}\mid \mathcal{F}_{k-1}] \leq \frac{9SH(H\multistep)}{\multistep} \ln\frac{3}{\delta} } \geq 1- \delta,
\end{align*}
and the third relation holds the bound on the regret of DBP, Theorem~\ref{theorem: regret of decreasing bounded process}. Equivalently, for any $K>0$, 
\begin{align}
    \Pr\br*{\exists \epsilon>0: N_\epsilon(K) \epsilon \geq \frac{9SH(H-\multistep)}{\multistep} \ln\frac{3}{\delta} } \leq \delta.\label{eq: final pac bound multistep rtdp approixamte abstractions}
\end{align}

Applying the Monotone Convergence Theorem as in the proof of Theorem~\ref{theorem: regret multistep rtdp} we conclude the proof.

\end{proof}

%####################################################################################################################
%#############################################USEFUL LEMMAS#####################################################
%####################################################################################################################

\newpage
\section{Useful Lemmas}
\label{sec:useful-lemma}

The following lemma is a generalization of Lemma~34 in~\cite{efroni2019tight}. 

\begin{lemma}[On Trajectory Regret to Uniform Regret]\label{lemma: On trajectory regret to Uniform regret}
For any $t\in[H]$, let $\brc*{s_t^k,\mathcal{F}_{k}}_{k\geq 0}$ be a random process where $\brc*{s_t^k}_{k\geq 0}$ is adapted to the filtration $\brc*{\mathcal{F}_{k}}_{k\geq 0}$ and $s_t^k\in \tilde{\mathcal{S}}$ where $ \tilde{\mathcal{S}}$ is a finite set of all possible realizations of $s_t^k$ with cardinally $\tilde{S}\eqdef |\tilde{\mathcal{S}}|$. Let $g^k_t\in \mathbb{R}^{\tilde{S}}$ and denoting the $s\in \tilde{\mathcal{S}}$ entry of the vector as $g^k_t(s)$. Furthermore, let $g^k_t(s)$ be updated only at the state $s_t^k$ by an update rule which is $\mathcal{F}_{k-1}$ measurable, i.e.,
\begin{align*}
    g^k_t(s) = \begin{cases}
    f^{k-1}_t(s),\mathrm{if\ } s=s_t^k,\\
    g^{k-1}_t(s),\mathrm{o.w. . }
    \end{cases}
\end{align*}
Where $f^{k-1}_t(s)$ is an update rule $\mathcal{F}_{k-1}$ measurable. Then,

$$\sum_{k=1}^K \E[ g^{k-1}_t(s_t^k) - g^{k}_t(s_t^k)\mid \F_{k-1}] = \sum_{k=1}^K \sum_{s\in \tilde{\mathcal{S}}}  g^{k-1}_t(s)- \E[g^{k}_t(s)  \mid \F_{k-1}] $$

\end{lemma}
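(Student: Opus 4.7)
The plan is to prove the identity per episode $k$ (no telescoping in $k$ is needed, since the two sides already have matching per-episode summands), and to reduce everything to the observation that only the visited state $s_t^k$ has its value changed.

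First, I would fix $k\ge 1$ and $s\in\tilde{\mathcal S}$ and compute $\mathbb{E}[g^k_t(s)\mid \mathcal F_{k-1}]$ explicitly. By the given update rule, $g^k_t(s) = f^{k-1}_t(s)\mathbbm{1}\{s_t^k=s\} + g^{k-1}_t(s)\mathbbm{1}\{s_t^k\neq s\}$. Because both $f^{k-1}_t(s)$ and $g^{k-1}_t(s)$ are $\mathcal F_{k-1}$-measurable, they can be pulled out of the conditional expectation, yielding
$$\mathbb{E}[g^k_t(s)\mid \mathcal F_{k-1}] = g^{k-1}_t(s) + \bigl(f^{k-1}_t(s)-g^{k-1}_t(s)\bigr)\Pr(s_t^k=s\mid \mathcal F_{k-1}).$$
Rearranging gives $g^{k-1}_t(s) - \mathbb{E}[g^k_t(s)\mid \mathcal F_{k-1}] = \bigl(g^{k-1}_t(s) - f^{k-1}_t(s)\bigr)\Pr(s_t^k=s\mid\mathcal F_{k-1})$. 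This is the key identity: it says that the only non-zero contribution to the per-state difference comes from the probability that the process actually visits $s$ at step $t$ in episode $k$.

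Next, I would sum this over $s\in\tilde{\mathcal S}$. Again exploiting that $g^{k-1}_t$ and $f^{k-1}_t$ are $\mathcal F_{k-1}$-measurable, the sum on the right collapses to a conditional expectation:
$$\sum_{s\in\tilde{\mathcal S}}\bigl(g^{k-1}_t(s)-f^{k-1}_t(s)\bigr)\Pr(s_t^k=s\mid\mathcal F_{k-1}) = \mathbb{E}\bigl[g^{k-1}_t(s_t^k) - f^{k-1}_t(s_t^k)\mid\mathcal F_{k-1}\bigr].$$
Finally, since the update rule gives $f^{k-1}_t(s_t^k)=g^k_t(s_t^k)$, this expectation equals $\mathbb{E}[g^{k-1}_t(s_t^k)-g^k_t(s_t^k)\mid\mathcal F_{k-1}]$, matching the $k$-th summand on the left-hand side. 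Summing this per-episode identity over $k=1,\ldots,K$ yields the claim.

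There is no serious obstacle: the step to be careful about is the measurability bookkeeping, making sure that $f^{k-1}_t$, $g^{k-1}_t$ and $s_t^{k-1}$ are $\mathcal F_{k-1}$-measurable (so they may be treated as constants under $\mathbb{E}[\,\cdot\,\mid\mathcal F_{k-1}]$), while $s_t^k$ is genuinely random conditional on $\mathcal F_{k-1}$ (so that $\Pr(s_t^k=s\mid\mathcal F_{k-1})$ is the correct weight to recognize). Everything else is the elementary observation that the sum over $\tilde{\mathcal S}$ of a function weighted by a conditional point-mass distribution equals the corresponding conditional expectation.
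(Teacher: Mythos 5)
Your proposal is correct and follows essentially the same route as the paper's proof: both arguments reduce to the facts that only the visited state's entry changes, that $f^{k-1}_t$ and $g^{k-1}_t$ are $\mathcal F_{k-1}$-measurable, and that summing over $s\in\tilde{\mathcal S}$ against the conditional point mass $\Pr(s_t^k=s\mid\mathcal F_{k-1})$ recovers a conditional expectation. The only cosmetic difference is direction: the paper starts from the left-hand side and expands with indicators $\ind\{s=s_t^k\}$, whereas you start from the per-state right-hand-side terms and collapse them, which is the same computation run in reverse.
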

\begin{proof}
The following relations hold.
\begin{align}
     &\sum_{k=1}^K\sum_{t=1}^{H} \E[ g_t^{k-1}(s_t^k) - g_t^{k}(s_t^k)\mid \F_{k-1}] \nonumber \\ 
     &=\sum_{k=1}^K\sum_{t=1}^{H}\sum_{s\in \tilde{\mathcal{S}}} \E[ \ind\brc*{s =s_t^k}g_t^{k-1}(s) - \ind\brc*{s =s_t^k}g_t^{k}(s)\mid \F_{k-1}] \nonumber\\
     &\stackrel{(1)}{=}\sum_{k=1}^K\sum_{t=1}^{H}\sum_{s\in \tilde{\mathcal{S}}} \E[ \ind\brc*{s =s_t^k}g_t^{k-1}(s) - \ind\brc*{s =s_t^k}f_t^{k-1}(s)\mid \F_{k-1}] \nonumber\\
      &\stackrel{(2)}{=}\sum_{t=1}^{H}\sum_{s\in \tilde{\mathcal{S}}} \sum_{k=1}^K \E[\ind\brc*{s =s_t^k}g_t^{k-1}(s) + \ind\brc*{s \neq s_t^k}g_t^{k-1}(s)\mid \F_{k-1}]   \nonumber\\
      &\quad\quad\quad\quad\quad- \E[\ind\brc*{s =s_t^k}f_t^{k-1}(s) +\ind\brc*{s \neq s_t^k}g_t^{k-1}(s)  \mid \F_{k-1}] \nonumber\\
        &\stackrel{(3)}{=}\sum_{t=1}^{H}\sum_{s\in \tilde{\mathcal{S}}} \sum_{k=1}^K g_t^{k-1}(s)- \E[\ind\brc*{s =s_t^k}f_t^{k-1}(s) +\ind\brc*{s \neq s_t^k}g_t^{k-1}(s)  \mid \F_{k-1}] \nonumber\\
      &\stackrel{(4)}{=}\sum_{t=1}^{H}\sum_{s\in \tilde{\mathcal{S}}} \sum_{k=1}^K g_t^{k-1}(s)- \E[g_t^{k}(s)  \mid \F_{k-1}].
\end{align}
Relation $(1)$ holds since for $s=s_t^k$ the vector $g_k^t$ is updated according by $f^{k-1}$. Relation $(2)$ holds by adding and subtracting $\ind\brc*{s \neq s_t^k}g_t^{k-1}(s)$ while using the linearity of expectation. $(3)$ holds since for any event $\ind\brc{A}+\ind\brc{A^c}=1$ and since $ g_t^{k-1}$ is $\F_{k-1}$ measurable. $(4)$ holds by the definition of the update rule,
\begin{align*}
    &\E[\ind\brc*{s =s_t^k}f_t^{k-1}(s) +\ind\brc*{s \neq s_t^k}g_t^{k-1}(s)  \mid \F_{k-1}] \\
    &=\E[\ind\brc*{s =s_t^k}  \mid \F_{k-1}] f_t^{k-1}(s) +\E[\ind\brc*{s \neq s_t^k}\mid \F_{k-1}] g_t^{k-1}(s) \\
    &= \Pr(s_t^k=s\mid \mathcal{F}_{k-1})f_t^{k-1}(s) + \Pr(s_t^k\neq s\mid \mathcal{F}_{k-1})g_t^{k-1}(s) = \E[g_t^{k}(s) \mid \mathcal{F}_{k-1}].
\end{align*}
Where we used that $g^{k-1}_t(s)$ is $\mathcal{F}_{k-1}$ measurable and the assumption that $f^{k-1}_t(s)$ is $\mathcal{F}_{k-1}$ measurable in the first relation.
\end{proof}

The following lemma is a variant of a well known error propagation analysis in case of an approximate model.
\begin{lemma}[Model Error Propagation]\label{lemma: model error propogation}
Let $\norm{(P(\cdot \mid s,a) - \hat{P}(\cdot \mid s,a))}\leq \epsilon_P$ for any $s,a$. Then, for any policy $\pi$,
\begin{align*}
    \forall s_1\in \mathcal{S},\ \sum_{s_{n}} \bra*{P^{\pi}(s_{n}\mid s_1) - \hat{P}^{\pi}(s_{n}\mid s_1)} \leq n \epsilon_P
\end{align*}
\end{lemma}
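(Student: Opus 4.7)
The plan is to prove the bound by induction on $n$, using the Markov decomposition of the $n$-step transition kernel together with a single hybrid (telescoping) term.

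For the base case $n=1$, we have $P^\pi(s_1' \mid s_1) = P(s_1' \mid s_1, \pi(s_1))$ and analogously for $\hat P^\pi$, so the assumption $\norm{P(\cdot\mid s,a) - \hat P(\cdot\mid s,a)}_1 \leq \epsilon_P$ gives $\sum_{s_1'} |P^\pi(s_1'\mid s_1) - \hat P^\pi(s_1'\mid s_1)| \leq \epsilon_P$ immediately.

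For the inductive step, assume the claim at $n-1$. I would expand using the Markov property
\[
P^\pi(s_n \mid s_1) = \sum_{s_{n-1}} P^\pi(s_{n-1}\mid s_1)\, P(s_n \mid s_{n-1}, \pi(s_{n-1})),
\]
and similarly for $\hat P^\pi$ with the approximate kernel. Introducing the hybrid quantity $\sum_{s_{n-1}} \hat P^\pi(s_{n-1}\mid s_1)\, P(s_n \mid s_{n-1}, \pi(s_{n-1}))$ and applying the triangle inequality bounds $|P^\pi(s_n\mid s_1) - \hat P^\pi(s_n\mid s_1)|$ by two pieces: one in which the $(n-1)$-step distributions differ and the last step uses the common kernel $P$, and one in which the $(n-1)$-step distributions agree (both equal $\hat P^\pi$) while the last step differs between $P$ and $\hat P$.

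Summing over $s_n$ and swapping orders of summation, the first piece becomes $\sum_{s_{n-1}} |P^\pi(s_{n-1}\mid s_1) - \hat P^\pi(s_{n-1}\mid s_1)|$ (the inner sum of $P(\cdot\mid s_{n-1},\pi(s_{n-1}))$ over $s_n$ equals $1$), which the induction hypothesis bounds by $(n-1)\epsilon_P$. The second piece becomes $\sum_{s_{n-1}} \hat P^\pi(s_{n-1}\mid s_1) \norm{P(\cdot\mid s_{n-1},\pi(s_{n-1})) - \hat P(\cdot\mid s_{n-1},\pi(s_{n-1}))}_1 \leq \epsilon_P$ by the single-step assumption and since $\hat P^\pi(\cdot\mid s_1)$ is a probability distribution. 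Adding the two pieces yields the desired $n\epsilon_P$ bound.

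There is no real obstacle here; the only design choice is picking the hybrid kernel so that one factor in each piece is a probability distribution (summing to $1$), which lets the single-step error accumulate additively rather than multiplicatively. This is a standard simulation-lemma-style telescoping argument.
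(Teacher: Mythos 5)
Your proof is correct and is essentially the paper's own argument: induction on $n$ with a hybrid/telescoping decomposition, bounding one piece by the induction hypothesis and the other by the one-step total-variation assumption. The only (cosmetic) difference is that you split off the \emph{last} transition while the paper conditions on the \emph{first} intermediate state $s_2$ and applies the induction hypothesis to the tail kernel; both yield the same additive $n\epsilon_P$ accumulation.
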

\begin{proof}
We prove the claim by induction. For the base case $n=1$ we get that for any $s_1\in \mathcal{S}$
\begin{align*}
        &\sum_{s_{2}} \bra*{P^{\pi}(s_{2}\mid s_1) - \hat{P}^{\pi}(s_{2}\mid s_1)} \\
        &= \sum_{s_{2}} \bra*{\sum_{a}\pi(a\mid s_1)\br*{P(s_{2}\mid s_1,a) - \hat{P}^{\pi}(s_{2}\mid s_1,a)}} \\
        &\leq \sum_{a}\pi(a\mid s_1)\sum_{s_{2}} \bra*{P(s_{2}\mid s_1,a) - \hat{P}^{\pi}(s_{2}\mid s_1,a)} \\
        &= \sum_{a}\pi(a\mid s_1)\norm{P(\cdot \mid s_1,a)- P(\cdot \mid s_1,a)}_1 \leq \epsilon_P.
\end{align*}

Assume the induction step, i.e., assume the claim holds for $k=n-1$. We now prove the induction step, i.e., for $k=n$

\begin{align*}
        &\sum_{s_{n}} \bra*{P^{\pi}(s_{n}\mid s_1) - \hat{P}^{\pi}(s_{n}\mid s_1)} \\
        &=\sum_{s_{n}} \bra*{ \sum_{s_{2}} P^{\pi}(s_{n}\mid 
s_{2})P^{\pi}(s_{2}\mid s_1) - \hat{P}^{\pi}(s_{n}\mid s_{2})\hat{P}^{\pi}(s_{2}\mid s_1)} \\
        &\leq \sum_{s_{n}} \sum_{s_{2}} \bra*{ P^{\pi}(s_{n}\mid 
s_{2})P^{\pi}(s_{2}\mid s_1) - \hat{P}^{\pi}(s_{n}\mid s_{2})\hat{P}^{\pi}(s_{2}\mid s_1)} \\
        &\leq \sum_{s_{n}} \sum_{s_{2}} \bra*{ P^{\pi}(s_{n}\mid 
s_{2})P^{\pi}(s_{2}\mid s_1) - \hat{P}^{\pi}(s_{n}\mid 
s_{2})P^{\pi}(s_{2}\mid s_1)}\\
&\quad\quad\quad\quad + \bra*{\hat{P}^{\pi}(s_{n}\mid s_{2})\hat{P}^{\pi}(s_{2}\mid s_1) - \hat{P}^{\pi}(s_{n}\mid 
s_{2})P^{\pi}(s_{2}\mid s_1)} \\
        &\leq \sum_{s_{n}} \sum_{s_{2}} P^{\pi}(s_{2}\mid s_1) \bra*{ P^{\pi}(s_{n}\mid 
s_{2}) - \hat{P}^{\pi}(s_{n}\mid 
s_{2})}\\
&\quad\quad\quad\quad + \hat{P}^{\pi}(s_{n}\mid s_{2})\bra*{\hat{P}^{\pi}(s_{2}\mid s_1) - P^{\pi}(s_{2}\mid s_1)} \\
        &\leq  \underbrace{\sum_{s_{2}} P^{\pi}(s_{2}\mid s_1)}_{=1} \br*{\max_{s'_{2}}\sum_{s_{n}}\bra*{ P^{\pi}(s_{n}\mid 
s'_{2}) - \hat{P}^{\pi}(s_{n}\mid 
s'_{2})}}\\
&\quad\quad\quad\quad +  \sum_{s_{2}}\underbrace{\br*{\sum_{s_{n}}\hat{P}^{\pi}(s_{n}\mid s_{2})}}_{=1}\bra*{\hat{P}^{\pi}(s_{2}\mid s_1) - P^{\pi}(s_{2}\mid s_1)}\\
&=  \max_{s_{2}}\sum_{s_{n}}\bra*{ P^{\pi}(s_{n}\mid 
s_{2}) - \hat{P}^{\pi}(s_{n}\mid 
s_{2})} +   \sum_{s_{2}}\bra*{\hat{P}^{\pi}(s_{2}\mid s_1) - P^{\pi}(s_{2}\mid s_1)}.
\end{align*}
By the induction hypothesis and the base case,
\begin{align*}
    &\max_{s'_{2}}\sum_{s_{n}}\bra*{ P^{\pi}(s_{n}\mid 
s'_{2}) - \hat{P}^{\pi}(s_{n}\mid 
s'_{2})} \leq \epsilon(n-1)\\
    &\sum_{s_{2}}\bra*{\hat{P}^{\pi}(s_{2}\mid s_1) - P^{\pi}(s_{2}\mid s_1)} \leq \epsilon_P,
\end{align*}
from which we prove the induction step,
\begin{align*}
    \forall s_1\in\mathcal{S},\ \norm{P^{\pi}(\cdot \mid s_1)_1 - \hat{P}^{\pi}(\cdot \mid s_1)} = \sum_{s_{n}} \bra*{P^{\pi}(s_{n}\mid s_1) - \hat{P}^{\pi}(s_{n}\mid s_1)} \leq n \epsilon_P.
\end{align*}
\end{proof}

\begin{lemma}\label{lemma: approximate model bound for planning}

Let $V^*_t(s), \hat{V}^{*}_t(s)$ be the optimal values on the MDP $\mathcal{M}, \hat{\mathcal{M}}$, respectively, and let  $V^\pi_t(s), \hat{V}^{\pi}_t(s)$ be the value of a fixed policy $\pi$ on the MDP $\mathcal{M}, \hat{\mathcal{M}}$, respectively. Then,
\begin{align*}
    &i)\ ||V^{*}_1 - \hat{V}^{*}_1||_\infty \leq \frac{H(H-1)}{2}\epsilon_P,\\
    &ii) \ \forall \pi, \norm{V^{\pi}_1 - \hat{V}^{\pi}_1}_\infty \leq \frac{H(H-1)}{2}\epsilon_P.
\end{align*}

\end{lemma}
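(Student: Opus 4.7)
The plan is to prove part (ii) first and then deduce part (i) as a quick corollary. For part (ii), fix a policy $\pi$ and expand both value functions as expectations over trajectories under their respective dynamics: $V^\pi_1(s) = \sum_{t=1}^H \sum_{s_t} P^\pi(s_t\mid s) r(s_t,\pi_t(s_t))$ and $\hat V^\pi_1(s) = \sum_{t=1}^H \sum_{s_t} \hat P^\pi(s_t\mid s) r(s_t,\pi_t(s_t))$, where $P^\pi(\cdot\mid s)$ denotes the $t$-step state distribution under policy $\pi$ starting from $s$. Subtracting and using $r(s,a)\in[0,1]$ yields
\begin{align*}
\abs{V^\pi_1(s) - \hat V^\pi_1(s)} \leq \sum_{t=1}^H \sum_{s_t} \abs{P^\pi(s_t\mid s) - \hat P^\pi(s_t\mid s)}.
\end{align*}

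Now I invoke Lemma~\ref{lemma: model error propogation}, which gives $\sum_{s_t} \abs{P^\pi(s_t\mid s) - \hat P^\pi(s_t\mid s)} \leq (t-1)\epsilon_P$ for each $t$, since reaching step $t$ from step $1$ requires $t-1$ one-step transitions (the $t=1$ term vanishes because the initial distribution is a point mass). Summing the arithmetic series gives $\sum_{t=1}^H (t-1)\epsilon_P = \frac{H(H-1)}{2}\epsilon_P$, which establishes (ii) uniformly in $s$ and $\pi$.

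For part (i), I use the standard inequality $\abs{\max_\pi f(\pi) - \max_\pi g(\pi)} \leq \max_\pi \abs{f(\pi) - g(\pi)}$ with $f(\pi) = V^\pi_1(s)$ and $g(\pi) = \hat V^\pi_1(s)$. Combined with (ii), this immediately gives $\abs{V^*_1(s) - \hat V^*_1(s)} \leq \frac{H(H-1)}{2}\epsilon_P$ for every $s$, completing the proof.

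The only real step requiring care is the application of Lemma~\ref{lemma: model error propogation}: its statement bounds $\sum_{s_n}\abs{P^\pi(s_n\mid s_1)-\hat P^\pi(s_n\mid s_1)}$ by $n\epsilon_P$ for $n$-step transitions, so I need to be slightly careful that step $t$ of the value function corresponds to $t-1$ transitions from $s_1$, which is where the factor $\tfrac12 H(H-1)$ (rather than $\tfrac12 H(H+1)$) arises. Everything else is routine bookkeeping.
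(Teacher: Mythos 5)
Your proof is correct, but it takes a genuinely different route from the paper's. The paper proves part (i) directly by backward induction on the Bellman recursion: it defines $\Delta_t(s)=\hat V^*_t(s)-V^*_t(s)$, derives the one-step recursion $\norm{\Delta_t}_\infty\leq\norm{\Delta_{t+1}}_\infty+(H-t)\epsilon_P$ (the factor $H-t$ coming from $\norm{V^*_{t+1}}_\infty\leq H-t$), and unrolls it; part (ii) is then obtained by repeating the same induction with the $\max$ over actions replaced by the fixed policy $\pi$. You instead prove (ii) first via a forward decomposition of the value as a sum of expected per-step rewards under the $t$-step state distributions, invoke Lemma~\ref{lemma: model error propogation} to bound the $\ell_1$ gap between those distributions by $(t-1)\epsilon_P$, and sum the arithmetic series; you then get (i) for free from the elementary inequality $\abs{\max_\pi f(\pi)-\max_\pi g(\pi)}\leq\max_\pi\abs{f(\pi)-g(\pi)}$. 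Your indexing care (step $t$ of the horizon corresponds to $t-1$ transitions) is exactly right and matches how the paper itself applies Lemma~\ref{lemma: model error propogation} in the proof of Lemma~\ref{lemma: RTDP approximate modle expected value difference}. Your approach is more economical in that it reuses a lemma the paper already needs elsewhere and avoids running the induction twice; the paper's backward induction is self-contained (it does not need the error-propagation lemma) and makes transparent where the per-step weight $H-t$ on the model error enters. Both yield the identical constant $\tfrac{H(H-1)}{2}\epsilon_P$.
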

\begin{proof}

Both claims follow standard techniques based on the Simulation Lemma~\cite{kearns2002near,strehl2009reinforcement}.

\paragraph{{\em (i)}}
Let $\Delta_t(s)\eqdef \hat{V}_t^{*}(s) - V^{*}_t(s), \Delta_t=\max_s \bra*{\Delta_t(s)}$. For $t=H$ we have that for all $s$
\begin{align}
    \Delta_H(s) &= \max_{a} r(s,a)+\sum_{s'}\hat{P}(s'\mid s,a)\hat{V}_{H+1}^{*}(s')-\max_a r(s,a)+\sum_{s'}P(s'\mid s,a) V^{*}_{H+1}(s')\nonumber \\
    &= \max_{a} r(s,a)+\sum_{s'}\hat{P}(s'\mid s,a)\cdot0 -\max_a r(s,a)+\sum_{s'}P(s'\mid s,a) \cdot0 = 0, \label{eq: Delta H is 0 for approximate model}
\end{align}
and the base case holds. Assume the claim holds for any $t\geq k+1$, we now prove it holds for $t=k$. The following relations hold for any $s$,
\begin{align*}
    &\Delta_t(s) = \max_{a} r(s,a)+ \sum_{s'}\hat{P}(s'\mid s,a)\hat{V}_{t+1}^{*}(s')-\max_a r(s,a)+\sum_{s'}P(s'\mid s,a) v^{*}_{t+1}(s')\\
    &\leq r(s,a^*)+\sum_{s'}\hat{P}(s'\mid s,a^*)\hat{V}_{t+1}^{*}(s')- r(s,a^*)+\sum_{s'}P(s'\mid s,a^*) V^{*}_{t+1}(s')\\
    &=\sum_{s'}\hat{P}(s'\mid s,a^*)\hat{V}_{t+1}^{*}(s')- P(s'\mid s,a^*) V^{*}_{t+1}(s')\\
    &\leq \sum_{s'}\hat{P}(s'\mid s,a^*)\underbrace{\bra*{\hat{V}_{t+1}^{*}(s') - V^{*}_{t+1}(s')}}_{\eqdef \Delta_{t+1}(s')}+ \bra*{P(s'\mid s,a^*) - \hat P(s'\mid s,a^*)} V^{*}_{t+1}(s')\\
    &\leq \sum_{s'}\hat{P}(s'\mid s,a^*)\bra*{\Delta_{t+1}(s')} +  (H-t)\epsilon_P\\
    &\leq \Delta_{t+1}\sum_{s'}\hat{P}(s'\mid s,a^*)+  (H-t)\epsilon_P = \Delta_{t+1}+(H-t)\epsilon_P \\
\end{align*}
The second relation holds by choosing $a^*$ to maximize the first term first. The forth relation by adding and subtracting $\hat{P}(s'\mid s,a^*)\hat{V}_{t+1}^{*}(s')$ and standard inequalities. The fifth relation by the fact $V^*_{t+1}(s)\leq H-t$ and the assumption that for all $s,a$ $\norm{P(\cdot\mid s,a)-\hat{P}(\cdot \mid s,a)}\leq \epsilon_P$. The sixth by the fact $\hat{P}(\cdot \mid s,a)$ is a probability distribution and thus sums to $1$. 

Lower bounding $\Delta_t(s)$ using similar technique with opposite inequalities yields,
\begin{align*}
    &\Delta_t(s)\geq - (\Delta_{t+1}+(H-t)\epsilon_P),
\end{align*}
and thus,
\begin{align*}
    |\Delta_t(s)|\leq \Delta_{t+1}+(H-t)\epsilon_P.
\end{align*}

As the above holds for any $s$ it holds for the maximizer. Thus,
\begin{align*}
    \Delta_t \leq \Delta_{t+1}+(H-t)\epsilon_P.
\end{align*}

Iterating on this relation while using $\Delta_H(s)= 0 $ by \eqref{eq: Delta H is 0 for approximate model},

\begin{align*}
    ||V^{*}_1 - \hat{V}^{*}_1||_\infty = \Delta_1 \leq \sum_{t=1}^H (H-t)\epsilon_P=\epsilon_P\sum_{t=1}^{H-1} t = \frac{H(H-1)}{2}\epsilon_P.
\end{align*}

\paragraph{{\em (ii)}} The proof of the second claim follows the same proof of the first claim, without while replacing the $\max$ operator with the a fixed policy $\pi$.
\end{proof}

\begin{lemma}[Total Contribution of Approximate Model Errors]\label{lemma: algebraic bound for approximate model}
Let $d_n \eqdef -  \frac{1}{2}(\multistep-1)\multistep\epsilon_P + (H-n)\multistep\epsilon_P$. Then,
\begin{align*}
    \sum_{n=0}^{\frac{H}{\multistep}-1 }d_{1+n\multistep} = \frac{1}{2}H(H-1)\epsilon_P.
\end{align*}
\end{lemma}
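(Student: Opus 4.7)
The plan is to treat this purely as algebraic bookkeeping, so no nontrivial inequalities or probabilistic tools are needed. Writing $M := H/h$ (which is an integer by assumption), I would split the sum into its constant part and its $n$-dependent part:
\begin{align*}
\sum_{n=0}^{M-1} d_{1+nh}
&= -\tfrac{1}{2}M(h-1)h\,\epsilon_P
   + h\epsilon_P \sum_{n=0}^{M-1}\bigl(H - 1 - nh\bigr).
\end{align*}

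Next I would evaluate the remaining sum using the standard identity $\sum_{n=0}^{M-1} n = M(M-1)/2$, obtaining
\begin{align*}
h\epsilon_P \sum_{n=0}^{M-1}(H-1-nh)
&= h\epsilon_P\Bigl(M(H-1) - h\cdot\tfrac{M(M-1)}{2}\Bigr).
\end{align*}
Using $Mh=H$, the first piece becomes $H(H-1)\epsilon_P$ and the second piece becomes $\tfrac{H(H-h)}{2}\epsilon_P$. Similarly the constant term is $\tfrac{H(h-1)}{2}\epsilon_P$.

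Substituting back,
\begin{align*}
\sum_{n=0}^{M-1} d_{1+nh}
&= -\tfrac{H(h-1)}{2}\epsilon_P + H(H-1)\epsilon_P - \tfrac{H(H-h)}{2}\epsilon_P \\
&= H(H-1)\epsilon_P - \tfrac{H(h-1)+H(H-h)}{2}\epsilon_P \\
&= H(H-1)\epsilon_P - \tfrac{H(H-1)}{2}\epsilon_P
 = \tfrac{H(H-1)}{2}\epsilon_P,
\end{align*}
which is the claimed identity. There is no genuine obstacle: the only thing to watch is the bookkeeping of the two telescoping-looking cancellations $H(h-1)+H(H-h) = H(H-1)$ and the fact that the $Mh=H$ substitution must be applied consistently to both contributions, otherwise the factors of $h$ and the factors of $H$ look mismatched.
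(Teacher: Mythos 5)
Your computation is correct and matches the paper's own proof essentially line for line: both split off the constant term (which sums to $-\tfrac{1}{2}H(h-1)\epsilon_P$ after using $Mh=H$), evaluate the arithmetic series $\sum_{n=0}^{M-1} n = M(M-1)/2$ to get the $H(H-1)\epsilon_P - \tfrac{1}{2}H(H-h)\epsilon_P$ contribution, and combine via $H(h-1)+H(H-h)=H(H-1)$. No differences worth noting.
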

\begin{proof}
The following relations hold.

\begin{align*}
    \sum_{n=0}^{\frac{H}{\multistep}-1 }d_{1+n\multistep} &= -  \frac{1}{2}H(\multistep-1)\epsilon_P +\sum_{n=0}^{\frac{H}{\multistep}-1 } (H-1-n\multistep)\multistep\epsilon_P\\
    &= -  \frac{1}{2}H(\multistep-1)\epsilon_P +H(H-1)\epsilon_P - \multistep^2\epsilon_P\sum_{n=0}^{\frac{H}{\multistep}-1 } n\\
    &= -  \frac{1}{2}H(\multistep-1)\epsilon_P +H(H-1)\epsilon_P - \frac{1}{2}\multistep^2\epsilon_P(\frac{H-\multistep}{\multistep})\frac{H}{\multistep}\\
    &= -  \frac{1}{2}H(\multistep-1)\epsilon_P +H(H-1)\epsilon_P - \frac{1}{2}H(H-\multistep)\epsilon_P\\
    &= -  \frac{1}{2}H(H-1)\epsilon_P +H(H-1)\epsilon_P = \frac{1}{2}H(H-1)\epsilon_P.
\end{align*}

\end{proof}

%####################################################################################################################
%#############################################USEFUL LEMMAS#####################################################
%####################################################################################################################

\newpage
\section{Approximate Dynamic Programming in Finite-Horizon MDPs}\label{supp: approximate dp bounds}

\begin{center}
\begin{minipage}{.45\linewidth}
\begin{algorithm}[H]
\begin{algorithmic}
\caption{(Exact) $\multistep$-DP}
\label{alg: backward induction dp}
    \STATE init: $\forall s\in \mathcal S,\; \forall n\in [\frac{H}{\multistep}],\; V_{n \multistep +1}(s)=H-n\multistep$
    \FOR{$n=\frac{H}{\multistep}-1,\frac{H}{\multistep}-2,\ldots, 1$}
    \FOR{$s\in \mathcal{S}$}
        \STATE $V_{n\multistep+1}(s) =   \br*{T^{\multistep} V_{(n+1)\multistep+1}}(s)$
    \ENDFOR
    \ENDFOR
    \STATE {\bf return:} $\brc*{ V_{n \multistep +1} }_{n=1}^{H/\multistep}$
\end{algorithmic}
\end{algorithm}
\end{minipage}
\hspace{0.5cm}
\begin{minipage}{.45\linewidth}
\begin{algorithm}[H]
\begin{algorithmic}
\caption{$\multistep$-DP with Approximate Model}
\label{alg: backward induction misspecified model dp}
    \STATE init: $\forall s\in \mathcal S,\; \forall n\in [\frac{H}{\multistep}],\; V_{n \multistep +1}(s)=H-n\multistep$
    \FOR{$n=\frac{H}{\multistep}-1,\frac{H}{\multistep}-2,\ldots, 1$}
    \FOR{$s\in \mathcal{S}$}
        \STATE $V_{n\multistep+1}(s) = \br*{\hat{T}^{\multistep}V_{(n+1)\multistep+1}}(s)$
    \ENDFOR
    \ENDFOR
    \STATE {\bf return:} $\brc*{V_{n \multistep +1}}_{n=1}^{H/\multistep}$
\end{algorithmic}
\end{algorithm}
\end{minipage}
\end{center}
\vspace{0.5cm}

In this section, we follow standard analysis~\cite{kearns2002near,strehl2009reinforcement} and establish bounds on the performance of approximate DP algorithms which update by an $\multistep$-step optimal Bellman operator~\eqref{eq:multistep bellman}. We abbreviate this class of algorithms by $h$-ADP. See that unlike previous analysis~\cite{kearns2002near,strehl2009reinforcement}, we focus on finite horizon MDPs, which is the setup in which $\multistep$-RTDP is analyzed. The different approximation setting we analyze in this section corresponds to the ones anlayzed for $\multistep$-RTDP: approximate model,  approximate value update, and approximate state abstraction. 

As a reminder and for the sake of completeness, we start by considering $\multistep$-DP Algorithm~\ref{alg: backward induction dp}, which is the exact, approximate-free, version of the following $\multistep$-ADP algorithms. The algorithm uses backward induction and a $\multistep$-step optimal Bellman operator $T^\multistep$ by which it outputs the values $\brc*{V_{n \multistep +1}}_{n=2}^{\frac{H}{\multistep}}$. Notice that it holds $\brc*{V_{n \multistep +1}}_{n=2}^{\frac{H}{\multistep}}=\brc*{V^*_{n \multistep +1}}_{n=2}^{\frac{H}{\multistep}}$ by standard arguments on the Backward Induction algorithm. Furthermore, $T^\multistep$ can be solved by Backward induction with the total computational complexity of $O(SA\multistep)$ by using Backward Induction. Thus, the total computational complexity of $\multistep$-DP is $O(SAH)$ similar to the one of standard DP, e.g., Backward Induction. 

In terms of space complexity, $\multistep$-DP stores in memory $O(S\frac{H}{\multistep})$ value entries. Observe that an $\multistep$-greedy policy~\eqref{eq: lookahead h greedy preliminaries} w.r.t. $\brc*{V_{n \multistep +1}}_{n=2}^{\frac{H}{\multistep}}$ is an optimal policy, as these values are the optimal values as previously observed.  Ultimately, one would like using these values to act in the environment by the optimal policy. If one uses the Forward-Backward DP~(Section~\ref{supp: epsiodic complexity h rtdp}) to calculate such an $\multistep$-greedy policy, then an extra $O(\multistep S_\multistep)$ space should be used, which results in total $O(S\frac{H}{\multistep}+ \multistep S_\multistep)$ space complexity (as in $\multistep$-RTDP) that decrease in $\multistep$ if $S_\multistep$ is not too big (see Remark~\ref{remark: space-comp compleixty of h rtdp}). Furthermore, the computational complexity of such approach is $O(H\multistep A S_{\multistep}S_1)$ which increases in $\multistep$.  

% Note that $T^\multistep$ can be solved by Backward induction with the total computational complexity of $O(SAH)$. This computation is done off-line, unlike the one . Furthermore, for the output of this algorithm, we have $\brc*{V_{n \multistep +1}}_{n=2}^{\frac{H}{\multistep}}=\brc*{V^*_{n \multistep +1}}_{n=2}^{\frac{H}{\multistep}}$, i.e.,~the output of $\multistep$ Backward Induction is the optimal value. Furthermore, an $\multistep$-greedy policy w.r.t.~these values, defined by~\eqref{eq: h greedy policy}, is the optimal policy.

In next sections, we consider approximate settings of $\multistep$-DP and establish that an $\multistep$-greedy policy~\eqref{eq: lookahead h greedy preliminaries} w.r.t. the output values~$\brc*{V_{n \multistep +1}}_{n=2}^{\frac{H}{\multistep}}$  has an equivalent performance to the asymptotic policy by which $\multistep$-RTDP acts.

% \todoy{the notation is $\pi^*_P,\pi^*_V,\pi^*_A$ and they are the $\multistep$-greedy policies w.r.t. the values the approximate $\multistep$-DP algorithm returns.}

\subsection{$h$-ADP with an Approximate Model}

In the case of an approximate model, we replace the Bellman operator $T$ used in $\multistep$-DP with $\hat{T}$, the Bellman operator of the approximate model $\hat{p}$ instead the true one $p$ (we assume $r$ is exactly known, which correspond to the assumption made in Section~\ref{sec: appr model}). This results in Algorithm~\ref{alg: backward induction misspecified model dp}. Similarly to Section~\ref{sec: appr model}, we assume $\norm{\hat{p}(\cdot \mid s,a)-p(\cdot\mid s,a)}_{TV}\leq \epsilon_P$, for all $(s,a)\in \mathcal{S}\times \mathcal{A}$. Furthermore, denote $\pi_P^*$ as the optimal policy of the approximate MDP. 

Equivalently to $h$-DP, Algorithm~\ref{alg: backward induction misspecified model dp} returns the optimal values of the \emph{approximate model} (Algorithm~\ref{alg: backward induction misspecified model dp} can be interpreted as exact $\multistep$-DP applied on the approximate model). Thus, the $\multistep$-greedy policy w.r.t. to the outputs of Algorithm~\ref{alg: backward induction misspecified model dp} $\brc*{V_{n \multistep +1}}_{n=2}^{\frac{H}{\multistep}}$ is the optimal policy of the approximate MDP, $\pi_P^*$. The performance of $\pi^*_{P}$ is measured by relatively to the performance of the optimal policy, i.e., we wish to bound $\norm{V^*_1 - V^{\pi^*_{p}}_1}_\infty$. This term represents the performance gap between the optimal policy of the `real' MDP to the performance of the optimal policy of the approximate MDP evaluated on the real MDP, and is bounded in the following proposition.

\begin{proposition}\label{prop: misspecified model bound}
Assume for all $(s,a)\in\mathcal{S}\times \mathcal{A}: \norm{\hat{p}(\cdot \mid s,a)-p(\cdot\mid s,a)}_{TV}\leq \epsilon_P$  and let $\pi_{P}^*$ be the optimal policy of the approximate MDP. Then,
\begin{align*}
    \norm{ V^*_1 - V^{\pi^*_{P}}_1 }_\infty \leq H(H-1)\epsilon_P.
\end{align*}
\end{proposition}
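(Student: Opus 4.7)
The plan is to reduce the claim to the two-sided simulation bound already established in Lemma~\ref{lemma: approximate model bound for planning}. The key observation is that since $\pi^*_P$ is, by definition, an optimal policy of the approximate MDP, we have $\hat{V}^*_1(s) = \hat{V}^{\pi^*_P}_1(s)$ for every $s\in\mathcal{S}$. This lets us perform the classical telescoping decomposition
\begin{align*}
V^*_1(s) - V^{\pi^*_P}_1(s)
  &= \bigl(V^*_1(s) - \hat{V}^*_1(s)\bigr) + \bigl(\hat{V}^{\pi^*_P}_1(s) - V^{\pi^*_P}_1(s)\bigr).
\end{align*}

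The first bracket is controlled by part (i) of Lemma~\ref{lemma: approximate model bound for planning}, which bounds the gap between the two optimal values by $\tfrac{H(H-1)}{2}\epsilon_P$. The second bracket is controlled by part (ii) of the same lemma, applied to the particular policy $\pi=\pi^*_P$, giving another $\tfrac{H(H-1)}{2}\epsilon_P$. Taking absolute values, using the triangle inequality, and then the supremum over $s$ yields
\[
\norm*{V^*_1 - V^{\pi^*_P}_1}_\infty \leq \tfrac{H(H-1)}{2}\epsilon_P + \tfrac{H(H-1)}{2}\epsilon_P = H(H-1)\epsilon_P,
\]
which is the desired conclusion.

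Since $V^*$ is optimal on the true MDP, the left-hand side is non-negative, so no separate lower bound is needed; the $\ell_\infty$ bound above is in fact an upper bound on the pointwise suboptimality. The main work has already been done in Lemma~\ref{lemma: approximate model bound for planning} (whose proof uses a standard induction on $t$ together with the simulation-style error-propagation step in Lemma~\ref{lemma: model error propogation}). The only potential subtlety is ensuring that the two halves of the decomposition are bounded using the correct version of the lemma: the first half needs the bound for the optimal value of the approximate model, while the second half needs the bound for an arbitrary fixed policy. Both are provided, so the argument is essentially a one-line application once the decomposition is written down, and there is no genuine obstacle beyond invoking the right sub-results.
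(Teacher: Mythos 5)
Your proof is correct and follows essentially the same route as the paper: both decompose $V^*_1 - V^{\pi^*_P}_1$ via the intermediate quantity $\hat{V}^{\pi^*_P}_1 = \hat{V}^*_1$, apply the triangle inequality, and invoke parts (i) and (ii) of Lemma~\ref{lemma: approximate model bound for planning} to get $\tfrac{H(H-1)}{2}\epsilon_P$ from each term. No substantive differences.
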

\begin{proof}
Let $\hat{V}^{\pi^*_{P}}$ be the optimal value on the approximate MDP. By using the triangle inequality, the first and second claim of Lemma~\ref{lemma: approximate model bound for planning} we conclude the proof,
\begin{align*}
     \norm{ V^*_1 - V^{\pi^*_{P}}_1 }_\infty \leq  \norm{ V^*_1 - \hat{V}^{\pi^*_{P}}_1 }_\infty + \norm{\hat{V}^{\pi^*_{P}}_1 - V^{\pi^*_{P}}_1 }_\infty \leq H(H-1)\epsilon_P.
\end{align*}
\end{proof}

% \vspace{0.5cm}
% \todoy{TODO: Initialize all values to the optimistic value, we need it to the update of the approximate $h$-DP in case of abstractions.}

\begin{center}
\begin{minipage}{.43\linewidth}
\begin{algorithm}[H]
\begin{algorithmic}
\caption{$\multistep$-DP with Approximate Value Updates}
\label{alg: backward induction dp approximate value updates}
    \STATE init: $\forall s\in \mathcal S,\; \forall n\in [\frac{H}{\multistep}],\; V_{n \multistep +1}(s)=H-n\multistep$
    \FOR{$n=\frac{H}{\multistep}-1,\frac{H}{\multistep}-2,\ldots, 1$}
    \FOR{$s\in \mathcal{S}$}
        \STATE $\bar{V}^{k}_{t}(s) =   \epsilon_V(s) + \br*{T^{\multistep}V_{t+ \multistep}}(s)$
    \ENDFOR
    \ENDFOR
    \STATE {\bf return:} $\brc*{V_{n \multistep +1}}_{n=1}^{H/\multistep}$
\end{algorithmic}
\end{algorithm}
\end{minipage}
\hspace{0.5cm}
\begin{minipage}{.53\linewidth}
\begin{algorithm}[H]
\begin{algorithmic}
\caption{$\multistep$-DP with Approximate State abstraction}
\label{alg: backward induction dp abstractions}
    \STATE init: $\forall s\in \mathcal S,\; \forall n\in [\frac{H}{\multistep}],\; V_{n \multistep +1}(s)=H-n\multistep$
    \FOR{$n=\frac{H}{\multistep}-1,\frac{H}{\multistep}-2,\ldots, 1$}
    \FOR{$s\in \mathcal{S}$}
        \STATE $\bar{V}_{n\multistep+1}(\phi(s)) =   \min\brc*{ \br*{T^{\multistep}V_{(n+1)\multistep+1}}(s),\bar{V}^{k}_{n\multistep+1}(\phi(s))}$
    \ENDFOR
    \ENDFOR
    \STATE {\bf return:} $\brc*{V_{n \multistep +1}}_{n=1}^{H/\multistep}$
\end{algorithmic}
\end{algorithm}
\end{minipage}
\end{center}

\subsection{$h$-DP with Approximate Value Updates}
In the case of a approximate value updates Algorithm~\ref{alg: backward induction dp} is replaced by an value updates with added noise $\epsilon_V(s)$, by which Algorithm~\ref{alg: backward induction dp approximate value updates} is formulated. Similarly to the assumption used for $\multistep$-RTDP with approximate value updates (see Section~\ref{sec: appr value}) we assume for all $s\in\mathcal{S}$, $|\epsilon_V(s)|\leq \epsilon_V>0$. The following proposition bounds the performance of an $\multistep$-greedy policy w.r.t. the values output by Algorithm~\ref{alg: backward induction dp approximate value updates}.

\begin{proposition}\label{prop: approximate value updates}
Assume for all $s\in\mathcal{S},\ |\epsilon_V(s)|\leq \epsilon_V$. Let $\pi_{V}^*$ be the $\multistep$-greedy policy~\eqref{eq: lookahead h greedy preliminaries} w.r.t. output Algorithm~\ref{alg: backward induction dp approximate value updates}. Then,
\begin{align*}
    \norm{ V^*_1 - V^{\pi_{V}^*}_1 }_\infty \leq \frac{2H}{\multistep}\epsilon_V.
\end{align*}
\end{proposition}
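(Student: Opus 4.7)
The plan is to propagate two separate sup-norm error bounds through a backward induction on the segment index $n$ and then combine them via the triangle inequality, exactly mirroring the standard approximate value iteration analysis but adapted to the multi-step Bellman operator. To make the recursion clean, I would introduce an auxiliary ``phantom'' value $\tilde V_1(s) := \epsilon_V(s) + T^\multistep V_{\multistep+1}(s)$ so that the family $\{\tilde V_{n\multistep+1}\}_{n=0}^{H/\multistep}$ satisfies $\tilde V_{n\multistep+1} = \epsilon_V + T^\multistep \tilde V_{(n+1)\multistep+1}$ for every $n\in\{0\}\cup[H/\multistep-1]$, with the boundary condition $\tilde V_{H+1}=0$, and so that $\tilde V_{n\multistep+1}$ agrees with the algorithm's output for $n\geq 1$.

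First, to bound the planning error $\|\tilde V_{n\multistep+1} - V^*_{n\multistep+1}\|_\infty$, I use the Bellman recursion $V^*_{n\multistep+1} = T^\multistep V^*_{(n+1)\multistep+1}$ together with the sup-norm non-expansiveness of $T^\multistep$ (which follows from the monotonicity and translation invariance $T(V+c\,e)=TV+c$ of $T$, iterated $\multistep$ times). A straightforward backward induction from $n=H/\multistep$ down to $0$ then yields $\|\tilde V_{n\multistep+1}-V^*_{n\multistep+1}\|_\infty \leq (H/\multistep - n)\epsilon_V$, and in particular $\|\tilde V_1 - V^*_1\|_\infty \leq (H/\multistep)\epsilon_V$.

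Second, to bound the policy-evaluation gap $\|\tilde V_{n\multistep+1}-V^{\pi_V^*}_{n\multistep+1}\|_\infty$, I use the defining property of $\pi_V^*$: at any state $s$ visited at time $n\multistep+1$, the $\multistep$-step trajectory prescribed by $\pi_V^*$ attains the maximum in $T^\multistep \tilde V_{(n+1)\multistep+1}(s)$, so $T^\multistep\tilde V_{(n+1)\multistep+1}(s) = T^\multistep_{\pi_V^*}\tilde V_{(n+1)\multistep+1}(s)$. Subtracting the $\multistep$-step policy Bellman equation $V^{\pi_V^*}_{n\multistep+1}(s) = T^\multistep_{\pi_V^*} V^{\pi_V^*}_{(n+1)\multistep+1}(s)$ from $\tilde V_{n\multistep+1}(s) = \epsilon_V(s) + T^\multistep_{\pi_V^*}\tilde V_{(n+1)\multistep+1}(s)$ gives
\begin{equation*}
\tilde V_{n\multistep+1}(s) - V^{\pi_V^*}_{n\multistep+1}(s) = \epsilon_V(s) + \mathbb{E}^{\pi_V^*}\!\left[\tilde V_{(n+1)\multistep+1}(s_{\multistep+1}) - V^{\pi_V^*}_{(n+1)\multistep+1}(s_{\multistep+1}) \,\middle|\, s_1=s\right],
\end{equation*}
from which $\|\tilde V_{n\multistep+1}-V^{\pi_V^*}_{n\multistep+1}\|_\infty \leq \epsilon_V + \|\tilde V_{(n+1)\multistep+1}-V^{\pi_V^*}_{(n+1)\multistep+1}\|_\infty$. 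Backward induction from $\tilde V_{H+1}=V^{\pi_V^*}_{H+1}=0$ then gives $\|\tilde V_1 - V^{\pi_V^*}_1\|_\infty \leq (H/\multistep)\epsilon_V$. Combining the two bounds via the triangle inequality yields $\|V^*_1 - V^{\pi_V^*}_1\|_\infty \leq 2(H/\multistep)\epsilon_V$, as claimed.

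The only subtle point—and the one I would take care with—is ensuring that the rewriting $T^\multistep\tilde V = T^\multistep_{\pi_V^*}\tilde V$ is legitimate for the varying-lookahead interpretation of the $\multistep$-greedy policy between segment boundaries. This is handled cleanly because the telescoped identity aggregates reward over entire $\multistep$-length segments, and within a segment $\pi_V^*$ by definition follows the inner maximizers of the $\multistep$-step backward induction over the terminal value $\tilde V_{(n+1)\multistep+1}$, so the intra-segment choices are exactly those attaining the maximum in the $T^\multistep$ operator.
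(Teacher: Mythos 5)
Your proposal is correct and follows essentially the same route as the paper's proof: both decompose the error via the triangle inequality through the algorithm's output value, bound the planning gap $\|\tilde V_1 - V_1^*\|_\infty \le (H/\multistep)\epsilon_V$ by propagating the per-segment noise through the (non-expansive) $\multistep$-step optimal Bellman operator, and bound the evaluation gap $\|\tilde V_1 - V_1^{\pi_V^*}\|_\infty \le (H/\multistep)\epsilon_V$ by observing that $\pi_V^*$ attains the maximum in $T^\multistep \tilde V$ so that $T^\multistep\tilde V = T^\multistep_{\pi_V^*}\tilde V$. Your explicit introduction of the phantom value $\tilde V_1$ and the remark on intra-segment maximizers only make explicit what the paper leaves implicit; there is no substantive difference.
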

\begin{proof}

Let $\brc*{\hat V^*_{n \multistep +1}}_{n=1}^{H/\multistep}$ denote the output of Algorithm~\ref{alg: backward induction dp approximate value updates}. We establish two claims which are of similarity to the two claims of Lemma~\ref{lemma: approximate model bound for planning}. Combining the two we prove the result.

\paragraph{{\em (i)}} The following relations hold for all $s\in\mathcal{S}$ and $n\in\brc*{0}\cup [\frac{H}{\multistep}-1]$.
\begin{align}
    &\Delta_{1+n\multistep}(s) \eqdef \hat{V}_{1+n\multistep}^{*}(s) - V^{*}_{1+n\multistep}(s) \nonumber\\
    &=\epsilon_V(s) + T^{\multistep}\hat{V}_{1+(n+1)\multistep}^{*}(s)- T^{\multistep} V^{*}_{1+(1+n)\multistep}(s')  \nonumber\\
    &=\epsilon_V(s) + \max_{a_0,\ldots,a_{\multistep-1}}\E\brs*{ \sum_{t'=0}^{\multistep-1}r(s_{t'},a_{t'}(s_{t'})) + \hat{V}_{t+h}^*(s_\multistep)\mid s_0=s}- \max_{a_0,\ldots,a_{\multistep-1}}\E\brs*{ \sum_{t'=0}^{\multistep-1}r(s_{t'},a_{t'}(s_{t'})) + V_{t+h}^*(s_\multistep)\mid s_0=s}\label{eq supp: approximate value eq 1}
\end{align}
The second relation holds by the updating equation and the third relation by definition~\eqref{eq:multistep bellman}. Let $\brc*{\hat a_0,\hat a_1,..,\hat a_{\multistep-1}}$ be the set of policies maximizes the second terms, then, by plugging this sequence to the third term we necessarily decrease it. Thus,
\begin{align*}
    \eqref{eq supp: approximate value eq 1} &\leq \epsilon_V(s) + \E\brs*{ \sum_{t'=0}^{\multistep-1}r(s_{t'},a_{t'}(s_{t'})) + \hat{V}_{t+h}^*(s_\multistep)\mid s_0=s, \brc*{ a_{t'}}_{t'=0}^{\multistep-1} = \brc*{ \hat a_{t'}}_{t'=0}^{\multistep-1} }\\
    &\quad - \E\brs*{ \sum_{t'=0}^{\multistep-1}r(s_{t'},a_{t'}(s_{t'})) + V_{t+h}^*(s_\multistep)\mid s_0=s, \brc*{ a_{t'}}_{t'=0}^{\multistep-1} = \brc*{ \hat a_{t'}}_{t'=0}^{\multistep-1} }\\
    &= \epsilon_V(s) + \E\brs*{ \hat{V}_{t+h}^*(s_\multistep) -  V_{t+h}^*(s_\multistep)\mid s_0=s,\brc*{ a_{t'}}_{t'=0}^{\multistep-1} = \brc*{ \hat a_{t'}}_{t'=0}^{\multistep-1} }\\
    &= \epsilon_V(s) + \E\brs*{ \Delta_{1+(n+1)\multistep}(s) \mid s_0=s,\brc*{ a_{t'}}_{t'=0}^{\multistep-1} = \brc*{ \hat a_{t'}}_{t'=0}^{\multistep-1} } \leq \epsilon_V + \norm{\Delta_{1+(n+1)\multistep}}_{\infty}.
\end{align*}
The second relation holds by linearity of expectation, the third relation by definition, and the forth by assumption on $\epsilon_V(s)$ and by the standard bounded $E[X]\leq \norm{X}_{\infty}$.

Repeating the above arguments while choosing the sequence which maximizes the third term in \eqref{eq supp: approximate value eq 1} allows us to lower bound \eqref{eq supp: approximate value eq 1} as follows $$\eqref{eq supp: approximate value eq 1}\geq -\epsilon_V - \norm{\Delta_{1+(n+1)\multistep}}_{\infty},$$ and thus,
\begin{align*}
    \norm{\Delta_{1+n\multistep}}_{\infty} \leq \epsilon_V +\norm{\Delta_{1+(n+1)\multistep}}_{\infty}.
\end{align*}

Solving the recursion while using $\Delta_{H+1}(s)=0$ for all $s\in\mathcal{S}$ we get
\begin{align}
    \norm{\Delta_{1}}_{\infty} \leq \frac{H}{\multistep}\epsilon_V. \label{eq supp: approximate value claim 1}
\end{align}

\paragraph{{\em (ii)}}
The following relations hold for all $s\in\mathcal{S}$ and $n\in[\frac{H}{\multistep}]$.
\begin{align*}
    &{\Delta}^{\pi^*_{V}}_{1+n\multistep}(s) \eqdef \hat{V}_{1+n\multistep}^{*}(s) - V^{\pi^*_{V}}_{1+n\multistep}(s) \\
    &= \epsilon_V(s) + \max_{a_0,..,a_{\multistep-1}}\E\brs*{ \sum_{t'=0}^{\multistep-1}r(s_{t'},a_{t'}(s_{t'})) + \hat{V}_{1+(n+1)\multistep}^*(s_\multistep)\mid s_0=s}\\
    &\quad - \E\brs*{ \sum_{t'=0}^{\multistep-1}r(s_{t'},a_{t'}(s_{t'})) + V^{\pi^*_{V}}_{1+(n+1)\multistep}(s_\multistep)\mid s_0=s,\pi^*_{V}}.
\end{align*}

By definition, the sequence which maximizes the second term is $\pi_V^*$ as it is the $\multistep$-greedy policy w.r.t. $\hat{V}^*$. Using the linearity of expectation we get 
\begin{align*}
    {\Delta}^{\pi^*_{V}}_{1+n\multistep}(s) &= \epsilon_V(s) + \E\brs*{  \hat{V}_{1+(n+1)\multistep}^*(s_\multistep)- V^{\pi^*_{V}}_{1+(n+1)\multistep}(s_\multistep)\mid s_0=s,\pi^*_{V}}\\
    &= \epsilon_V(s) + \E\brs*{  {\Delta}^{\pi^*_{V}}_{1+(n+1)\multistep}(s_{1+(n+1)\multistep})\mid s_0=s,\pi^*_{V}}.
\end{align*}

As for all $s$, $|\epsilon_V(s)|\leq \epsilon_V$, using the triangle inequality and $E[X]\leq \norm{X}_\infty$ we get the following recursion,
\begin{align*}
    \norm{{\Delta}^{\pi^*_{V}}_{1+n\multistep}}_{\infty} \leq \epsilon_V +\norm{{\Delta}^{\pi^*_{V}}_{1+(n+1)\multistep}}_{\infty}.
\end{align*}

Using $\norm{{\Delta}^{\pi^*_{V}}_{1+H}}_{\infty}=0$ we arrive to its solution,
\begin{align}
    \norm{{\Delta}^{\pi^*_{V}}_{1}}_{\infty} \leq \frac{H}{\multistep}\epsilon_V. \label{eq supp: approximate value claim 2}
\end{align}

which proves the second needed result.

\vspace{0.5cm}
Finally, using the triangle inequality and the two proven claims, \eqref{eq supp: approximate value claim 1} and \eqref{eq supp: approximate value claim 2}, we conclude the proof. 
\begin{align*}
  \norm{ V^*_1 - V^{\pi^*_{V}}_1 }_\infty &\leq \norm{ V^*_1 - \hat{V}_1 }_\infty +\norm{ \hat{V}^*_1 - V^{\pi^*_{V}}_1 }_\infty = \norm{ \Delta_1 }_\infty +\norm{ \Delta^{\pi^*_{V}}_1 }_\infty \leq 2\frac{H}{\multistep}\epsilon_V.
\end{align*}

\end{proof}

\subsection{$h$-DP with Approximate State Abstraction}\label{supp: adp approximate abstractions}
When an approximate state abstraction $\brc*{\phi_{1+n\multistep}}_{n=0}^{\frac{H}{\multistep}-1}$ is given, Algorithm~\ref{alg: backward induction dp} can be replaced by an exact value update in the reduced state space $\mathcal{S}_\phi$, as given in Algorithm~\ref{alg: backward induction dp abstractions}. This corresponds to updating a value $V\in \mathbb{R}^{S_\phi}$, instead a value $\mathbb{R}^{S}$. An obvious advantage of such an algorithm, relatively to $\multistep$-DP, is its reduced space complexity, as it only needs to store $O(\frac{H}{\multistep}S_\phi)$ value entries, instead of $O(\frac{H}{\multistep}S)$ as $\multistep$-DP.

Yet, as seen in Algorithm~\ref{alg: backward induction dp abstractions}, its computational complexity remains $O(SAH)$ as it needs to uniformly update on the entire (non-abstracted) state space. Would have we being given a representative from each equivalence classes under $\phi_{1+n\multistep}$ for every $n\in\brc*{0}\cup [\frac{H}{\multistep}]$\footnote{Differently put, if we interpret $\phi$ as clustering multiple states $s\in\mathcal{S}$ together, we would require a single representative from each such a cluster.} we could suggest an alternative Backward Induction algorithm with computational complexity of $O(S_\phi A H)$.  However, as we do not assume access to this knowledge, we are obliged to scan the entire state space, without further assumptions.

The following proposition bounds the performance of an $\multistep$-greedy policy w.r.t. the values output by  Algorithm~\ref{alg: backward induction dp abstractions}. Similarly to the analysis of the corresponding  $\multistep$-RTDP algorithm (see Section~\ref{sec: appr abstractions}), we assume  $\brc*{\phi_{1+n\multistep}}_{n=0}^{\frac{H}{\multistep}-1}$ satisfy Assumption~\ref{assumption: model abstraction}.

\begin{proposition}\label{prop: approximate abstraction}
Let $\brc*{\phi_{1+n\multistep}}_{n=0}^{\frac{H}{\multistep}-1}$ satisfy Assumption~\ref{assumption: model abstraction}. Let $\brc*{\hat{V}^*_{n \multistep +1}}_{n=1}^{\frac{H}{\multistep}}$ denote the output of Algorithm~\ref{alg: backward induction dp abstractions}  and let $\pi^*_{A}$ be the $\multistep$-greedy policy w.r.t. these approximate values~\eqref{eq: lookahead h greedy preliminaries}. Then,
\begin{align*}
    \norm{ V^*_1 - V^{\pi^*_{A}}_1 }_\infty \leq \frac{H}{\multistep}\epsilon_A.
\end{align*}
\end{proposition}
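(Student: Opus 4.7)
The proof follows the two-piece template of Proposition~\ref{prop: approximate value updates}. With a mild abuse of notation, I lift the outputs of Algorithm~\ref{alg: backward induction dp abstractions} to $\mathcal S$ by writing $\hat V^*_{1+n\multistep}(s) \eqdef \hat V^*_{1+n\multistep}(\phi_{1+n\multistep}(s))$. Applying the triangle inequality gives
\begin{align*}
\norm{V^*_1 - V^{\pi^*_A}_1}_\infty \leq \norm{V^*_1 - \hat V^*_1}_\infty + \norm{\hat V^*_1 - V^{\pi^*_A}_1}_\infty,
\end{align*}
so it suffices to bound the first term by $(H/\multistep)\epsilon_A$ and to show the second term is non-positive.

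\emph{Step 1 (approximate optimism).} I would prove by backward induction on $n$ that $\hat V^*_{1+n\multistep}(s) \geq V^*_{1+n\multistep}(s) - (H/\multistep - n)\epsilon_A$ for all $s \in \mathcal S$. The base case $n = H/\multistep$ is immediate. For the inductive step, observe that at termination of the inner loop Algorithm~\ref{alg: backward induction dp abstractions} produces $\hat V^*_{1+n\multistep}(\phi_{1+n\multistep}(s)) = \min_{s' \in \Phi_{1+n\multistep}(s)} (T^{\multistep} \hat V^*_{1+(n+1)\multistep})(s')$ (the initialization $H - n\multistep$ is never binding since $r \in [0,1]$). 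Using the identity $T^{\multistep}(V - c) = T^{\multistep} V - c$ for constant $c$, the monotonicity of $T^{\multistep}$, and the inductive hypothesis, each $(T^{\multistep} \hat V^*_{1+(n+1)\multistep})(s') \geq V^*_{1+n\multistep}(s') - (H/\multistep - n - 1)\epsilon_A$. Taking the min over $\Phi_{1+n\multistep}(s)$ and absorbing one extra $\epsilon_A$ via Assumption~\ref{assumption: model abstraction} applied to $V^*$ closes the induction. Specializing to $n = 0$ yields $\norm{V^*_1 - \hat V^*_1}_\infty \leq (H/\multistep)\epsilon_A$.

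\emph{Step 2 (pessimism along $\pi^*_A$).} I would prove by backward induction that $\hat V^*_{1+n\multistep}(\phi_{1+n\multistep}(s)) \leq V^{\pi^*_A}_{1+n\multistep}(s)$. The base case is trivial. By definition, the $\multistep$-greedy policy $\pi^*_A$ at state $s$ achieves
\begin{align*}
(T^{\multistep} \hat V^*_{1+(n+1)\multistep})(s) = \E\brs*{\sum_{t=0}^{\multistep-1} r(s_t, \pi^*_{A,t}(s)) + \hat V^*_{1+(n+1)\multistep}(\phi(s_\multistep)) \,\Big|\, s_0 = s}.
\end{align*}
Substituting $\hat V^*_{1+(n+1)\multistep}(\phi(s_\multistep)) \leq V^{\pi^*_A}_{1+(n+1)\multistep}(s_\multistep)$ from the hypothesis into the expansion of $V^{\pi^*_A}_{1+n\multistep}(s)$ gives $V^{\pi^*_A}_{1+n\multistep}(s) \geq (T^{\multistep} \hat V^*_{1+(n+1)\multistep})(s)$. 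Since $s \in \Phi_{1+n\multistep}(s)$, the min that defines $\hat V^*_{1+n\multistep}(\phi(s))$ is taken over a set that includes $s$ itself, so $\hat V^*_{1+n\multistep}(\phi(s)) \leq (T^{\multistep} \hat V^*_{1+(n+1)\multistep})(s)$, completing the step.

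\emph{Main obstacle.} The delicate point is Step 1: the min over equivalence classes in the update could a priori shrink $\hat V^*$ by an uncontrolled amount per stage, and it must be matched exactly by the slack that Assumption~\ref{assumption: model abstraction} provides. Phrasing the invariant as a gap between $\hat V^*$ and $V^*$ (rather than between $\hat V^*$ and some abstract optimal value) is what makes the assumption usable in the inductive step; the remaining work is a clean telescoping across the $H/\multistep$ stages.
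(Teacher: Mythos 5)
Your proof is correct and follows essentially the same route as the paper's: the same decomposition through $\hat V^*_1\circ\phi_1$, a per-stage accumulation of $\epsilon_A$ for the gap to $V^*$ (the paper's claim (i), phrased there via $\max_{\bar s\in\Phi_1(s)}V^*_1(\bar s)$ rather than $V^*_1(s)$ directly), and the pessimism induction $\hat V^*_{1+n\multistep}\circ\phi_{1+n\multistep}\le V^{\pi^*_A}_{1+n\multistep}$ (the paper's claim (ii)). If anything, you are slightly more explicit than the paper about the $\min$ over equivalence classes produced by the inner loop and about needing only the one-sided bound in Step 1.
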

\begin{proof}
We establish two claims which are of similarily to the two claims of Lemma~\ref{lemma: approximate model bound for planning} and Proposition~\ref{prop: approximate value updates}. Combining the two we prove the result.

\paragraph{{\em (i)}} The following relations hold for any $s\in\mathcal{S}$.
\begin{align}
    &\hat{V}_{1+n\multistep}^{*}(\phi_{1+n\multistep}(s)) - V^{*}_{1+n\multistep}(s) \nonumber\\
    &= T^{\multistep}\hat{V}_{\phi,1+(n+1)\multistep}^{*}(s)- T^{\multistep} V^{*}_{1+(1+n)\multistep}(s)  \nonumber\\
    &= \max_{a_0,..,a_{\multistep-1}}\E\brs*{ \sum_{t'=0}^{\multistep-1}r(s_{t'},a_{t'}(s_{t'})) + \hat{V}_{1+(n+1)\multistep}^*(\phi_{1+(n+1)\multistep}(s_\multistep))\mid s_0=s }\nonumber\\
    &\quad - \max_{a_0,..,a_{\multistep-1}}\E\brs*{ \sum_{t'=0}^{\multistep-1}r(s_{t'},a_{t'}(s_{t'})) + V_{1+(n+1)\multistep}^*(s_\multistep)\mid s_0=s}\label{eq: abstractions first relation}
\end{align}
The second and third relation holds by the updating rule of Algorithm~\ref{alg: backward induction dp abstractions}. Let $\brc*{\hat a_0,\hat a_1,..,\hat a_{\multistep-1}}$ be the set of policies which maximizes the first term. Then, by plugging this sequence to the second term we necessarily decrease it, and the following holds.
\begin{align}
    &\eqref{eq: abstractions first relation} \leq \E\brs*{ \sum_{t'=0}^{\multistep-1}r(s_{t'},a_{t'}(s_{t'})) + \hat{V}_{1+(n+1)\multistep}^*(\phi_{1+(n+1)\multistep}(s_\multistep))\mid s_0=s, \brc*{ a_{t'}}_{t'=0}^{\multistep-1} = \brc*{ \hat a_{t'}}_{t'=0}^{\multistep-1} } \nonumber\\
    &\quad - \E\brs*{ \sum_{t'=0}^{\multistep-1}r(s_{t'},a_{t'}(s_{t'})) + V_{1+(n+1)\multistep}^*(s_\multistep)\mid s_0=s, \brc*{ a_{t'}}_{t'=0}^{\multistep-1} = \brc*{ \hat a_{t'}}_{t'=0}^{\multistep-1} } \nonumber\\
    &=  \E\brs*{ \hat{V}_{1+(n+1)\multistep}^*(\phi_{1+(n+1)\multistep}(s_\multistep)) -  V_{1+(n+1)\multistep}^*(s_\multistep)\mid s_0=s,\brc*{ a_{t'}}_{t'=0}^{\multistep-1} = \brc*{ \hat a_{t'}}_{t'=0}^{\multistep-1} }\label{eq: abstractions second relation}
\end{align}
Where the second relation holds by linearity of expectation. By Assumption~\ref{assumption: model abstraction} the following inequality holds,
\begin{align}
     \eqref{eq: abstractions second relation}&\leq  \E\brs*{ \hat{V}_{1+(n+1)\multistep}^*(\phi_{1+(n+1)\multistep}(s_\multistep)) -  \max_{\bar s_h\in \Phi_{1+(n+1)\multistep}(s_h)}V_{1+(n+1)\multistep}^*(\bar s_h) +\epsilon_A \mid s_0=s,\brc*{ a_{t'}}_{t'=0}^{\multistep-1} = \brc*{ \hat a_{t'}}_{t'=0}^{\multistep-1} }\nonumber\\
     &=  \epsilon_A + \E\brs*{ \hat{V}_{1+(n+1)\multistep}^*(\phi_{1+(n+1)\multistep}(s_\multistep)) -  \max_{\bar s_h\in \Phi_{1+(n+1)\multistep}(s_h)}V_{1+(n+1)\multistep}^*(\bar s_\multistep)  \mid s_0=s,\brc*{ a_{t'}}_{t'=0}^{\multistep-1} = \brc*{ \hat a_{t'}}_{t'=0}^{\multistep-1} } \nonumber\\
     &\leq  \epsilon_A + \max_s\left|  \hat{V}_{1+(n+1)\multistep}^*(\phi_{1+(n+1)\multistep}(s)) -  \max_{\bar s\in \Phi_{1+(n+1)\multistep}(s)}V_{1+(n+1)\multistep}^*(\bar s) \right| \label{eq: upper bound abstraction first}
\end{align}
By choosing the sequence of polices which maximizes the second term in~\eqref{eq: abstractions first relation} and repeating similar arguments to the above we arrive to the following relations.
\begin{align}
     &\eqref{eq: abstractions first relation} \geq \E\brs*{ \hat{V}_{1+(n+1)\multistep}^*(\phi_{1+(n+1)\multistep}(s_\multistep)) -  V_{1+(n+1)\multistep}^*(s_\multistep)\mid s_0=s,\brc*{ a_{t'}}_{t'=0}^{\multistep-1} = \brc*{ \hat a_{t'}}_{t'=0}^{\multistep-1} }\nonumber\\
     &\geq  \E\brs*{ \hat{V}_{1+(n+1)\multistep}^*(\phi_{1+(n+1)\multistep}(s_\multistep)) -  \max_{\bar s_h\in \Phi_{1+(n+1)\multistep}(s_h)}V_{1+(n+1)\multistep}^*(\bar s_\multistep)\mid s_0=s,\brc*{ a_{t'}}_{t'=0}^{\multistep-1} = \brc*{ \hat a_{t'}}_{t'=0}^{\multistep-1} }\nonumber\\
     &\geq -\E\brs*{ \left|\hat{V}_{1+(n+1)\multistep}^*(\phi_{1+(n+1)\multistep}(s_\multistep)) -  \max_{\bar s_h\in \phi_{1+(n+1)\multistep}^{-1}(s_h)}V_{1+(n+1)\multistep}^*(\bar s_\multistep)\right| \mid s_0=s,\brc*{ a_{t'}}_{t'=0}^{\multistep-1} = \brc*{ \hat a_{t'}}_{t'=0}^{\multistep-1} }\nonumber\\
     &\geq -\max_{s}\left|\hat{V}_{1+(n+1)\multistep}^*(\phi_{1+(n+1)\multistep}(s)) -  \max_{\bar{s}\in \Phi_{1+(n+1)\multistep}(s)}V_{1+(n+1)\multistep}^*(\bar{s})\right|  \label{eq: lower bound abstraction first}
\end{align}

Let $\Delta_{\phi,1+n\multistep}(s) \eqdef \hat{V}_{1+n\multistep}^{*}(\phi_{1+n\multistep}(s)) -  \max_{\bar s \in \Phi_{1+n\multistep}(s)}V^{*}_{1+n\multistep}(\bar s)$. The following upper bound holds,
\begin{align*}
    &\Delta_{\phi,1+n\multistep}(s) \eqdef \hat{V}_{1+n\multistep}^{*}(\phi_{1+n\multistep}(s)) -  \max_{\bar s \in \phi_{1+n\multistep}^{-1}(s)}V^{*}_{1+n\multistep}(\bar s)\\
    &\leq   \hat{V}_{1+n\multistep}^{*}(\phi_{1+n\multistep}(s)) -  V^{*}_{1+n\multistep}(s)\\
    &\leq \max_{s}|\hat{V}_{1+(n+1)\multistep}^*(\phi_{1+(n+1)\multistep}(s)) -  \max_{\bar s\in \Phi_{1+(n+1)\multistep}(s)}V_{1+(n+1)\multistep}^*(\bar s)| + \epsilon_A \\
    &= \norm{\Delta_{\phi,1+(n+1)\multistep}}_\infty + \epsilon_A.
\end{align*}
where the third relation is by \eqref{eq: upper bound abstraction first}. Furthermore, the following lower bounds holds,
\begin{align*}
    &\Delta_{\phi,1+n\multistep}(s) \eqdef \hat{V}_{1+n\multistep}^{*}(\phi_{1+n\multistep}(s)) -  \max_{\bar s \in \Phi_{1+n\multistep}(s)}V^{*}_{1+n\multistep}(\bar s)\\
    &\geq \hat{V}_{1+n\multistep}^{*}(\phi_{1+n\multistep}(s)) -  V^{*}_{1+n\multistep}(s) - \epsilon_A\\
    &\geq -\max_{s}\left|\hat{V}_{1+(n+1)\multistep}^*(\phi_{1+(n+1)\multistep}(s)) -  \max_{\bar{s}\in \Phi_{1+(n+1)\multistep}(s)}V_{1+(n+1)\multistep}^*(\bar{s})\right| - \epsilon_A \\
    &= - \norm{\Delta_{\phi,1+(n+1)\multistep}}_\infty - \epsilon_A,
\end{align*}
where the second relation holds by Assumption~\ref{assumption: model abstraction} and the third by~\eqref{eq: lower bound abstraction first}. 

By the upper and lower bounds on $\Delta_{\phi,1+n\multistep}(s)$ which holds for all $s$ we conclude that 
\begin{align*}
    \norm{\Delta_{\phi,1+n\multistep}}_\infty \leq \norm{\Delta_{\phi,1+(n+1)\multistep}}_\infty + \epsilon_A.
\end{align*}

Using $\norm{\Delta_{\phi,H+1}}_\infty = 0$ we solve the recursion and conclude that
\begin{align}
    \norm{\Delta_{\phi,1}}_\infty \leq \frac{H}{\multistep}\epsilon_A. \label{eq: ado abstractions first claim}
\end{align}

\paragraph{{\em (ii)}}
The following relations hold based on similar arguments as in~\eqref{eq supp: approximate value eq 1}. Let ${\Delta}^{\pi_{A}^*}_{1+n\multistep} \eqdef \max_{s} \hat{V}_{1+n\multistep}^{*}(\phi_{1+n\multistep}(s)) - V^{\pi_{A}^*}_{1+n\multistep}(s)$. For all $s$ the following relations hold.
\begin{align}
    &\hat{V}_{1+n\multistep}^{*}(\phi(s)) - V^{\pi_{A}^*}_{1+n\multistep}(s) \nonumber \\
    &\leq \max_{a_0,..,a_{\multistep-1}}\E\brs*{ \sum_{t'=0}^{\multistep-1}r(s_{t'},a_{t'}(s_{t'})) + \hat{V}_{1+(n+1)\multistep}^*(\phi(s_\multistep))\mid s_0=s} \nonumber \\
    &\quad - \E^{\pi_{A}^*}\brs*{ \sum_{t'=0}^{\multistep-1}r(s_{t'},a_{t'}(s_{t'})) + V^{\pi_{A}^*}_{1+(n+1)\multistep}(s_\multistep)\mid s_0=s}, \label{eq supp: abstractions second claim 1}
\end{align}
the first relation holds by the updating rule which update by the~(see Algorithm~\ref{alg: backward induction dp abstractions}), and since $V_t^{\pi} = (T^\pi)^\multistep V_{t+\multistep}^{\pi}$, similarly to the optimal Bellman operator~\eqref{eq:multistep bellman}.

By definition, the sequence which maximizes the first term is $\pi_A^*$ as it is the $\multistep$-greedy policy w.r.t. $\hat{V}^*$. Using the linearity of expectation we get 
\begin{align}
    \eqref{eq supp: abstractions second claim 1} &= \E^{\pi_{A}^*}\brs*{  \hat{V}_{1+(n+1)\multistep}^*(\phi(s_\multistep))- V^{\pi_{A}^*}_{1+(n+1)\multistep}(s_\multistep)\mid s_0=s} \nonumber \\
    &\leq  \max_{s} \hat{V}_{1+(n+1)\multistep}^*(\phi(s))- V^{\pi_{A}^*}_{1+(n+1)\multistep}(s) \eqdef {\Delta}^{\pi_{A}^*}_{1+(n+1)\multistep}.\label{eq supp: abstractions second claim 1 additional claim}
\end{align}

Since~\eqref{eq supp: abstractions second claim 1 additional claim} for all $s$ it also holds for the maximum, i.e.,
\begin{align*}
    {\Delta}^{\pi_{A}^*}_{1+n\multistep} \eqdef \max_s \hat{V}_{1+n\multistep}^{*}(\phi(s)) - V^{\pi_{A}^*}_{1+n\multistep}(s) \leq {\Delta}^{\pi_{A}^*}_{1+(n+1)\multistep}.
\end{align*}

As ${\Delta}^{\pi_{A}^*}_{H+1}=0$ and iterating on the above recursion we get,
\begin{align}
    {\Delta}^{\pi_{A}^*}_{1}\leq 0 \label{eq: ado abstractions second claim}.
\end{align}

We are now ready to prove the proposition. For any $s$ the following holds,
\begin{align*}
    V_1^*(s) - V_1^{\pi_{A}^*}(s) = \underbrace{V_1^*(s)  - \hat{V}_1(\phi(s))}_{(A)} + \underbrace{\hat{V}_1(\phi(s))- V_1^{\pi_{A}^*}(s)}_{B}.
\end{align*}

By \eqref{eq: ado abstractions first claim}
\begin{align*}
    (A) &\leq \max_{\bar s \in \Phi_1(s)}V_1^*(\bar s)  - \hat{V}_1(\phi_1(s))\\
    &\eqdef - \Delta_{\phi,1}(s)\leq \norm{\Delta_{\phi,1}}_\infty \leq \frac{H}{\multistep}\epsilon_A.
\end{align*}

By \eqref{eq: ado abstractions second claim},
\begin{align*}
    \hat{V}_1(\phi_1(s))- V_1^{\pi_{A}^*}(s) \leq \max_{\bar s} \br*{\hat{V}_1(\phi_1(\bar s))- V_1^{\pi_{A}^*}(\bar s)} = {\Delta}^{\pi_{A}^*}_{1} \leq 0.
\end{align*}

Lastly, combining the above and using $V^*\geq V^\pi$, we get that for all $s$
\begin{align*}
    &0\leq V_1^*(s) - V_1^{\pi_{A}^*}(s) \leq \frac{H}{\multistep}\epsilon_A.\\
    &\rightarrow  \norm{V_1^* - V_1^{\pi_{A}^*}}_{\infty} \leq \frac{H}{\multistep}\epsilon_A.
\end{align*}

\end{proof}

\end{document}